\documentclass{article}

\usepackage{microtype}
\usepackage{graphicx}
\usepackage{subcaption}
\usepackage{booktabs} 

\usepackage{hyperref}

\usepackage[dvipsnames]{xcolor}

\definecolor{sensecolor}{RGB}{41,128,185}   
\definecolor{plancolor}{RGB}{211,84,0}      
\definecolor{ctrlcolor}{RGB}{39,174,96}     
\newcommand{\SEc}{\textcolor{sensecolor}{\scriptsize\bfseries [SENSE]}}
\newcommand{\PLc}{\textcolor{plancolor}{\scriptsize\bfseries [PLAN]}}
\newcommand{\EXc}{\textcolor{ctrlcolor}{\scriptsize\bfseries [CTRL]}}
\usepackage{natbib}
\usepackage{mathtools}
\usepackage{url}
\usepackage{amsmath,amssymb,amsfonts,amsthm}
\usepackage{algorithm}
\usepackage{algorithmic}
\usepackage{colortbl}  
\usepackage{enumitem}
\usepackage{tikz}
\usepackage{pgfplots}
\usepackage{multirow}
\usepackage{float}      
\usepackage{placeins}   
\usepackage[toc,page,header]{appendix}
\usepackage{titletoc}  
\usepgfplotslibrary{groupplots}
\usetikzlibrary{backgrounds}
\pgfplotsset{compat=1.18}
\usetikzlibrary{arrows.meta,positioning,shapes.geometric,fit,backgrounds,patterns,calc}
\renewcommand{\algorithmiccomment}[1]{\hfill\textcolor{gray}{\textit{// #1}}}

\definecolor{task1}{RGB}{66,133,244}   
\definecolor{task2}{RGB}{234,67,53}    
\definecolor{task3}{RGB}{52,168,83}    
\definecolor{task4}{RGB}{251,188,5}    
\definecolor{conflictred}{RGB}{220,53,69}
\definecolor{compromisegray}{RGB}{140,140,140}
\definecolor{successgreen}{RGB}{40,167,69}

\usepackage[most]{tcolorbox}
\definecolor{takeawaybg}{RGB}{237,245,253}   
\definecolor{takeawayframe}{RGB}{41,128,185} 
\definecolor{ideabg}{RGB}{239,247,239}       
\definecolor{ideaframe}{RGB}{39,135,96}      
\newtcolorbox{takeaway}[1][Key Takeaway]{
  enhanced, breakable, boxrule=0.6pt, arc=2pt, left=6pt, right=6pt, top=4pt, bottom=4pt,
  colback=takeawaybg, colframe=takeawayframe,
  fonttitle=\bfseries\footnotesize, title={#1}, coltitle=takeawayframe,
  attach title to upper={\ \ }, fontupper=\footnotesize}
\newtcolorbox{keyidea}{
  enhanced, breakable, boxrule=0.6pt, arc=2pt, left=6pt, right=6pt, top=4pt, bottom=4pt,
  colback=ideabg, colframe=ideaframe, fontupper=\small}
\newcommand{\hdr}{\rowcolor{gray!15}}

\usepackage[accepted]{icml2026}

\usepackage[capitalize,noabbrev]{cleveref}

\theoremstyle{plain}
\newtheorem{theorem}{Theorem}[section]
\newtheorem{proposition}[theorem]{Proposition}
\newtheorem{claim}[theorem]{Claim}
\newtheorem{lemma}[theorem]{Lemma}
\newtheorem{corollary}[theorem]{Corollary}
\theoremstyle{definition}
\newtheorem{definition}[theorem]{Definition}
\newtheorem{assumption}[theorem]{Assumption}
\theoremstyle{remark}
\newtheorem{remark}[theorem]{Remark}

\usepackage[disable,textsize=tiny]{todonotes}

\newcommand{\modelname}{\texttt{ControlG}}
\newcommand{\R}{\mathbb{R}}
\newcommand{\E}{\mathbb{E}}
\newcommand{\calL}{\mathcal{L}}

\newcommand{\bbone}{\mathbb{1}}
\DeclareMathOperator{\softmax}{softmax}
\DeclareMathOperator{\clip}{clip}

\icmltitlerunning{Feedback Control for Multi-Objective Graph Self-Supervision}

\begin{document}

\twocolumn[
  \icmltitle{Feedback Control for Multi-Objective Graph Self-Supervision}



  \icmlsetsymbol{equal}{*}

  \begin{icmlauthorlist}
    \icmlauthor{Karish Grover}{Carnegie Mellon University,aa}
    \icmlauthor{Theodore Vasiloudis}{Amazon}
    \icmlauthor{Han Xie}{Amazon}
    \icmlauthor{Sixing Lu}{Amazon}
    \icmlauthor{Xiang Song}{Amazon}
    \icmlauthor{Christos Faloutsos}{Carnegie Mellon University}
  \end{icmlauthorlist}

  \icmlaffiliation{Carnegie Mellon University}{Carnegie Mellon University}
  \icmlaffiliation{Amazon}{Amazon}
  \icmlaffiliation{aa}{The work was done during Karish Grover's internship at Amazon, US.}

  \icmlcorrespondingauthor{Karish Grover}{karishg@cs.cmu.edu}

  \icmlkeywords{Machine Learning, ICML, Graph Self-supervised Learning, Control Theory}

  \vskip 0.3in
]



\printAffiliationsAndNotice{}  

\begin{abstract}
    Can multi-task self-supervised learning on graphs be coordinated without the usual tug-of-war between objectives?
    Graph self-supervised learning (SSL) offers a growing toolbox of pretext objectives---mutual information, reconstruction, contrastive learning---yet combining them reliably remains a challenge due to objective interference and training instability.
    Most multi-pretext pipelines use \emph{per-update mixing}, forcing every parameter update to be a compromise, leading to three failure modes: \textit{Disagreement} (conflict-induced negative transfer), \textit{Drift} (nonstationary objective utility), and \textit{Drought} (hidden starvation of underserved objectives).
    We argue that coordination is fundamentally a \textit{temporal allocation} problem: deciding \emph{when} each objective receives optimization budget, not merely how to weigh them.
    We introduce \modelname, a control-theoretic framework that recasts multi-objective graph SSL as \textit{feedback-controlled temporal allocation} by estimating per-objective difficulty and pairwise antagonism, planning target budgets via a Pareto-aware log-hypervolume planner, and scheduling with a Proportional--Integral--Derivative (PID) controller.
    Across 9 datasets, \modelname\ consistently outperforms state-of-the-art baselines, while producing an auditable schedule that reveals \textit{which objectives drove learning}.
  \end{abstract}
  
\section{Introduction}
\label{sec:intro}

\begin{figure}[!t]
  \centering
  \begin{subfigure}[t]{\linewidth}
  \centering
  \resizebox{0.9\linewidth}{!}{%
    \begin{tikzpicture}[>=Stealth, x=1cm, y=1cm]
      
      \begin{scope}[shift={(0,0)}]
        \node[font=\small\bfseries, anchor=south] at (1.5,3.1) {Per-step Mixing};
        \node[font=\tiny, gray, anchor=north] at (1.5,3.05) {(Objective Blending)};
        
        \fill[gray!8] (1.5,1.2) circle (1.4);
        \fill[gray!15] (1.5,1.2) circle (1.0);
        \draw[gray!40, dashed] (1.5,1.2) circle (1.0);
        
        \draw[->, task1, line width=2.5pt, shorten >=3pt] (0.2,2.3) -- (1.2,1.5);
        \node[task1, font=\scriptsize\bfseries, anchor=south east] at (0.3,2.35) {$\mathcal{L}_1$};
        
        \draw[->, task2, line width=2.5pt, shorten >=3pt] (2.8,2.3) -- (1.8,1.5);
        \node[task2, font=\scriptsize\bfseries, anchor=south west] at (2.7,2.35) {$\mathcal{L}_2$};
        
        \draw[->, task3, line width=2.5pt, shorten >=3pt] (0.3,0.1) -- (1.1,0.9);
        \node[task3, font=\scriptsize\bfseries, anchor=north east] at (0.35,0.05) {$\mathcal{L}_3$};
        
        \draw[->, task4, line width=2.5pt, shorten >=3pt] (2.7,0.1) -- (1.9,0.9);
        \node[task4, font=\scriptsize\bfseries, anchor=north west] at (2.65,0.05) {$\mathcal{L}_4$};
        
        \draw[->, compromisegray, line width=3pt] (1.5,1.2) -- (1.5,0.2);
        \node[compromisegray, font=\tiny\bfseries, anchor=west] at (2.4,0.65) {Compromise};

        \node[font=\tiny, align=center, text width=3cm] at (1.5,-0.5) {Conflicting objectives\\$\rightarrow$ suboptimal for all};
      \end{scope}
      
      \draw[->, line width=1.5pt, black!60] (3.4,1.2) -- (4.1,1.2);
      \node[font=\tiny, black!70, anchor=south] at (3.75,1.3) {\modelname};
      
      \begin{scope}[shift={(4.5,0)}]
        \node[font=\small\bfseries, anchor=south] at (1.6,3.1) {Temporal Allocation};
        \node[font=\tiny, gray, anchor=north] at (1.6,3.05) {(Block Scheduling)};
        
        \draw[->, thick, black!50] (0,0) -- (3.3,0);
        \node[font=\tiny, black!60, anchor=west] at (3.35,0) {Time};
        
        \fill[task1, rounded corners=3pt] (0.1,0.2) rectangle (0.7,1.8);
        \node[white, font=\scriptsize\bfseries, rotate=90] at (0.4,1.0) {$\mathcal{T}_1$};
        \draw[->, task1, line width=2pt] (0.4,1.9) -- (0.4,2.4);
        
        \fill[task2, rounded corners=3pt] (0.85,0.2) rectangle (1.45,1.8);
        \node[white, font=\scriptsize\bfseries, rotate=90] at (1.15,1.0) {$\mathcal{T}_2$};
        \draw[->, task2, line width=2pt] (1.15,1.9) -- (1.15,2.4);
        
        \fill[task3, rounded corners=3pt] (1.6,0.2) rectangle (2.2,1.8);
        \node[white, font=\scriptsize\bfseries, rotate=90] at (1.9,1.0) {$\mathcal{T}_3$};
        \draw[->, task3, line width=2pt] (1.9,1.9) -- (1.9,2.4);
        
        \fill[task1, rounded corners=3pt] (2.35,0.2) rectangle (2.95,1.8);
        \node[white, font=\scriptsize\bfseries, rotate=90] at (2.65,1.0) {$\mathcal{T}_1$};
        \draw[->, task1, line width=2pt] (2.65,1.9) -- (2.65,2.4);

        \node[font=\tiny, align=center, text width=3.2cm] at (1.55,-0.5) {One objective per block\\$\rightarrow$ reduced interference};
      \end{scope}
      
    \end{tikzpicture}}%
  \caption{\textbf{The core insight.} Per-step gradient mixing forces a compromise when objectives conflict. \modelname{} separates objectives across time, eliminating per-step interference by design.}
  \label{fig:intro:insight}
  \end{subfigure}\\[4pt]
  \begin{subfigure}[t]{\linewidth}
  \centering
  \includegraphics[width=0.92\linewidth]{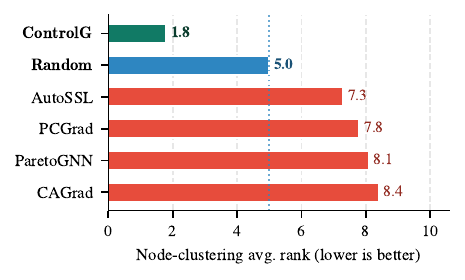}
  \caption{\textbf{Temporal separation is a strong inductive bias.} On node clustering, a \emph{uniformly random} single-task scheduler (\textcolor[RGB]{46,134,193}{blue}) already outranks every per-step mixing method (\textcolor[RGB]{231,76,60}{red}), and \modelname{} (\textcolor[RGB]{17,122,101}{teal}) widens the gap further still.}
  \label{fig:random_motivation}
  \end{subfigure}
  \caption{\textbf{Motivation for} \modelname. \textbf{(a)}~Per-step gradient mixing forces an instantaneous compromise among conflicting objectives; temporal allocation gives each its own blocks. \textbf{(b)}~Empirically, even unplanned temporal separation beats sophisticated per-step mixing, motivating feedback-controlled scheduling.}
  \label{fig:intro}
\end{figure}

Self-supervised pretraining has become a default route to transferable representations when labels are scarce, and graphs are no exception. Graph SSL has produced a diverse toolbox of objectives encoding different inductive biases: mutual-information maximization~\citep{velivckovic2018deep}, contrastive learning under augmentations~\citep{you2020graph,zhu2020deep}, negative-free bootstrapping~\citep{thakoor2021large}, and masked reconstruction~\citep{hou2022graphmae}. Yet \emph{which} objective works best depends heavily on the dataset, downstream task, and training stage---motivating multi-pretext approaches that combine or search over objectives~\citep{jin2021automated}.

A natural response is to train with multiple pretext objectives jointly, hoping that the encoder absorbs a richer set of signals. Many multi-pretext and multi-task recipes operationalize this by optimizing a weighted sum of losses or by combining gradients into a single update direction at every step. In graph SSL specifically, ParetoGNN~\citep{ju2022multi} uses multi-gradient descent (MGDA)~\citep{sener2018multi} to reconcile multiple pretext tasks, and WAS~\citep{fan2024decoupling} proposes an instance-level framework that decouples selecting and weighing tasks. In general multi-task learning, a large body of work designs per-step scalarization rules using MGDA or mitigates interference through gradient manipulation ~\citep{yu2020gradient,liu2021conflict}. These approaches are effective in many settings, but they still share a core actuation pattern: each training step blends signals across objectives in parameter space, implicitly assuming that an appropriate instantaneous compromise can be found and maintained.

\begin{figure}[t]
  \centering
  \resizebox{\linewidth}{!}{%
    \begin{tikzpicture}[x=0.28cm, y=0.55cm]
      \def\bw{0.95}  
      \def\bh{0.85}  
      \def\gap{0.05} 
      
      \node[anchor=east, font=\tiny\bfseries, task1] at (-0.3, 3.5) {Link Pred};
      \node[anchor=east, font=\tiny\bfseries, task2] at (-0.3, 2.5) {MI};
      \node[anchor=east, font=\tiny\bfseries, task3] at (-0.3, 1.5) {Recon};
      \node[anchor=east, font=\tiny\bfseries, task4] at (-0.3, 0.5) {Contrast};
      
      \node[anchor=south, rotate=90, font=\tiny] at (-3.0, 2) {Objectives};
      
      \draw[->, thick, black!50] (0, -0.4) -- (22, -0.4);
      \node[font=\tiny, black!60, anchor=west] at (22.2, -0.4) {Blocks};
      
      \foreach \e/\x in {1/0, 2/7, 3/14} {
        \draw[black!30, dashed] (\x, -0.2) -- (\x, 4.0);
      }
      \node[font=\tiny, black!50] at (3.5, -0.75) {Epoch 1};
      \node[font=\tiny, black!50] at (10.5, -0.75) {Epoch 2};
      \node[font=\tiny, black!50] at (17.5, -0.75) {Epoch 3};
      
      \fill[task1, rounded corners=1pt] (0,3) rectangle +(\bw,\bh);
      \fill[task1, rounded corners=1pt] (3,3) rectangle +(\bw,\bh);
      \fill[task1, rounded corners=1pt] (6,3) rectangle +(\bw,\bh);
      \fill[task2, rounded corners=1pt] (1,2) rectangle +(\bw,\bh);
      \fill[task2, rounded corners=1pt] (4,2) rectangle +(\bw,\bh);
      \fill[task3, rounded corners=1pt] (2,1) rectangle +(\bw,\bh);
      \fill[task4, rounded corners=1pt] (5,0) rectangle +(\bw,\bh);
      
      \fill[task2, rounded corners=1pt] (7,2) rectangle +(\bw,\bh);
      \fill[task2, rounded corners=1pt] (8,2) rectangle +(\bw,\bh);
      \fill[task2, rounded corners=1pt] (10,2) rectangle +(\bw,\bh);
      \fill[task2, rounded corners=1pt] (11,2) rectangle +(\bw,\bh);
      \fill[task2, rounded corners=1pt] (13,2) rectangle +(\bw,\bh);
      \fill[task1, rounded corners=1pt] (9,3) rectangle +(\bw,\bh);
      \fill[task3, rounded corners=1pt] (12,1) rectangle +(\bw,\bh);
      
      \fill[task1, rounded corners=1pt] (14,3) rectangle +(\bw,\bh);
      \fill[task1, rounded corners=1pt] (15,3) rectangle +(\bw,\bh);
      \fill[task3, rounded corners=1pt] (16,1) rectangle +(\bw,\bh);
      \fill[task3, rounded corners=1pt] (17,1) rectangle +(\bw,\bh);
      \fill[task3, rounded corners=1pt] (18,1) rectangle +(\bw,\bh);
      \fill[task1, rounded corners=1pt] (19,3) rectangle +(\bw,\bh);
      \fill[task2, rounded corners=1pt] (20,2) rectangle +(\bw,\bh);
      
      \draw[black!60, ->, >=stealth, thick] (3, 5.2) -- (3, 4.5);
      \node[font=\tiny, align=center, text width=2cm, anchor=south] at (3, 5.3) {Balanced\\[-1pt]exploration};
      
      \draw[task2!80!black, ->, >=stealth, thick] (11, 5.2) -- (11, 3.1);
      \node[font=\tiny, align=center, text width=2.2cm, anchor=south, task2!80!black] at (11, 5.3) {Interference\\[-1pt]$\rightarrow$ prioritize MI};
      
      \draw[task3!80!black, ->, >=stealth, thick] (17, 5.2) -- (17, 2.1);
      \node[font=\tiny, align=center, text width=2.2cm, anchor=south, task3!80!black] at (17, 5.3) {Recon lagging\\[-1pt]$\rightarrow$ burst};
      
      \foreach \i in {0,...,20} {
        \draw[black!10] (\i, 0) -- (\i, 4);
      }
      \foreach \j in {0,...,4} {
        \draw[black!10] (0, \j) -- (21, \j);
      }
      
    \end{tikzpicture}}%
  \caption{\textbf{Interpretable schedule.} \modelname{} produces an auditable allocation pattern: early training explores all objectives; mid-training prioritizes MI after detecting interference; late-training bursts on lagging reconstruction. The planner adapts allocation based on interference and demand signals.}
  \label{fig:intro:schedule}
\end{figure}

\paragraph{Research gaps in multi-objective graph SSL.}
When objectives are self-supervised pretext tasks, training exhibits three recurring failure modes.
\textbf{(a)}~\emph{Drift}: the utility of a pretext objective is nonstationary---an objective that accelerates learning early can become redundant later---so fixed or weakly adaptive weights lag behind regime changes~\citep{guo2018dynamic}.
\textbf{(b)}~\emph{Disagreement}: objectives frequently induce conflicting gradient directions, forcing a compromise that can be simultaneously suboptimal for all~\citep{sener2018multi,yu2020gradient,liu2021conflict}.
\textbf{(c)}~\emph{Drought}: adaptive weighting can starve objectives by driving their weights toward zero, making it unclear whether an objective ever meaningfully shaped the representation.
Together, these issues make multi-pretext training brittle: current systems are largely \emph{open-loop}, specifying a mixing rule and hoping the dynamics remain well-behaved. Strikingly, the instantaneous compromise of per-step mixing is so harmful that even \emph{unplanned} temporal separation---a uniformly random single-task scheduler---outranks sophisticated per-step MTL methods on downstream clustering (Fig.~\ref{fig:random_motivation}), motivating our feedback-controlled approach.

\begin{keyidea}
\textbf{Key idea.} Instead of blending all self-supervised objectives at every step, train \emph{one objective at a time} and let feedback control decide which objective is active in each block.
\end{keyidea}

\modelname{} addresses these gaps by coordinating objectives through \emph{temporal allocation} under a fixed compute budget. At a slower sensing timescale, it performs \emph{state estimation on the full graph} to obtain two signals per objective: (i) a \emph{spectral demand} indicator that measures how rapidly the objective's learning signal varies over the graph and is motivated by the spectral/low-pass view of message passing~\citep{nt2019revisiting} (e.g., simplified GCN propagation corresponds to a fixed low-pass filter); and (ii) an \emph{interference} indicator derived from local multi-objective gradient geometry, where MGDA provides Pareto-relevant weights for which objectives are actively constraining the trade-off at the current iterate. Based on these state estimates, a \emph{Pareto-aware planner} produces a target allocation using log-hypervolume sensitivities, leveraging the hypervolume indicator's standard role as a Pareto-compliant scalarization~\citep{guerreiro2020hypervolume}. Finally, a \emph{tracking controller} converts the continuous allocation target into discrete task choices by tracking allocation deficits with a PID-style feedback law; this deficit perspective is closely related to classic deficit-counter scheduling ideas (e.g., deficit round robin~\citep{shreedhar1996efficient}), which ensure long-run fairness when service is indivisible. The end result is an interpretable, auditable training process: the planner explains \emph{why} an objective should receive budget (priority + state), and the controller ensures it actually receives that budget over time. We hereby summarize our contributions:
\begin{itemize}\itemsep0.25em
  \item We introduce \modelname, a closed-loop control framework for coordinating multiple graph SSL pretext objectives via sensing, Pareto-aware planning, and feedback-based execution. To the best of our knowledge, this is the first work to cast multi-objective graph SSL coordination as a \emph{closed-loop scheduling problem with explicit allocation tracking}, using deficit-based feedback control to execute discrete single-objective blocks rather than per-step loss weighting.
  \item We propose full-graph state estimation signals that quantify objective demand and objective interference, enabling principled adaptation to nonstationarity (drift) and gradient conflicts (disagreement) without collapsing all objectives into a single per-step compromise.
  \item We develop a Pareto-aware planner based on log-hypervolume sensitivities and a deficit-tracking PID controller that executes discrete single-task blocks while explicitly mitigating starvation (drought).
  \item We empirically evaluate \modelname{} against strong single-objective and multi-pretext baselines on standard graph SSL benchmarks (Sec.~\ref{sec:experiments}), showing improved robustness and transfer.
\end{itemize}

\paragraph{Conflict of Interest Disclosure.}
This work was carried out as academic research. Several authors are affiliated with Amazon, and the first author completed part of this work during an internship at Amazon; the remaining work was conducted at Carnegie Mellon University. \modelname{} is a research method introduced in this paper and is not a commercial product or deployed system; all baselines are publicly available academic methods, and no proprietary or commercially deployed system is evaluated. The authors declare no other financial conflicts of interest.
\section{Related Work}
\label{sec:related}

\paragraph{Graph SSL and multi-pretext integration.}
Graph SSL has produced diverse pretext objectives: contrastive methods like DGI~\citep{velivckovic2018deep}, GraphCL~\citep{you2020graph}, and GCA~\citep{zhu2021graph}; bootstrapping approaches like BGRL~\citep{thakoor2021large}; and reconstruction objectives like GraphMAE~\citep{hou2022graphmae}. Since no single objective dominates across graphs~\citep{liu2022graph,ju2024towards}, recent work integrates multiple objectives. \textsc{AutoSSL}~\citep{jin2021automated} searches task combinations via label-free surrogates but uses static weights once selected, limiting adaptivity to training dynamics. \textsc{ParetoGNN}~\citep{ju2022multi} applies multi-objective gradient updates for Pareto efficiency, yet per-step mixing can force compromise when objectives conflict and provides no explicit coverage accounting. \textsc{WAS}~\citep{fan2024decoupling} decouples selection and weighting at the instance level but still blends all objectives per update. GraphTCM~\citep{Fang2024ExploringCO} models task correlations, confirming they vary across datasets---motivating our adaptive, feedback-driven approach. Unlike these methods, \modelname{} coordinates via \emph{temporal allocation}: time-sharing single-task blocks with explicit deficit tracking, avoiding forced compromise while keeping coverage of every objective auditable across training.

\paragraph{Multi-objective optimization and MTL.}
Gradient-based MTL methods design per-step update rules: MGDA finds minimum-norm gradient combinations~\citep{sener2018multi}, PCGrad/CAGrad reduce interference~\citep{yu2020gradient,liu2021conflict}, and Nash-MTL uses bargaining~\citep{navon2022multi}. Imbalance-focused approaches include GradNorm~\citep{chen2018gradnorm} and uncertainty weighting~\citep{kendall2018multi}. These methods address conflict through instantaneous scalarization but still produce a single update direction, making it difficult to separate objectives in time or provide block-level coverage accountability. \modelname{} instead uses gradient geometry for \emph{sensing} (interference feedback) while actuating through time-sharing.

\paragraph{Task scheduling and curricula.}
Dynamic Task Prioritization~\citep{guo2018dynamic} and AdaTask~\citep{Laforgue2022AdaTaskAM} adapt \emph{when} to train tasks, but are not developed for graph SSL and lack Pareto-aware planning. Without explicit allocation tracking, schedules can still exhibit hidden starvation. \modelname{} combines a Pareto-aware planner with deficit-tracking control, yielding interpretable time-sharing with explicit coverage tracking and a probabilistic drought bound.
\section{Background and Preliminaries}
\label{sec:prelims}

We fix notation and recall the primitives \modelname{} builds on: multi-objective gradient geometry, spectral smoothness, hypervolume-based scalarization, and PID control.

\paragraph{Setup and notation.}
Let $G=(V,E,\mathbf{X})$ be an attributed graph with $n=|V|$ nodes, adjacency $\mathbf{A}\in\{0,1\}^{n\times n}$, degree matrix $\mathbf{D}=\mathrm{diag}(d_1,\ldots,d_n)$, and node features $\mathbf{X}\in\R^{n\times d}$. A graph encoder $f_\theta$ produces embeddings $\mathbf{Z}=f_\theta(G)\in\R^{n\times h}$. We consider $K$ pretext tasks $\{\mathcal{T}_k\}_{k=1}^K$ with losses $\{\calL_k(\theta)\}_{k=1}^K$, treating training as minimizing the vector objective $\boldsymbol{\calL}(\theta)=(\calL_1,\dots,\calL_K)\in\R^K$.

\paragraph{Pareto optimality and MGDA.}
A point $\theta$ \emph{Pareto-dominates} $\theta'$ if $\calL_k(\theta)\le \calL_k(\theta')$ for all $k$ with strict inequality for at least one $k$; we call $\theta$ \emph{Pareto-optimal} if no other point dominates it. \emph{Pareto stationarity}---a first-order necessary condition---holds when $\exists\,\lambda\in\Delta^{K}$ s.t.\ $\sum_{k} \lambda_k \nabla_\theta \calL_k=\mathbf{0}$, where $\Delta^K$ is the probability simplex. Intuitively, no descent direction can simultaneously decrease every objective at this point.

The multi-gradient descent algorithm (MGDA) finds the minimum-norm convex combination of gradients \citep{sener2018multi}:
\begin{equation}
\lambda^\star \in \arg\min_{\lambda\in\Delta^K}\Bigl\|\textstyle\sum_{k=1}^K \lambda_k g_k\Bigr\|_2^2, \quad g_k \coloneqq \nabla_\theta \calL_k.
\label{eq:mgda_qp}
\end{equation}
When $\sum_k \lambda^\star_k g_k \neq \mathbf{0}$, this yields a common descent direction that improves all objectives. We use MGDA weights $\lambda^\star$ as an interference signal: objectives with high weight are actively constraining the trade-off at the current iterate.

\paragraph{Spectral smoothness.}
We use the normalized Laplacian $\tilde{\mathbf{L}}:=\mathbf{I}-\mathbf{D}^{-1/2}\mathbf{A}\mathbf{D}^{-1/2}$ to quantify signal smoothness on the graph. For a signal $\mathbf{H}\in\R^{n\times d}$, the Dirichlet energy $\mathcal{E}(\mathbf{H}) = \mathrm{tr}(\mathbf{H}^\top \tilde{\mathbf{L}}\mathbf{H})$ measures variation across edges. Since this scales with $\|\mathbf{H}\|_F^2$, we use the Rayleigh quotient $\mathrm{RQ}(\mathbf{H})\coloneqq \mathcal{E}(\mathbf{H})/\|\mathbf{H}\|_F^2$, interpretable as average graph frequency \citep{von2007tutorial}. Because GNNs act as low-pass filters \citep{nt2019revisiting}, objectives inducing high-$\mathrm{RQ}$ signals are spectrally harder to optimize.

\paragraph{Dominated hypervolume.}
To scalarize multi-objective progress while remaining sensitive to trade-offs, we use the log-hypervolume indicator \citep{guerreiro2020hypervolume}. For a reference point $\mathbf{r}\in\R^K$ with $r_k>\calL_k$ (worse than current values), $\log\mathcal{H}=\sum_{k}\log(r_k-\calL_k)$ decomposes additively and is numerically stable. The gradient $\partial\log\mathcal{H}/\partial\calL_k = -1/(r_k-\calL_k)$ shows that proximity to the reference amplifies sensitivity, so the planner naturally allocates more budget to objectives that lag furthest behind.

\paragraph{PID control and deficit tracking.}
We track target allocations via discrete-time PID control. Let $e_k(m)=N_k^{\mathrm{ref}}(m)-N_k(m-1)$ be the \emph{pre-decision} deficit (target minus actual count \emph{before} block $m$ is assigned) for task $k$, where $N_k^{\mathrm{ref}}(m)=m f_k$. The controller output is:
\[
\nu_k(m)=K_P e_k(m) + K_I\textstyle\sum_{\tau\le m}e_k(\tau) + K_D\Delta e_k(m),
\]
where $\Delta e_k(m)=e_k(m)-e_k(m-1)$. We map $\boldsymbol{\nu}(m)$ to probabilities via softmax. This deficit-tracking view parallels deficit round robin scheduling \citep{shreedhar1996efficient}, which accumulates ``service credit'' to ensure long-run fairness when service is discrete.

\section{\textnormal{\modelname}: Overview}
\label{sec:method}

\begin{figure*}[t]
  \centering
  \definecolor{sensebg}{RGB}{235,245,251}    
  \definecolor{senseborder}{RGB}{41,128,185} 
  \definecolor{planbg}{RGB}{254,245,230}     
  \definecolor{planborder}{RGB}{211,84,0}    
  \definecolor{ctrlbg}{RGB}{232,246,238}     
  \definecolor{ctrlborder}{RGB}{39,174,96}   
  \definecolor{task1}{RGB}{66,133,244}       
  \definecolor{task2}{RGB}{234,67,53}        
  \definecolor{task3}{RGB}{52,168,83}        
  \definecolor{task4}{RGB}{251,188,5}        
  
  \resizebox{\textwidth}{!}{%
  \begin{tikzpicture}[>=Stealth,
      box/.style={draw, rounded corners=2pt, fill=white, minimum height=0.65cm, inner sep=3pt, align=center, font=\tiny, line width=0.6pt},
      arrow/.style={->, thick},
      thinarrow/.style={->, thin}
  ]
  
  \def\blockheight{6.2cm}
  \def\blocktop{4.2}
  
  \node[draw, rounded corners=6pt, fill=sensebg, draw=senseborder, line width=1.5pt,
        minimum width=6.0cm, minimum height=\blockheight, anchor=north west] (senseblock) at (0, \blocktop) {};
  \node[font=\small\bfseries, senseborder] at (3.0, 3.95) {1. SENSE};
  \node[font=\tiny, black!60] at (3.0, 3.65) {Full-Graph State Estimation (Timescale $u$)};
  
  \node[box, draw=gray, minimum width=0.9cm] (graphbox) at (0.7, 2.9) {Graph\\[-1pt]{\scriptsize $G,\mathbf{X}$}};
  \node[box, draw=senseborder, minimum width=0.9cm] (encoder) at (1.9, 2.9) {Encoder\\[-1pt]{\scriptsize $f_\theta$}};
  \node[box, draw=senseborder, minimum width=0.9cm] (embed) at (3.1, 2.9) {Embed\\[-1pt]{\scriptsize $\mathbf{Z}$}};
  \node[box, draw=senseborder, minimum width=1.0cm] (losses) at (4.6, 2.9) {Losses\\[-1pt]{\scriptsize $L_1,\ldots,L_K$}};
  
  \draw[thinarrow, gray] (graphbox.east) -- (encoder.west);
  \draw[thinarrow, senseborder] (encoder.east) -- (embed.west);
  \draw[thinarrow, senseborder] (embed.east) -- (losses.west);
  
  \begin{scope}[shift={(1.2, -0.7)}]
    \node[circle, fill=task1, minimum size=6pt, inner sep=0] (n1) at (-0.4, 0) {};
    \node[circle, fill=task1!60, minimum size=6pt, inner sep=0] (n2) at (0, 0.25) {};
    \node[circle, fill=task2, minimum size=6pt, inner sep=0] (n3) at (0.4, 0) {};
    \node[circle, fill=task3!70, minimum size=6pt, inner sep=0] (n4) at (-0.2, -0.25) {};
    \node[circle, fill=task2!50, minimum size=6pt, inner sep=0] (n5) at (0.2, -0.25) {};
    \draw[gray!50, line width=0.4pt] (n1) -- (n2) -- (n3);
    \draw[gray!50, line width=0.4pt] (n1) -- (n4) -- (n5) -- (n3);
    \draw[gray!50, line width=0.4pt] (n2) -- (n5);
  \end{scope}
  
  \begin{scope}[shift={(3.1, -0.7)}]
    \fill[gray!10] (0,0) circle (0.4);
    \draw[gray!40, dashed, line width=0.4pt] (0,0) circle (0.3);
    
    \draw[-{Stealth[length=4.5pt, width=3pt]}, task1, line width=1pt] (-0.42, 0.42) -- (-0.12, 0.12);
    \draw[-{Stealth[length=4.5pt, width=3pt]}, task2, line width=1pt] (0.42, 0.42) -- (0.12, 0.12);
    \draw[-{Stealth[length=4.5pt, width=3pt]}, task3, line width=1pt] (-0.42, -0.42) -- (-0.12, -0.12);
    \draw[-{Stealth[length=4.5pt, width=3pt]}, task4, line width=1pt] (0.42, -0.42) -- (0.12, -0.12);
  \end{scope}
  
  \begin{scope}[shift={(4.8, 1.3)}]
    \draw[->, black!30, line width=0.4pt] (-0.55, 0) -- (0.65, 0);
    \draw[black!30, line width=0.4pt] (-0.55, 0) -- (-0.55, 0.7);
    \node[font=\tiny, black!40, anchor=north] at (0.05, 0.02) {Tasks};
    \fill[task1] (-0.45, 0) rectangle (-0.25, 0.55);
    \fill[task2] (-0.2, 0) rectangle (0.0, 0.35);
    \fill[task3] (0.05, 0) rectangle (0.25, 0.65);
    \fill[task4] (0.3, 0) rectangle (0.5, 0.25);
  \end{scope}

  \node[box, draw=senseborder, minimum width=1.4cm] (specdem) at (1.2, 0.5) {Spectral Demand\\[-1pt]{\scriptsize $\mathrm{RQ}_k$}};
  \node[box, draw=senseborder, minimum width=1.4cm] (interf) at (3.1, 0.5) {Interference\\[-1pt]{\scriptsize $\mathrm{Conf}_k$}};
  \node[box, draw=senseborder, fill=sensebg, minimum width=1.3cm] (diffout) at (4.8, 0.5) {Difficulty\\[-1pt]{\scriptsize $D_k$}};
  
\draw[thinarrow, senseborder] (embed.south) -- ++(0,-0.4) -| (specdem.north);
\draw[thinarrow, senseborder] (embed.south) -- (interf.north);
\draw[thinarrow, senseborder] (specdem.south) -- ++(0,-0.45) -| ([xshift=-10pt]diffout.south);
\draw[thinarrow, senseborder] (interf.south) -- ++(0,-0.45) -| ([xshift=10pt]diffout.south);
  
  \node[draw, rounded corners=6pt, fill=planbg, draw=planborder, line width=1.5pt,
        minimum width=4.0cm, minimum height=\blockheight, anchor=north west] (planblock) at (6.6, \blocktop) {};
  \node[font=\small\bfseries, planborder] at (8.6, 3.95) {2. PLAN};
  \node[font=\tiny, black!60] at (8.6, 3.65) {Pareto-Aware Allocation (Timescale $t$)};
  
  \begin{scope}[shift={(7.8, 2.1)}]
    \draw[->, black!30, line width=0.4pt] (0, 0) -- (1.6, 0) node[right, font=\tiny] {$\mathcal{L}_1$};
    \draw[->, black!30, line width=0.4pt] (0, 0) -- (0, 1.0) node[above, font=\tiny] {$\mathcal{L}_2$};
    \draw[planborder, line width=1pt, smooth] plot coordinates {(0.2, 0.8) (0.4, 0.5) (0.6, 0.35) (0.9, 0.25) (1.3, 0.2)};
    \node[fill=black, circle, minimum size=3pt, inner sep=0] at (1.4, 0.9) {};
    \node[font=\tiny, anchor=west] at (1.45, 0.9) {$\mathbf{r}$};
    \node[fill=planborder, circle, minimum size=4pt, inner sep=0] at (0.6, 0.35) {};
    \fill[planborder, opacity=0.12] (0.6, 0.35) rectangle (1.4, 0.9);
  \end{scope}
  
  \node[box, draw=planborder, minimum width=2.2cm] (normloss) at (8.6, 1.5) {Norm.\ Losses $\tilde{L}_k$};
  \node[box, draw=planborder, minimum width=2.2cm] (loghv) at (8.6, 0.5) {Log-HV Sensitivity\\[-1pt]{\scriptsize $w_k^{\mathrm{HV}} = 1/(r_k - \tilde{L}_k)$}};
  \node[box, draw=planborder, minimum width=2.2cm] (diffaloc) at (8.6, -0.5) {Diff-Adjusted\\[-1pt]{\scriptsize $a_k \propto w_k^{\mathrm{HV}} / (1+\gamma D_k)$}};
  
  \begin{scope}[shift={(7.4, -1.35)}]
    \fill[task1] (0, 0) rectangle (0.9, 0.3);
    \fill[task2] (0.9, 0) rectangle (1.5, 0.3);
    \fill[task3] (1.5, 0) rectangle (2.0, 0.3);
    \fill[task4] (2.0, 0) rectangle (2.4, 0.3);
    \draw[planborder, line width=0.5pt] (0, 0) rectangle (2.4, 0.3);
    \node[font=\tiny, planborder, anchor=north] at (1.2, -0.05) {$f(t)\in\Delta^K$};
  \end{scope}
  
  \draw[thinarrow, planborder] (normloss.south) -- (loghv.north);
  \draw[thinarrow, planborder] (loghv.south) -- (diffaloc.north);
  \draw[thinarrow, planborder] (diffaloc.south) -- ++(0, -0.25);
  
  \node[draw, rounded corners=6pt, fill=ctrlbg, draw=ctrlborder, line width=1.5pt,
        minimum width=4.0cm, minimum height=\blockheight, anchor=north west] (ctrlblock) at (11.0, \blocktop) {};
  \node[font=\small\bfseries, ctrlborder] at (13.0, 3.95) {3. CONTROL};
  \node[font=\tiny, black!60] at (13.0, 3.65) {PID Tracking Execution (Timescale $m$)};
  
  \begin{scope}[shift={(11.6, 2.4)}]
    \draw[->, black!30, line width=0.4pt] (0, 0) -- (2.9, 0);
    \node[font=\tiny, black!40, anchor=north] at (1.45, -0.1) {Blocks};
    \fill[task1, rounded corners=1pt] (0.1, 0.12) rectangle (0.45, 0.5);
    \fill[task2, rounded corners=1pt] (0.5, 0.12) rectangle (0.85, 0.5);
    \fill[task1, rounded corners=1pt] (0.9, 0.12) rectangle (1.25, 0.5);
    \fill[task3, rounded corners=1pt] (1.3, 0.12) rectangle (1.65, 0.5);
    \fill[task2, rounded corners=1pt] (1.7, 0.12) rectangle (2.05, 0.5);
    \fill[task4, rounded corners=1pt] (2.1, 0.12) rectangle (2.45, 0.5);
    \fill[task3, rounded corners=1pt] (2.5, 0.12) rectangle (2.85, 0.5);
    \draw[ctrlborder, line width=0.8pt, ->, >=stealth] (2.67, 0.85) -- (2.67, 0.55);
    \node[font=\tiny, ctrlborder] at (2.67, 1.0) {$k_{t,m}$};
  \end{scope}
  
  \node[box, draw=ctrlborder, minimum width=2.2cm] (pidctrl) at (13.0, 1.5) {PID Controller\\[-1pt]{\scriptsize $\nu_k = Pe_k + I\sum e_k + D\Delta e_k$}};
  \node[box, draw=ctrlborder, minimum width=2.2cm] (sampler) at (13.0, 0.5) {Stochastic Sampler\\[-1pt]{\scriptsize $\softmax(\nu) + \epsilon$-greedy}};
  \node[box, draw=ctrlborder, fill=ctrlbg, minimum width=2.0cm] (taskchoice) at (13.0, -0.5) {Task Choice $k_{t,m}$};
  \node[box, draw=ctrlborder, minimum width=2.0cm] (counts) at (13.0, -1.3) {Counts $N_k$, Deficit $e_k$};
  
  \draw[thinarrow, ctrlborder] (pidctrl.south) -- (sampler.north);
  \draw[thinarrow, ctrlborder] (sampler.south) -- (taskchoice.north);
  \draw[thinarrow, ctrlborder, dashed] (taskchoice.south) -- (counts.north);
  \draw[thinarrow, ctrlborder, rounded corners=3pt] (counts.east) -- (14.5, -1.3) -- (14.5, 1.15);
  
  \draw[arrow, senseborder, line width=1pt, densely dashed] (losses.east) -- ++(0.3,0) |- (normloss.west);
  \draw[arrow, senseborder, line width=1pt, densely dashed] (diffout.east) -- ++(0.8,0) |- (diffaloc.west);
  
  \draw[arrow, planborder, line width=1pt, densely dashed] (diffaloc.east) -- ++(0.5,0) |- node[above, font=\tiny, pos=0.55, xshift=-11pt]{$f(t)$} (pidctrl.west);
  
  \end{tikzpicture}
  }
  \caption{\textbf{Overview of} \modelname. The framework decomposes multi-task graph self-supervised learning into three coupled loops operating at different timescales. 
  \textbf{\textcolor[RGB]{41,128,185}{(1)~SENSE}} (timescale $u$): A shared encoder $f_\theta$ maps the graph $(G,\mathbf{X})$ to embeddings $\mathbf{Z}$, from which per-task losses $L_k$ are computed. State estimation derives two difficulty signals---\emph{Spectral Demand} ($\mathrm{RQ}_k$), measuring how high-frequency the learning signal is on the graph, and \emph{Interference} ($\mathrm{Conf}_k$), measuring gradient conflicts with other Pareto-relevant objectives---which are combined into a composite difficulty state $D_k$.
  \textbf{\textcolor[RGB]{211,84,0}{(2)~PLAN}} (timescale $t$): Normalized losses $\tilde{L}_k$ are converted to \emph{Log-HV sensitivities} $w_k^{\mathrm{HV}} = 1/(r_k - \tilde{L}_k)$, quantifying each objective's marginal contribution to Pareto progress. These are tempered by difficulty to produce a \emph{difficulty-adjusted allocation} $a_k \propto w_k^{\mathrm{HV}} / (1+\gamma D_k)$, normalized to an epoch-level target $f(t)\in\Delta^K$.
  \textbf{\textcolor[RGB]{39,174,96}{(3)~CONTROL}} (timescale $m$): A PID controller tracks the allocation plan by computing logits $\nu_k$ from deficits $e_k = N_k^{\mathrm{ref}} - N_k$, their integral, and derivative. A stochastic sampler (softmax with $\epsilon$-greedy exploration) selects the task $k_{t,m}$ for each block, updating counts $N_k$ and closing the feedback loop. Dashed arrows indicate cross-module information flow.}
  \label{fig:controlg-overview}
  \end{figure*}
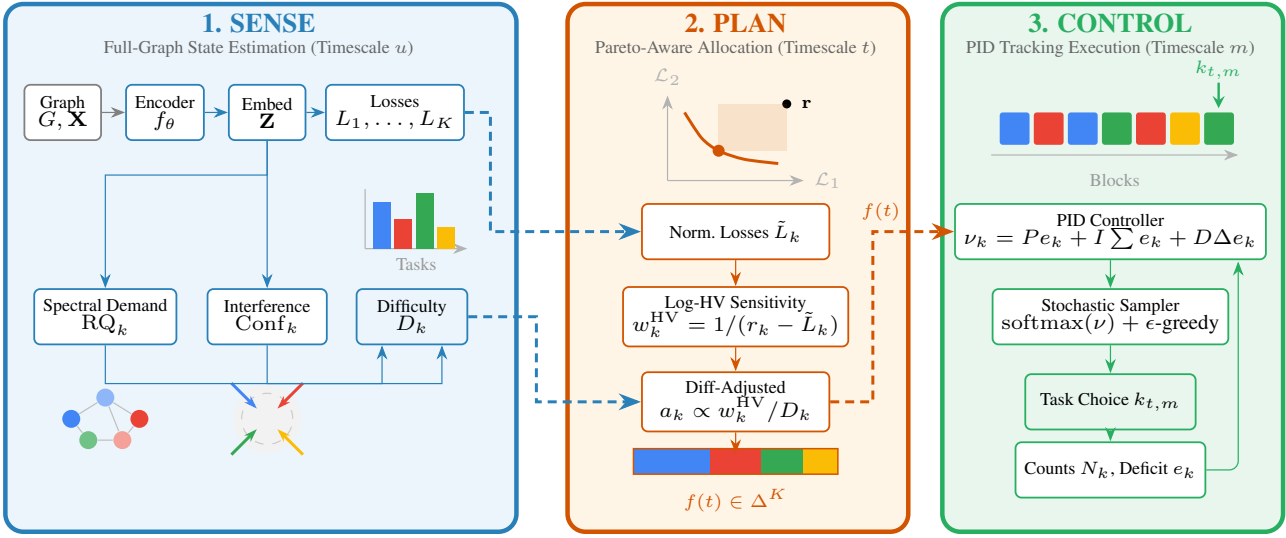

  The core idea behind \modelname{} is to replace per-step objective blending with \emph{temporal allocation}: rather than forcing every parameter update to compromise across all objectives, we dedicate consecutive optimization steps to a single objective at a time and use feedback control to decide which objective to train next. This reframes multi-objective coordination as a closed-loop scheduling problem, where the learning system repeatedly (i) \emph{senses} which objectives are currently difficult or mutually antagonistic for the shared encoder, (ii) \emph{plans} an explicit target allocation over objectives using a Pareto-monotone progress surrogate, and (iii) \emph{tracks} that plan with a lightweight controller. Figure~\ref{fig:controlg-overview} illustrates this three-loop architecture. The time-sharing view exposes allocation and ordering as first-class degrees of freedom, yields interpretable schedules, and keeps the inner-loop optimizer unchanged.

Formally, let $\{L_k(\theta)\}_{k=1}^K$ be SSL objectives sharing encoder parameters $\theta$. Training is organized into epochs $t\in\{1,\dots,T\}$, each containing $B_{\mathrm{ep}}$ mini-batches. We group consecutive mini-batches into \emph{blocks} of size $B_{\text{block}}$, yielding $M:=\lceil B_{\mathrm{ep}}/B_{\text{block}}\rceil$ blocks per epoch, indexed by $m\in\{1,\dots,M\}$. At block $(t,m)$ the scheduler selects a task index $k_{t,m}\in\{1,\dots,K\}$ and performs $B_{\text{block}}$ standard optimizer steps on that task only,
\begin{equation}
\label{eq:block-update}
\text{for } j=1,\dots,B_{\text{block}}:\quad
\theta \leftarrow \theta - \eta\, \nabla_\theta L_{k_{t,m}}\!\big(\theta;\xi_{t,m,j}\big),
\end{equation}
where $\xi_{t,m,j}$ denotes the randomness in sampling and augmentations for the $j$-th mini-batch within block $m$. The key question is therefore not how to combine objectives within a step, but how to select the sequence $\{k_{t,m}\}$ so that progress across pretext objectives stays balanced over time. We answer this by maintaining a compact per-task state computed on the full training graph (Sec.~\ref{sec:state}), converting that state into an allocation target $f(t)\in\Delta^K$ (Sec.~\ref{sec:planning}), and tracking $f(t)$ through a deficit-based controller (Sec.~\ref{sec:control}).

\subsection{State Estimation on the Full Graph}
\label{sec:state}

\modelname\ maintains a compact per-objective state that is updated every $u$ blocks and consumed by the planner (Sec.~\ref{sec:planning}) and controller (Sec.~\ref{sec:control}). State estimation is performed on the \emph{full training graph} $G=(V,E)$ so that the signals correspond to the same global objectives being optimized. At each sensing step, for every objective $k\in\{1,\dots,K\}$ we estimate (i) a \emph{spectral demand} score that captures how rapidly the objective's learning signal varies over the graph, (ii) an \emph{interference} score that captures conflicts with other objectives in parameter space, and (iii) a normalized loss $\tilde L_k$ used for planning. We combine (i)--(ii) into a single bounded difficulty state $D_k$ that tempers allocation decisions. All derivations and formal guarantees are provided in App.~\ref{app:state-estimation}.

\subsubsection{Spectral demand from representation-gradient variation}
\label{sec:spectral-demand}

To define a graph-frequency notion we require a node-indexed signal. Parameter gradients are not naturally node-indexed, so we use the objective's gradient with respect to node embeddings. Let $z_v\in\mathbb{R}^h$ denote the embedding of node $v\in V$, and for objective $k$ define the \emph{representation-gradient field} $h_k(v):=\nabla_{z_v} L_k\in\mathbb{R}^h$, stacking rows into $H_k\in\mathbb{R}^{n\times h}$. Let $\tilde{\mathbf{L}}$ be the symmetric normalized Laplacian.\footnote{$\tilde{\mathbf{L}}=\mathbf{I}-\mathbf{D}^{-1/2}\mathbf{A}\mathbf{D}^{-1/2}$; for directed graphs we work with the symmetrized adjacency $(\mathbf{A}+\mathbf{A}^\top)/2$.} The \emph{Dirichlet energy} $\mathcal{E}_k := \mathrm{tr}(H_k^\top \tilde{\mathbf{L}} H_k)$ measures edge-wise variation (App.~\ref{app:dirichlet}). Since $\mathcal{E}_k$ scales with gradient magnitude, our spectral-demand state is the scale-invariant \emph{Rayleigh quotient}
\begin{equation}
\label{eq:rq}
\mathrm{RQ}_k(\theta) \;:=\; \frac{\mathcal{E}_k(\theta)}{\|H_k\|_F^2+\varepsilon},
\end{equation}
which, as shown in App.~\ref{app:rq-spectrum}, equals an energy-weighted average Laplacian eigenvalue of $H_k$; larger $\mathrm{RQ}_k$ indicates that the learning signal concentrates on higher graph frequencies (sharper variation across edges). Why interpret $\mathrm{RQ}_k$ as a \emph{difficulty proxy}? Many message-passing GNNs behave as low-pass graph filters~\citep{nt2019revisiting}, attenuating higher-frequency components, which directly limits attainable progress:

\begin{proposition}[Spectral demand bounds attainable progress; informal, see Prop.~\ref{prop:rq-progress-bound} in App.~\ref{app:rq-difficulty}]
\label{prop:rq-difficulty-informal}
Assume representation updates act as a low-pass graph filter. Then, for fixed gradient energy $\|H_k\|_F^2$, the first-order loss decrease per update is upper-bounded by a quantity that is non-increasing in $\mathrm{RQ}_k$.
\end{proposition}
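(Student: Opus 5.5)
Since the statement is informal, I first fix a formal model and then prove the bound inside it. Write the representation update as $\Delta Z = -\eta\, g(\tilde{\mathbf{L}})\, H_k$, where $g$ is a spectral filter with $0\le g(\lambda)\le 1$ on $[0,2]$ and $g$ non-increasing. This is the low-pass hypothesis; for example, SGC propagation gives $g(\lambda)=(1-\lambda)^{2}$ restricted to the relevant range, or more simply $g(\lambda)=1-\lambda/2$. To first order the loss change is $\Delta L_k \approx \langle H_k, \Delta Z\rangle_F = -\eta\,\mathrm{tr}(H_k^\top g(\tilde{\mathbf{L}}) H_k)$. The first-order decrease per update is therefore $\eta\,\mathrm{tr}(H_k^\top g(\tilde{\mathbf{L}})H_k)$, and this is the quantity I will bound.

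Next I diagonalize $\tilde{\mathbf{L}} = U\Lambda U^\top$ with eigenvalues $0\le\lambda_1\le\dots\le\lambda_n\le 2$. Let $p_i := \|U_{:,i}^\top H_k\|_2^2/\|H_k\|_F^2$; these weights form a probability distribution over the spectrum. By App.~\ref{app:rq-spectrum}, taking $\varepsilon\to 0$, we have $\mathrm{RQ}_k=\sum_i p_i\lambda_i$. The decrease then equals $\eta\|H_k\|_F^2\sum_i p_i\, g(\lambda_i)$. The task reduces to bounding $\sup\{\sum_i p_i g(\lambda_i) : p\in\Delta,\ \sum_i p_i\lambda_i=\rho\}$ by a function $\Phi(\rho)$ that is non-increasing in $\rho$.

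The key step is the choice of $\Phi$. I take $\Phi=\hat g$, the least concave majorant of $g$ on $[0,2]$. Jensen's inequality gives $\sum_i p_i g(\lambda_i)\le \sum_i p_i\hat g(\lambda_i)\le \hat g(\rho)$. It remains to check that $\hat g$ is non-increasing. This holds because $g$ is non-increasing: the constant $g(0)=\max g$ majorizes $g$, and the concave envelope of a non-increasing function on an interval is non-increasing. I would spell this last point out briefly, since it is the one place monotonicity is actually used. The resulting bound is $\Delta L_k^{\text{first-order}}\ge -\eta\|H_k\|_F^2\,\hat g(\mathrm{RQ}_k)$, so for fixed $\|H_k\|_F^2$ the achievable decrease is at most $\eta\|H_k\|_F^2\hat g(\mathrm{RQ}_k)$, which is non-increasing in $\mathrm{RQ}_k$. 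For the linear filter $g(\lambda)=1-\lambda/2$ the bound is tight and equals $\eta\|H_k\|_F^2(1-\mathrm{RQ}_k/2)$, which gives a clean corollary.

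The main obstacle is modelling. Parameter updates act on $\theta$, not directly on $Z$, so I must justify the model $\Delta Z\approx -\eta\, g(\tilde{\mathbf{L}})H_k$. I will state it explicitly as the assumption: the induced representation change is the embedding gradient passed through a positive semidefinite low-pass operator. For a linear GNN $Z=S X W$ with $S=g_0(\tilde{\mathbf{L}})$, this follows from $\Delta Z = -\eta S S^\top H_k X X^\top$-type expressions only under whitening-type conditions on $X$. I would state that case as a motivating example and leave the general statement conditional on the assumption. I will also handle $\varepsilon>0$, which only perturbs $\mathrm{RQ}_k$ by a factor $\|H_k\|_F^2/(\|H_k\|_F^2+\varepsilon)$; monotonicity is preserved because $\|H_k\|_F^2$ is held fixed.
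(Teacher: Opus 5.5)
Your proof is correct, but it finishes differently from the paper. You share the paper's modelling assumption ($\Delta Z\approx-\eta\,p(\tilde{\mathbf{L}})H_k$ with $p$ non-increasing, stated as a hypothesis) and its spectral identity: progress per unit energy is $\sum_i\mu_i p(\lambda_i)$ and $\mathrm{RQ}_k=\sum_i\mu_i\lambda_i$.

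The paper then fixes a cutoff $\lambda_c$ and splits the spectrum there to get $\bar\Delta_k\le p_0-(p_0-p_c)\pi_{\ge\lambda_c}$. It lower-bounds the high-frequency energy fraction by a Markov-type inequality, $\pi_{\ge\lambda_c}\ge\max\{0,(r-\lambda_c)/(\lambda_{\max}-\lambda_c)\}$. This gives a family of explicit piecewise-linear bounds indexed by $\lambda_c$. You instead apply Jensen to the least concave majorant $\hat g$ and show that $\hat g$ is non-increasing; your argument ($\hat g\le g(0)=\hat g(0)$, then concavity) is sound.

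The two are closely related. Each of the paper's bounds, as a function of $r$, equals $p_0$ on $[0,\lambda_c]$ and decreases linearly to $p_c$ at $\lambda_{\max}$. It is therefore itself a concave majorant of $p$ on $[0,\lambda_{\max}]$, and the paper's inequality is exactly your Jensen step with that particular majorant. So if you take the envelope over $[0,\lambda_{\max}]$ rather than $[0,2]$ (Jensen only needs the range of the actual spectrum), your bound is pointwise at least as sharp as every member of the paper's family. It is then the sharpest bound that uses only $\mathrm{RQ}_k$ and the spectral range. Over $[0,2]$, as you wrote it, this comparison can fail near $\lambda_{\max}$, e.g.\ for strictly convex $p$ when $\lambda_{\max}<2$, because the paper exploits the graph's actual top eigenvalue.

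The paper's route buys elementarity and interpretability: a closed form in two filter values, plus an intermediate quantity (energy fraction above a cutoff), with no envelope construction. Yours buys a single optimal bound, equality for affine filters, and explicit handling of the stabilizer $\varepsilon$.

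Two small slips in your motivating remarks, neither affecting the proof. First, $(1-\lambda)^2$ increases on $[1,2]$, so it satisfies your monotonicity hypothesis only if the spectrum lies in $[0,1]$. Second, for $Z=SXW$ the induced change is $\Delta Z=-\eta\,SXX^\top S^\top H_k$, not $-\eta\,SS^\top H_kXX^\top$.
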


\subsubsection{Interference via Pareto-relevant gradient geometry}
\label{sec:interference}

Spectral demand captures graph-structural difficulty but not conflicts from shared parameters. We therefore measure interference in parameter space. Let $g_k:=\nabla_\theta L_k$ be the parameter gradient for objective $k$. A first-order expansion shows that a step on $k$ increases $L_j$ whenever $\langle g_j,g_k\rangle<0$ (Claim~\ref{clm:neg-align} in App.~\ref{app:mgda-ps}), motivating negative cosine as a conflict indicator. To focus on Pareto-relevant objectives, we use MGDA as a \emph{measurement oracle}. Because MGDA's QP depends on gradient magnitudes, we first normalize each gradient to unit norm, $\hat{g}_k := g_k/\|g_k\|_2$, ensuring scale-invariance across objectives with different loss magnitudes. MGDA then solves
\[
\textstyle \lambda^\star \in \arg\min_{\lambda\in\Delta^K}\bigl\|\sum\nolimits_{j} \lambda_j \hat{g}_j\bigr\|^2,
\]
returning weights that identify objectives locally constraining the Pareto compromise based on directional conflict rather than magnitude (App.~\ref{app:mgda-ps}--\ref{app:mgda-weights}). With pairwise conflict $c_{k,j}:=[-\cos(g_k,g_j)]_+$, we define interference as $\mathrm{Conf}_k := \sum_{j\neq k}\lambda_j^\star c_{k,j}$, which is large when $k$ conflicts with several Pareto-relevant objectives.

\subsubsection{Composite difficulty state and normalized losses}
\label{sec:composite-state}

Finally, we combine spectral demand and interference into a single bounded difficulty state. At each sensing step we robust-normalize $\mathrm{RQ}_k$ and $\mathrm{Conf}_k$ across objectives to obtain $\bar R_k$ and $\bar C_k$, and update
\begin{align}
\label{eq:D}
q_k &:= \alpha\bar R_k + \beta\bar C_k, \notag\\
D_k &\leftarrow \clip\big((1{-}\rho)\,D_k + \rho\,q_k,\,[D_{\min},D_{\max}]\big),
\end{align}
where $\bar R_k,\bar C_k$ denote the normalized scores and $q_k$ is the instantaneous difficulty drive.
We use this linear combination as a monotone aggregator with interpretable coefficients; App.~\ref{app:composite} discusses alternatives. In parallel, we maintain a normalized loss state $\tilde L_k$ for planning by smoothing with an EMA and dividing by a fixed scale, so the planner is invariant to loss units (see App.~\ref{app:hv}).

\subsection{Strategic Planning: Log-Hypervolume Allocation}
\label{sec:planning}

The planner converts the state estimates from Sec.~\ref{sec:state} into an epoch-level allocation target $f(t)\in\Delta^K$, where $f_k(t)$ is the desired fraction of blocks to devote to objective $k$ during epoch $t$. We want a scalar progress signal that is \emph{Pareto-compliant}: if all normalized losses weakly improve and at least one strictly improves (a strict Pareto improvement), the signal should strictly increase. A widely used Pareto-compliant choice is the dominated hypervolume (HV) relative to a reference point; in our setting we apply HV to the \emph{singleton} loss vector $\tilde{\boldsymbol{\ell}}(t)=(\tilde L_1(t),\dots,\tilde L_K(t))$.

\paragraph{Singleton hypervolume and reference point.}
Let $\mathbf{r}(t)\in\mathbb{R}^K$ be a reference point such that $r_k(t)>\tilde L_k(t)$ for all $k$. For minimization, the dominated hypervolume of a single point reduces to the axis-aligned box volume $\mathrm{HV}(t) := \prod_{k}(r_k(t)-\tilde L_k(t))$. We use its log for numerical stability and separable sensitivities:
\begin{equation}
\label{eq:loghv}
\phi(t) \;:=\; \log\mathrm{HV}(t)
\;=\;
\sum_{k=1}^K \log\big(r_k(t)-\tilde L_k(t)\big).
\end{equation}
All formal properties (Pareto compliance for fixed $\mathbf{r}$, reference-point requirements, singleton reduction) are proved in App.~\ref{app:hv}. Crucially, Pareto-compliance statements hold conditional on a fixed reference point; in practice we update $\mathbf{r}(t)$ conservatively (monotone safeguard $r_k \leftarrow \max(r_k, \tilde{L}_k + \delta)$) to maintain positivity. We therefore treat $\phi(t)$ as a \emph{priority signal} for planning within each epoch, not as a globally comparable progress metric across training.

\paragraph{Log-HV sensitivities as planning priorities.}
Differentiating \eqref{eq:loghv} w.r.t.\ $\tilde L_k(t)$ (holding $\mathbf{r}(t)$ fixed) yields a closed-form marginal sensitivity,
\begin{equation}
\label{eq:hv-sens}
w_k^{\mathrm{HV}}(t)
\;:=\;
\frac{1}{\,r_k(t)-\tilde L_k(t)+\varepsilon\,},
\end{equation}
which measures how strongly improving objective $k$ would increase log-HV locally. The full derivation and the relationship between HV and log-HV gradients are provided in App.~\ref{app:hv-sens}.

\paragraph{Allocation plan.}
We convert sensitivities into an epoch-level allocation plan by tempering with the difficulty state $D_k(t)$ from Sec.~\ref{sec:state}. Define the difficulty-adjusted priority $a_k := w_k^{\mathrm{HV}}/(1+\gamma D_k)$; then
\begin{equation}
\label{eq:alloc}
f_k(t) \;:=\; a_k(t)\big/{\textstyle\sum\nolimits_{j=1}^K a_j(t)}.
\end{equation}
Intuitively, $w_k^{\mathrm{HV}}$ answers \emph{which objective is most valuable for Pareto-compliant progress}, while difficulty tempering answers \emph{how efficiently compute converts into progress}. Within epoch $t$, this induces a cumulative reference allocation $N_k^{\mathrm{ref}}(m) := f_k(t)\cdot m$ at block $m$, which serves as the target tracked by the controller in Sec.~\ref{sec:control}. App.~\ref{app:hv-alloc-derivation} derives \eqref{eq:alloc} from a proportional-fair planning objective.
\subsection{Tactical Execution: PID Tracking of the Allocation Plan}
\label{sec:control}

The planner outputs an epoch-level target fraction $f(t)\in\Delta^K$, but execution is discrete: each block trains exactly one task. Tactical execution therefore solves a tracking problem: within epoch $t$, choose a task sequence $\{k_{t,m}\}_{m\ge 1}$ so that the realized counts match the reference trajectory $N_k^{\mathrm{ref}}(m):=m f_k(t)$ from Sec.~\ref{sec:planning}. We deliberately control \emph{allocation counts} rather than losses: a scheduling decision changes counts deterministically by one, whereas losses are noisy and respond indirectly to a single block. This yields a well-conditioned feedback channel and a controller that is easy to audit via tracking plots.

\paragraph{Deficits and causal tracking error.}
Let $N_k(m):=\sum_{\tau=1}^{m}\bbone\{k_{t,\tau}=k\}$ be the number of blocks assigned to task $k$ after executing $m$ blocks in epoch $t$. At the \emph{start} of block $m$ (before choosing $k_{t,m}$), the system state is $N_k(m-1)$ and the desired cumulative allocation after executing this block is $N_k^{\mathrm{ref}}(m)=m f_k(t)$. We therefore define the \emph{pre-decision} deficit
\begin{equation}
\label{eq:deficit}
e_k(m) \;:=\; N_k^{\mathrm{ref}}(m) - N_k(m-1).
\end{equation}
Selecting task $k$ at block $m$ increments $N_k$ by exactly $1$ and leaves other $N_j$ unchanged, inducing a direct, stable error dynamics (formalized in App.~\ref{app:control-details}).

\paragraph{PID logits and stochastic scheduling.}
We compute per-task control logits from the deficit, its running integral, and its discrete derivative:
\begin{align}
I_k(m)&:=I_k(m-1)+e_k(m), \notag\\
\Delta e_k(m)&:=e_k(m)-e_k(m-1),
\end{align}
\begin{equation}
\label{eq:pid}
\nu_k(m)
\;:=\;
K_P^{(k)} e_k(m) + K_I^{(k)} I_k(m) + K_D^{(k)} \Delta e_k(m).
\end{equation}
The logits are mapped to a sampling distribution with explicit exploration,
\begin{align}
\label{eq:prob}
p(m)
&\;:=\;
(1{-}\epsilon)\,\softmax\!\big(\tfrac{\nu(m)}{\tau}\big)\;+\;\tfrac{\epsilon}{K}\mathbf{1}, \notag\\
k_{t,m}&\sim \mathrm{Cat}(p(m)).
\end{align}
Intuitively, the proportional term prioritizes tasks that are behind schedule, the integral term removes steady-state bias under fractional targets, and the derivative term damps oscillations. The $\epsilon$-exploration floor mitigates starvation and yields a geometric drought-tail bound (App.~\ref{app:exploration}). Given the selected task $k_{t,m}$, we perform the single-task block update in Eq.~\eqref{eq:block-update}, update the count $N_k(m)$, and repeat---completing the sense--plan--control loop at block time. Theoretical properties and implementation stabilizers are given in App.~\ref{app:control-details}.

\begin{algorithm}[t]
    \caption{\textbf{\modelname{} training loop.} Sense--plan--control scheduling: full-graph state estimation (spectral demand and interference), log-hypervolume planning, and deficit-tracking PID execution of single-task blocks.}
    \label{alg:controlg-main}
    \begin{algorithmic}[1]
    \REQUIRE Graph $G$; objectives $\{L_k\}_{k=1}^K$; epochs $T$; block size $B_{\text{block}}$; sensing period $u$
    \STATE \EXc\ Initialize $\theta$, per-task states $D_k,\tilde L_k,r_k$ (App.~\ref{app:state-estimation},~\ref{app:hv})
    \FOR{epoch $t=1,\dots,T$}
        \STATE \EXc\ Reset $N_k,N_k^{\mathrm{ref}},I_k,e_k^{\mathrm{prev}}\leftarrow 0$ for all $k$
        \FOR{block $m=1,\dots,M$}
            \IF{$m=1$ or $(m-1)\bmod u=0$}
                \STATE \SEc\ Compute $\mathrm{RQ}_k,\mathrm{Conf}_k$ on $G$; update $D_k,\tilde L_k$ (Sec.~\ref{sec:state})
                \STATE \PLc\ Update $r_k$; compute $w_k^{\mathrm{HV}}$, plan $f$ (Sec.~\ref{sec:planning})
            \ENDIF
            \STATE \EXc\ $N_k^{\mathrm{ref}} \leftarrow N_k^{\mathrm{ref}} + f_k$; $e_k \leftarrow N_k^{\mathrm{ref}} - N_k$; update $I_k,\Delta e_k,\nu_k$
            \STATE \EXc\ $p \leftarrow (1-\epsilon)\softmax(\nu/\tau)+\frac{\epsilon}{K}\mathbf{1}$; sample $k_{t,m} \sim \mathrm{Cat}(p)$
            \STATE \EXc\ Run $B_{\text{block}}$ steps on $L_{k_{t,m}}$; $N_{k_{t,m}} \leftarrow N_{k_{t,m}} + 1$; $e_k^{\mathrm{prev}} \leftarrow e_k$
        \ENDFOR
    \ENDFOR
    \end{algorithmic}
    \end{algorithm}
\begin{table*}[t]
\centering
\footnotesize
\setlength{\tabcolsep}{2.2pt}
\begin{tabular}{@{}l|ccccccccc|c@{}}
\toprule
Method & Cora & CiteSeer & Chameleon & Squirrel & Actor & PubMed & Wiki-CS & Co-CS & Arxiv & Avg Rank \\
\midrule
BGRL & 77.14{\tiny\textcolor{black!50}{$\pm$2.2}} & 35.28{\tiny\textcolor{black!50}{$\pm$2.1}} & 64.87{\tiny\textcolor{black!50}{$\pm$1.3}} & 46.11{\tiny\textcolor{black!50}{$\pm$1.2}} & \cellcolor{green!45}31.22{\tiny\textcolor{black!50}{$\pm$0.5}} & 79.94{\tiny\textcolor{black!50}{$\pm$1.2}} & 80.36{\tiny\textcolor{black!50}{$\pm$0.3}} & 93.05{\tiny\textcolor{black!50}{$\pm$0.4}} & 71.24{\tiny\textcolor{black!50}{$\pm$0.8}} & 9.2 \\
DGI & 75.64{\tiny\textcolor{black!50}{$\pm$1.6}} & 63.84{\tiny\textcolor{black!50}{$\pm$1.6}} & 66.27{\tiny\textcolor{black!50}{$\pm$0.9}} & 48.17{\tiny\textcolor{black!50}{$\pm$1.0}} & 30.46{\tiny\textcolor{black!50}{$\pm$0.7}} & 81.28{\tiny\textcolor{black!50}{$\pm$1.1}} & 77.95{\tiny\textcolor{black!50}{$\pm$0.9}} & 93.17{\tiny\textcolor{black!50}{$\pm$0.4}} & 70.86{\tiny\textcolor{black!50}{$\pm$0.6}} & 9.6 \\
GRACE & 68.88{\tiny\textcolor{black!50}{$\pm$2.1}} & 58.96{\tiny\textcolor{black!50}{$\pm$2.3}} & 66.23{\tiny\textcolor{black!50}{$\pm$0.4}} & \cellcolor{green!45}54.06{\tiny\textcolor{black!50}{$\pm$1.2}} & 28.80{\tiny\textcolor{black!50}{$\pm$0.5}} & 80.54{\tiny\textcolor{black!50}{$\pm$0.8}} & 79.78{\tiny\textcolor{black!50}{$\pm$0.3}} & 93.62{\tiny\textcolor{black!50}{$\pm$0.2}} & -- & 9.1 \\
MVGRL & 79.32{\tiny\textcolor{black!50}{$\pm$0.9}} & 59.68{\tiny\textcolor{black!50}{$\pm$3.9}} & 55.00{\tiny\textcolor{black!50}{$\pm$1.1}} & -- & \cellcolor{green!10}31.17{\tiny\textcolor{black!50}{$\pm$0.7}} & 78.60{\tiny\textcolor{black!50}{$\pm$0.9}} & -- & 90.51{\tiny\textcolor{black!50}{$\pm$4.5}} & -- & 12.4 \\
\midrule
p\_link & 76.72{\tiny\textcolor{black!50}{$\pm$1.0}} & 52.76{\tiny\textcolor{black!50}{$\pm$1.9}} & 57.41{\tiny\textcolor{black!50}{$\pm$3.6}} & 36.56{\tiny\textcolor{black!50}{$\pm$1.3}} & 29.00{\tiny\textcolor{black!50}{$\pm$0.6}} & 81.28{\tiny\textcolor{black!50}{$\pm$0.5}} & 79.22{\tiny\textcolor{black!50}{$\pm$0.3}} & 91.92{\tiny\textcolor{black!50}{$\pm$0.2}} & 70.42{\tiny\textcolor{black!50}{$\pm$0.5}} & 13.1 \\
p\_recon & 78.76{\tiny\textcolor{black!50}{$\pm$0.5}} & \cellcolor{green!25}64.86{\tiny\textcolor{black!50}{$\pm$1.5}} & \cellcolor{green!10}69.39{\tiny\textcolor{black!50}{$\pm$1.3}} & \cellcolor{green!10}52.33{\tiny\textcolor{black!50}{$\pm$0.8}} & 27.75{\tiny\textcolor{black!50}{$\pm$0.4}} & 76.28{\tiny\textcolor{black!50}{$\pm$0.9}} & 79.88{\tiny\textcolor{black!50}{$\pm$0.4}} & \cellcolor{green!25}94.85{\tiny\textcolor{black!50}{$\pm$0.3}} & 71.08{\tiny\textcolor{black!50}{$\pm$0.4}} & 6.8 \\
p\_minsg & 76.44{\tiny\textcolor{black!50}{$\pm$1.1}} & 61.74{\tiny\textcolor{black!50}{$\pm$0.8}} & 64.78{\tiny\textcolor{black!50}{$\pm$1.6}} & 47.61{\tiny\textcolor{black!50}{$\pm$0.9}} & 29.16{\tiny\textcolor{black!50}{$\pm$0.7}} & 80.76{\tiny\textcolor{black!50}{$\pm$1.0}} & 79.95{\tiny\textcolor{black!50}{$\pm$0.2}} & 93.44{\tiny\textcolor{black!50}{$\pm$0.3}} & 70.94{\tiny\textcolor{black!50}{$\pm$0.6}} & 10.2 \\
p\_decor & 43.86{\tiny\textcolor{black!50}{$\pm$9.0}} & 34.22{\tiny\textcolor{black!50}{$\pm$5.7}} & 52.59{\tiny\textcolor{black!50}{$\pm$0.8}} & 40.88{\tiny\textcolor{black!50}{$\pm$1.1}} & 25.92{\tiny\textcolor{black!50}{$\pm$0.9}} & 75.66{\tiny\textcolor{black!50}{$\pm$1.3}} & 72.60{\tiny\textcolor{black!50}{$\pm$4.7}} & 93.39{\tiny\textcolor{black!50}{$\pm$0.3}} & 68.52{\tiny\textcolor{black!50}{$\pm$1.2}} & 15.6 \\
p\_par & 67.22{\tiny\textcolor{black!50}{$\pm$1.3}} & 52.74{\tiny\textcolor{black!50}{$\pm$1.7}} & 57.06{\tiny\textcolor{black!50}{$\pm$3.4}} & 40.96{\tiny\textcolor{black!50}{$\pm$0.6}} & 28.26{\tiny\textcolor{black!50}{$\pm$1.3}} & 74.20{\tiny\textcolor{black!50}{$\pm$0.8}} & 73.56{\tiny\textcolor{black!50}{$\pm$0.2}} & 92.31{\tiny\textcolor{black!50}{$\pm$0.2}} & 69.86{\tiny\textcolor{black!50}{$\pm$0.7}} & 14.7 \\
\midrule
AutoSSL & \cellcolor{green!10}80.30{\tiny\textcolor{black!50}{$\pm$1.5}} & \cellcolor{green!10}64.72{\tiny\textcolor{black!50}{$\pm$1.0}} & \cellcolor{green!45}70.09{\tiny\textcolor{black!50}{$\pm$2.0}} & 49.99{\tiny\textcolor{black!50}{$\pm$3.0}} & 29.76{\tiny\textcolor{black!50}{$\pm$0.7}} & 80.80{\tiny\textcolor{black!50}{$\pm$0.7}} & 79.82{\tiny\textcolor{black!50}{$\pm$0.3}} & 93.61{\tiny\textcolor{black!50}{$\pm$0.2}} & -- & 6.4 \\
WAS & 74.12{\tiny\textcolor{black!50}{$\pm$3.3}} & 57.66{\tiny\textcolor{black!50}{$\pm$3.5}} & 57.59{\tiny\textcolor{black!50}{$\pm$6.3}} & 43.52{\tiny\textcolor{black!50}{$\pm$2.2}} & 29.68{\tiny\textcolor{black!50}{$\pm$0.5}} & 78.62{\tiny\textcolor{black!50}{$\pm$2.1}} & 77.40{\tiny\textcolor{black!50}{$\pm$1.6}} & 92.62{\tiny\textcolor{black!50}{$\pm$1.3}} & 70.68{\tiny\textcolor{black!50}{$\pm$0.9}} & 12.9 \\
Uniform & 77.14{\tiny\textcolor{black!50}{$\pm$0.9}} & 58.68{\tiny\textcolor{black!50}{$\pm$1.2}} & 66.23{\tiny\textcolor{black!50}{$\pm$1.4}} & 49.93{\tiny\textcolor{black!50}{$\pm$1.0}} & 29.26{\tiny\textcolor{black!50}{$\pm$0.5}} & 82.38{\tiny\textcolor{black!50}{$\pm$0.5}} & \cellcolor{green!45}80.48{\tiny\textcolor{black!50}{$\pm$0.2}} & 93.97{\tiny\textcolor{black!50}{$\pm$0.2}} & \cellcolor{green!10}71.32{\tiny\textcolor{black!50}{$\pm$0.4}} & 7.3 \\
\midrule
ParetoGNN & 78.26{\tiny\textcolor{black!50}{$\pm$0.9}} & 58.32{\tiny\textcolor{black!50}{$\pm$1.1}} & 65.35{\tiny\textcolor{black!50}{$\pm$2.2}} & 45.00{\tiny\textcolor{black!50}{$\pm$1.9}} & 28.95{\tiny\textcolor{black!50}{$\pm$1.0}} & 67.72{\tiny\textcolor{black!50}{$\pm$3.3}} & 79.94{\tiny\textcolor{black!50}{$\pm$0.1}} & 93.96{\tiny\textcolor{black!50}{$\pm$0.2}} & 70.56{\tiny\textcolor{black!50}{$\pm$0.8}} & 10.8 \\
PCGrad & 79.44{\tiny\textcolor{black!50}{$\pm$0.8}} & 59.66{\tiny\textcolor{black!50}{$\pm$1.4}} & 66.84{\tiny\textcolor{black!50}{$\pm$1.4}} & 49.11{\tiny\textcolor{black!50}{$\pm$1.3}} & 29.63{\tiny\textcolor{black!50}{$\pm$0.5}} & \cellcolor{green!25}83.12{\tiny\textcolor{black!50}{$\pm$0.5}} & \cellcolor{green!10}80.40{\tiny\textcolor{black!50}{$\pm$0.2}} & 94.07{\tiny\textcolor{black!50}{$\pm$0.2}} & 71.48{\tiny\textcolor{black!50}{$\pm$0.3}} & 5.8 \\
CAGrad & \cellcolor{green!25}80.40{\tiny\textcolor{black!50}{$\pm$0.4}} & 62.66{\tiny\textcolor{black!50}{$\pm$0.3}} & 66.75{\tiny\textcolor{black!50}{$\pm$0.6}} & 50.34{\tiny\textcolor{black!50}{$\pm$1.4}} & 29.67{\tiny\textcolor{black!50}{$\pm$0.5}} & \cellcolor{green!10}82.84{\tiny\textcolor{black!50}{$\pm$0.9}} & 79.94{\tiny\textcolor{black!50}{$\pm$0.3}} & \cellcolor{green!10}94.31{\tiny\textcolor{black!50}{$\pm$0.1}} & \cellcolor{green!25}71.62{\tiny\textcolor{black!50}{$\pm$0.4}} & 5.2 \\
\midrule
Random & 79.06{\tiny\textcolor{black!50}{$\pm$0.3}} & 58.10{\tiny\textcolor{black!50}{$\pm$1.3}} & 63.07{\tiny\textcolor{black!50}{$\pm$1.8}} & 46.22{\tiny\textcolor{black!50}{$\pm$0.7}} & 29.57{\tiny\textcolor{black!50}{$\pm$1.0}} & 80.56{\tiny\textcolor{black!50}{$\pm$0.8}} & 80.10{\tiny\textcolor{black!50}{$\pm$0.2}} & 93.39{\tiny\textcolor{black!50}{$\pm$0.2}} & 71.18{\tiny\textcolor{black!50}{$\pm$0.5}} & 9.4 \\
Round-Robin & 78.52{\tiny\textcolor{black!50}{$\pm$2.0}} & 59.90{\tiny\textcolor{black!50}{$\pm$1.1}} & 64.61{\tiny\textcolor{black!50}{$\pm$1.1}} & 48.17{\tiny\textcolor{black!50}{$\pm$1.1}} & 30.07{\tiny\textcolor{black!50}{$\pm$0.5}} & 80.78{\tiny\textcolor{black!50}{$\pm$0.5}} & 80.22{\tiny\textcolor{black!50}{$\pm$0.1}} & 94.11{\tiny\textcolor{black!50}{$\pm$0.2}} & 71.28{\tiny\textcolor{black!50}{$\pm$0.4}} & 7.4 \\
\modelname{} & \cellcolor{green!45}\textbf{81.92}{\tiny\textcolor{black!50}{$\pm$0.9}} & \cellcolor{green!45}\textbf{66.48}{\tiny\textcolor{black!50}{$\pm$1.1}} & \cellcolor{green!25}69.54{\tiny\textcolor{black!50}{$\pm$1.0}} & \cellcolor{green!25}53.18{\tiny\textcolor{black!50}{$\pm$0.9}} & \cellcolor{green!25}31.20{\tiny\textcolor{black!50}{$\pm$0.6}} & \cellcolor{green!45}\textbf{84.24}{\tiny\textcolor{black!50}{$\pm$0.7}} & \cellcolor{green!25}80.45{\tiny\textcolor{black!50}{$\pm$0.3}} & \cellcolor{green!45}\textbf{96.14}{\tiny\textcolor{black!50}{$\pm$0.2}} & \cellcolor{green!45}\textbf{72.86}{\tiny\textcolor{black!50}{$\pm$0.3}} & \textbf{1.4} \\
\bottomrule
\end{tabular}
\caption{\textbf{Node classification accuracy.} Mean $\pm$ 95\% CI over 5 seeds. \colorbox{green!45}{First}, \colorbox{green!25}{second}, and \colorbox{green!10}{third} best methods per dataset are highlighted. \textbf{Avg Rank} is the mean rank across datasets (lower is better). \modelname{} achieves the best average rank, consistently performing among the top methods across both homophilic and heterophilic graphs.}
\label{tab:main_nc}
\end{table*}

\begin{table*}[h]
\centering
\footnotesize
\setlength{\tabcolsep}{2.5pt}
\begin{tabular}{@{}l|ccccccccc@{}}
\toprule
Variant & Cora & CiteSeer & Chameleon & Squirrel & Actor & PubMed & Wiki-CS & Co-CS & Arxiv \\
\midrule
\modelname{} (full) & \textbf{81.92}{\tiny\textcolor{black!50}{$\pm$0.9}} & \textbf{66.48}{\tiny\textcolor{black!50}{$\pm$1.1}} & \textbf{69.54}{\tiny\textcolor{black!50}{$\pm$1.0}} & \textbf{53.18}{\tiny\textcolor{black!50}{$\pm$0.9}} & \textbf{31.20}{\tiny\textcolor{black!50}{$\pm$0.6}} & \textbf{84.24}{\tiny\textcolor{black!50}{$\pm$0.7}} & \textbf{80.45}{\tiny\textcolor{black!50}{$\pm$0.3}} & \textbf{96.14}{\tiny\textcolor{black!50}{$\pm$0.2}} & \textbf{72.86}{\tiny\textcolor{black!50}{$\pm$0.3}} \\
\midrule
w/o spectral demand ($\alpha{=}0$) & 80.14{\tiny\textcolor{black!50}{$\pm$1.2}} & 64.72{\tiny\textcolor{black!50}{$\pm$1.4}} & 67.82{\tiny\textcolor{black!50}{$\pm$1.3}} & 51.64{\tiny\textcolor{black!50}{$\pm$1.1}} & 30.42{\tiny\textcolor{black!50}{$\pm$0.8}} & 82.68{\tiny\textcolor{black!50}{$\pm$0.9}} & 79.12{\tiny\textcolor{black!50}{$\pm$0.5}} & 95.42{\tiny\textcolor{black!50}{$\pm$0.3}} & 71.94{\tiny\textcolor{black!50}{$\pm$0.5}} \\
w/o interference ($\beta{=}0$) & 80.86{\tiny\textcolor{black!50}{$\pm$1.0}} & 65.34{\tiny\textcolor{black!50}{$\pm$1.2}} & 68.26{\tiny\textcolor{black!50}{$\pm$1.1}} & 52.12{\tiny\textcolor{black!50}{$\pm$1.0}} & 30.68{\tiny\textcolor{black!50}{$\pm$0.7}} & 83.42{\tiny\textcolor{black!50}{$\pm$0.8}} & 79.68{\tiny\textcolor{black!50}{$\pm$0.4}} & 95.78{\tiny\textcolor{black!50}{$\pm$0.2}} & 72.28{\tiny\textcolor{black!50}{$\pm$0.4}} \\
w/o planner (uniform $\mathbf{f}$) & 78.52{\tiny\textcolor{black!50}{$\pm$1.4}} & 62.86{\tiny\textcolor{black!50}{$\pm$1.6}} & 65.48{\tiny\textcolor{black!50}{$\pm$1.5}} & 49.86{\tiny\textcolor{black!50}{$\pm$1.3}} & 29.56{\tiny\textcolor{black!50}{$\pm$0.9}} & 80.86{\tiny\textcolor{black!50}{$\pm$1.1}} & 77.54{\tiny\textcolor{black!50}{$\pm$0.6}} & 94.56{\tiny\textcolor{black!50}{$\pm$0.4}} & 70.42{\tiny\textcolor{black!50}{$\pm$0.7}} \\
w/o controller (i.i.d.\ from $\mathbf{f}$) & 80.28{\tiny\textcolor{black!50}{$\pm$1.1}} & 64.96{\tiny\textcolor{black!50}{$\pm$1.3}} & 67.94{\tiny\textcolor{black!50}{$\pm$1.2}} & 51.82{\tiny\textcolor{black!50}{$\pm$1.0}} & 30.54{\tiny\textcolor{black!50}{$\pm$0.7}} & 82.96{\tiny\textcolor{black!50}{$\pm$0.8}} & 79.24{\tiny\textcolor{black!50}{$\pm$0.4}} & 95.52{\tiny\textcolor{black!50}{$\pm$0.3}} & 71.76{\tiny\textcolor{black!50}{$\pm$0.5}} \\
w/o state signals ($\alpha{=}\beta{=}0$) & 77.86{\tiny\textcolor{black!50}{$\pm$1.5}} & 61.92{\tiny\textcolor{black!50}{$\pm$1.7}} & 64.72{\tiny\textcolor{black!50}{$\pm$1.6}} & 48.94{\tiny\textcolor{black!50}{$\pm$1.4}} & 29.18{\tiny\textcolor{black!50}{$\pm$1.0}} & 79.94{\tiny\textcolor{black!50}{$\pm$1.2}} & 76.82{\tiny\textcolor{black!50}{$\pm$0.7}} & 94.18{\tiny\textcolor{black!50}{$\pm$0.5}} & 69.82{\tiny\textcolor{black!50}{$\pm$0.8}} \\
\bottomrule
\end{tabular}
\vspace{0.5em}
\caption{\textbf{Ablation study (node classification accuracy).} Each row removes one component from \modelname{}. Removing the planner or both state signals ($\alpha{=}\beta{=}0$) causes the largest drops, confirming that adaptive allocation and state estimation are essential. The interference term ($\beta$) matters most on heterophilic graphs (Chameleon, Squirrel), while spectral demand ($\alpha$) provides consistent gains.}
\label{tab:ablation_nc}
\end{table*}

\begin{figure*}[h]
\centering
\includegraphics[width=\textwidth]{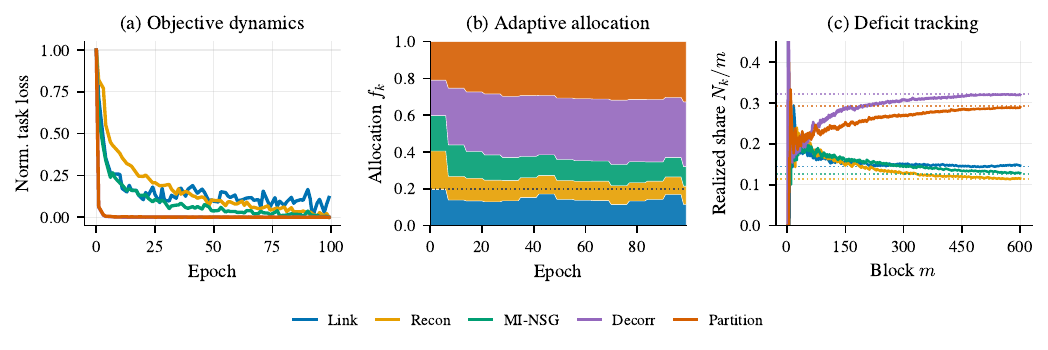}
\caption{\textbf{\modelname{} is auditable in closed loop} (Cora). \textbf{(a)~Objective dynamics:} objectives plateau at different rates---\texttt{p\_par}, \texttt{p\_decor}, and \texttt{p\_link} plateau within a few epochs while \texttt{p\_recon} and \texttt{p\_minsg} keep improving---so per-objective utility is nonstationary (\emph{drift}). \textbf{(b)~Adaptive allocation:} the planner departs from the uniform $1/K$ target (dotted), reallocating budget across training in response to the changing state. \textbf{(c)~Deficit tracking:} the PID controller drives each task's realized share $N_k/m$ to its planned allocation $f_k$ (dotted), promoting coverage and bounding the probability of starvation (\emph{drought}).}
\label{fig:main_diagnostic}
\end{figure*}

\section{Experiments}
\label{sec:experiments}

Our experiments are designed to test whether \modelname{} delivers consistent improvements over single-pretext and multi-pretext baselines while remaining computationally efficient at scale. Beyond accuracy, we also evaluate whether \modelname{} exhibits the behaviors it is designed for: adaptation to nonstationarity (drift), resilience to objective conflicts (disagreement), and mitigation of objective starvation (drought). To keep the main paper concise, we present the primary downstream tables in the main text and defer detailed hyperparameters, dataset statistics, ablations, and diagnostic plots to App.~\ref{app:experiments}.

\subsection{Experimental setup}
\label{sec:exp_setup}

\textbf{Datasets.}
We evaluate on nine benchmarks spanning homophilic, heterophilic, and large-scale regimes. The homophilic set includes citation networks (\textsc{Cora}, \textsc{CiteSeer}, \textsc{PubMed}) using Planetoid splits \citep{Yang2016RevisitingSL}, the \textsc{Coauthor-CS} collaboration network \citep{Shchur2018PitfallsOG}, and \textsc{Wiki-CS} \citep{Mernyei2020WikiCSAW}. The heterophilic set includes \textsc{Chameleon} and \textsc{Squirrel} \citep{Rozemberczki2019MultiscaleAN} and \textsc{Actor} \citep{Pei2020GeomGCNGG}. For scalability, we include \textsc{ogbn-arxiv} from OGB \citep{Hu2020OpenGB}. Dataset statistics and preprocessing details are summarized in App.~\ref{app:dataset_stats}.

\textbf{Pretext task pool (K=5).}
Following task pools used in prior multi-pretext work \citep{jin2021automated,ju2022multi}, we pretrain with a fixed pool of five complementary objectives: link prediction (\texttt{p\_link}), masked feature reconstruction (\texttt{p\_recon}), node--subgraph mutual-information contrast (\texttt{p\_minsg}), representation decorrelation (\texttt{p\_decor}), and METIS partition prediction (\texttt{p\_par}). Intuitively, they cover local topology, feature denoising, augmentation-invariant local context, redundancy reduction, and mesoscale community structure. We provide precise loss definitions and task-specific hyperparameters in App.~\ref{app:pretext_tasks}.

\textbf{Baselines.}
We use \textsc{GCN}~\citep{kipf2016semi} and \textsc{GAT}~\citep{velivckovic2017graph} as encoder backbones, and compare against two families of self-supervised baselines. \emph{Single-pretext SSL}: representative graph SSL methods including \textsc{DGI} \citep{velivckovic2018deep}, \textsc{GRACE} \citep{you2020graph}, \textsc{MVGRL} \citep{Hassani2020ContrastiveMR}, and \textsc{BGRL} \citep{thakoor2021large}. \emph{Multi-pretext / multi-task}: weighting-based approaches such as \textsc{AutoSSL} \citep{jin2021automated} and \textsc{WAS} (uniform or tuned weights), and Pareto/projection-style methods including \textsc{ParetoGNN} \citep{ju2022multi}, \textsc{PCGrad} \citep{yu2020gradient}, and \textsc{CAGrad} \citep{liu2021conflict}.

\textbf{Evaluation (3 downstream tasks).}
Following standard multi-task graph SSL practice \citep{ju2022multi}, we freeze the pretrained encoder and evaluate transfer on: \textbf{node classification} (linear probe; accuracy), \textbf{link prediction} (logistic decoder; ROC-AUC), and \textbf{node clustering} (K-means on embeddings; NMI). For link prediction, we ensure no edge leakage by removing validation/test edges from the training graph. Exact split protocols, probe hyperparameters, and evaluation settings are listed in App.~\ref{app:eval_protocols}.
\subsection{Main results}
\label{sec:exp_main_results}

\textbf{Random scheduling alone provides strong baselines.}
Before analyzing \modelname{}, we highlight a striking pattern: the \emph{Random} scheduler---which simply samples a pretext task uniformly at random for each block---often matches or outperforms sophisticated multi-task methods. On node clustering (Table~\ref{tab:app_nclu}), Random achieves an average rank of 5.0, beating \textsc{AutoSSL}, \textsc{WAS}, \textsc{ParetoGNN}, \textsc{PCGrad}, and \textsc{CAGrad}. Similar patterns appear on link prediction (Table~\ref{tab:app_lp}). This suggests that \emph{temporal separation} of conflicting objectives---even without intelligent scheduling---reduces gradient interference, since objectives never share the same gradient step in any block.

\textbf{Main comparison.}
Table~\ref{tab:main_nc} reports node classification accuracy across nine datasets. Link prediction (Table~\ref{tab:app_lp}) and node clustering (Table~\ref{tab:app_nclu}) are deferred to App.~\ref{app:lp_results} and App.~\ref{app:clustering_results}; \modelname{} achieves the best average rank on all three tasks (1.4, 1.9, and 1.8, respectively). Several patterns emerge:
\emph{(i)} \modelname{} ranks first or second on all nine datasets for node classification, with particularly strong gains on homophilic graphs (Cora +1.5\%, PubMed +1.1\%, Co-CS +1.8\% over the next-best method).
\emph{(ii)} On heterophilic graphs (Chameleon, Squirrel, Actor), where objective conflicts are more pronounced, \modelname{} remains competitive with the strongest single-pretext baselines while substantially outperforming weighting-based approaches like \textsc{WAS}.
\emph{(iii)} On the large-scale ogbn-arxiv, \modelname{} achieves 72.86\%---a 1.2\% gain over \textsc{CAGrad} that demonstrates the method scales to large graphs.

\subsection{Ablations and diagnostics}
\label{sec:exp_ablations}

Table~\ref{tab:ablation_nc} isolates the contribution of each \modelname{} component. We ablate: \emph{(i)} state estimation signals by setting $\alpha{=}0$ (no spectral demand) or $\beta{=}0$ (no interference); \emph{(ii)} the planner by replacing log-hypervolume allocation with a uniform target; and \emph{(iii)} the controller by replacing deficit-based tracking with i.i.d.\ sampling from $\mathbf{f}$. A link-prediction ablation appears in App.~\ref{app:ablations} (Table~\ref{tab:ablation_lp}).

\textbf{Key findings.}
Removing the planner (uniform $\mathbf{f}$) causes the largest drop (e.g., $-3.4$\% on Cora, $-3.4$\% on PubMed), confirming that adaptive allocation is essential. Disabling both state signals ($\alpha{=}\beta{=}0$) yields similar degradation, showing that the planner's effectiveness depends on accurate difficulty estimates. Comparing the two signals: spectral demand ($\alpha$) provides consistent gains across all graphs, while interference ($\beta$) matters most on heterophilic datasets (Chameleon, Squirrel), where cross-task conflicts are stronger. Finally, replacing the PID controller with i.i.d.\ sampling from $\mathbf{f}$ degrades performance by 1--2\%, validating that deficit tracking improves allocation fidelity.

\textbf{Diagnostics.}
Fig.~\ref{fig:main_diagnostic} makes \modelname{}'s closed-loop behavior explicit on Cora: objective losses plateau at different rates (panel a), the planner adapts allocation away from uniform in response (panel b), and the controller tracks the planned budget so every objective retains coverage (panel c). App.~\ref{app:diagnostics} provides further visualizations---task-selection timelines (Fig.~\ref{fig:app_task_timeline}), deficit traces (Fig.~\ref{fig:app_deficits}), per-task state trajectories (Fig.~\ref{fig:app_states_cora}), pairwise conflict structure (Fig.~\ref{fig:app_conflict}), and schedule differentiation (Fig.~\ref{fig:app_entropy})---confirming that state estimates adapt to training dynamics and that scheduling becomes increasingly differentiated over time.

\textbf{Implementation and efficiency.}
We use a fixed GNN backbone (GCN/GAT variants) across all methods; for \modelname{}, each block executes $B_{\text{block}}$ steps on a single pretext task and state estimation runs every $u$ blocks on the full graph (or a memory-efficient probe for ogbn-arxiv). Crucially, \modelname{} remains practical: its amortized overhead keeps it $5$--$15\times$ faster than per-step mixing methods such as \textsc{AutoSSL} and \textsc{ParetoGNN} (Fig.~\ref{fig:timing_main}, App.~\ref{app:compute}). Full hyperparameters and implementation details are in App.~\ref{app:hparams}.
\section{Conclusion}
\label{sec:conclusion}

We introduced \modelname{}, a control-theoretic framework that, to the best of our knowledge, is the first to cast multi-objective graph self-supervised learning as a closed-loop scheduling problem with explicit allocation tracking. Rather than forcing every parameter update to blend signals from competing objectives---a design that leads to \emph{Disagreement} (conflict-induced negative transfer), \emph{Drift} (nonstationary objective utility), and \emph{Drought} (hidden starvation of underserved objectives)---\modelname{} coordinates pretext tasks through feedback-controlled temporal allocation: estimating per-objective difficulty via spectral demand and interference signals, planning target budgets via a Pareto-aware log-hypervolume planner, and executing discrete single-task blocks with a deficit-tracking PID controller. Across nine datasets spanning homophilic, heterophilic, and large-scale regimes, \modelname{} achieves the best average rank on node classification, link prediction, and node clustering, with particularly strong gains where gradient conflicts are most pronounced. The training process is fully auditable, revealing which objectives drove learning and when---providing both performance improvements and interpretability that we hope will serve as a useful template for principled multi-task learning more broadly.

\section*{Impact Statement}

This work advances self-supervised representation learning on graphs by introducing a principled framework for coordinating multiple pretext objectives. Improved graph representations have broad applicability across scientific, industrial, and societal domains: molecular and protein embeddings can accelerate drug discovery; social network embeddings can improve community detection and misinformation identification; and the label-efficiency of self-supervised methods is valuable where annotation is expensive. The interpretability of \modelname{}'s scheduling mechanism---auditing which objectives drove learning---represents a step toward more transparent machine learning systems.

However, improved embeddings could be misused for invasive profiling, and GNNs may inherit biases from underlying graph structures. We encourage fairness audits before deployment in sensitive applications. All experiments used publicly available benchmarks, and we do not foresee dual-use concerns beyond those that already accompany graph representation learning more generally.

\newpage

\bibliography{example_paper}

@article{kipf2016semi,
  title={Semi-supervised classification with graph convolutional networks},
  author={Kipf, TN},
  journal={arXiv preprint arXiv:1609.02907},
  year={2016}
}

@article{velivckovic2017graph,
  title={Graph attention networks},
  author={Veli{\v{c}}kovi{\'c}, Petar and Cucurull, Guillem and Casanova, Arantxa and Romero, Adriana and Lio, Pietro and Bengio, Yoshua},
  journal={arXiv preprint arXiv:1710.10903},
  year={2017}
}

@article{velivckovic2018deep,
  title={Deep graph infomax},
  author={Veli{\v{c}}kovi{\'c}, Petar and Fedus, William and Hamilton, William L and Li{\`o}, Pietro and Bengio, Yoshua and Hjelm, R Devon},
  journal={arXiv preprint arXiv:1809.10341},
  year={2018}
}

@article{you2020graph,
  title={Graph contrastive learning with augmentations},
  author={You, Yuning and Chen, Tianlong and Sui, Yongduo and Chen, Ting and Wang, Zhangyang and Shen, Yang},
  journal={Advances in neural information processing systems},
  volume={33},
  pages={5812--5823},
  year={2020}
}

@article{zhu2020deep,
  title={Deep graph contrastive representation learning},
  author={Zhu, Yanqiao and Xu, Yichen and Yu, Feng and Liu, Qiang and Wu, Shu and Wang, Liang},
  journal={arXiv preprint arXiv:2006.04131},
  year={2020}
}

@inproceedings{hou2022graphmae,
  title={Graphmae: Self-supervised masked graph autoencoders},
  author={Hou, Zhenyu and Liu, Xiao and Cen, Yukuo and Dong, Yuxiao and Yang, Hongxia and Wang, Chunjie and Tang, Jie},
  booktitle={Proceedings of the 28th ACM SIGKDD conference on knowledge discovery and data mining},
  pages={594--604},
  year={2022}
}

@article{jin2021automated,
  title={Automated self-supervised learning for graphs},
  author={Jin, Wei and Liu, Xiaorui and Zhao, Xiangyu and Ma, Yao and Shah, Neil and Tang, Jiliang},
  journal={arXiv preprint arXiv:2106.05470},
  year={2021}
}

@article{ju2022multi,
  title={Multi-task self-supervised graph neural networks enable stronger task generalization},
  author={Ju, Mingxuan and Zhao, Tong and Wen, Qianlong and Yu, Wenhao and Shah, Neil and Ye, Yanfang and Zhang, Chuxu},
  journal={arXiv preprint arXiv:2210.02016},
  year={2022}
}

@article{fan2024decoupling,
  title={Decoupling weighing and selecting for integrating multiple graph pre-training tasks},
  author={Fan, Tianyu and Wu, Lirong and Huang, Yufei and Lin, Haitao and Tan, Cheng and Gao, Zhangyang and Li, Stan Z},
  journal={arXiv preprint arXiv:2403.01400},
  year={2024}
}

@article{yu2020gradient,
  title={Gradient surgery for multi-task learning},
  author={Yu, Tianhe and Kumar, Saurabh and Gupta, Abhishek and Levine, Sergey and Hausman, Karol and Finn, Chelsea},
  journal={Advances in neural information processing systems},
  volume={33},
  pages={5824--5836},
  year={2020}
}

@article{liu2021conflict,
  title={Conflict-averse gradient descent for multi-task learning},
  author={Liu, Bo and Liu, Xingchao and Jin, Xiaojie and Stone, Peter and Liu, Qiang},
  journal={Advances in Neural Information Processing Systems},
  volume={34},
  pages={18878--18890},
  year={2021}
}

@article{nt2019revisiting,
  title={Revisiting graph neural networks: All we have is low-pass filters},
  author={Nt, Hoang and Maehara, Takanori},
  journal={arXiv preprint arXiv:1905.09550},
  year={2019}
}

@article{sener2018multi,
  title={Multi-task learning as multi-objective optimization},
  author={Sener, Ozan and Koltun, Vladlen},
  journal={Advances in neural information processing systems},
  volume={31},
  year={2018}
}

@inproceedings{Hassani2020ContrastiveMR,
  title={Contrastive Multi-View Representation Learning on Graphs},
  author={Kaveh Hassani and Amir Hosein Khas Ahmadi},
  booktitle={International Conference on Machine Learning},
  year={2020},
  url={https://api.semanticscholar.org/CorpusID:219558740}
}

@inproceedings{zhu2021graph,
  title={Graph contrastive learning with adaptive augmentation},
  author={Zhu, Yanqiao and Xu, Yichen and Yu, Feng and Liu, Qiang and Wu, Shu and Wang, Liang},
  booktitle={Proceedings of the web conference 2021},
  pages={2069--2080},
  year={2021}
}

@article{thakoor2021large,
  title={Large-scale representation learning on graphs via bootstrapping},
  author={Thakoor, Shantanu and Tallec, Corentin and Azar, Mohammad Gheshlaghi and Azabou, Mehdi and Dyer, Eva L and Munos, Remi and Veli{\v{c}}kovi{\'c}, Petar and Valko, Michal},
  journal={arXiv preprint arXiv:2102.06514},
  year={2021}
}

@article{liu2022graph,
  title={Graph self-supervised learning: A survey},
  author={Liu, Yixin and Jin, Ming and Pan, Shirui and Zhou, Chuan and Zheng, Yu and Xia, Feng and Yu, Philip S},
  journal={IEEE transactions on knowledge and data engineering},
  volume={35},
  number={6},
  pages={5879--5900},
  year={2022},
  publisher={IEEE}
}

@article{ju2024towards,
  title={Towards graph contrastive learning: A survey and beyond},
  author={Ju, Wei and Wang, Yifan and Qin, Yifang and Mao, Zhengyang and Xiao, Zhiping and Luo, Junyu and Yang, Junwei and Gu, Yiyang and Wang, Dongjie and Long, Qingqing and others},
  journal={arXiv preprint arXiv:2405.11868},
  year={2024}
}

@article{navon2022multi,
  title={Multi-task learning as a bargaining game},
  author={Navon, Aviv and Shamsian, Aviv and Achituve, Idan and Maron, Haggai and Kawaguchi, Kenji and Chechik, Gal and Fetaya, Ethan},
  journal={arXiv preprint arXiv:2202.01017},
  year={2022}
}

@inproceedings{chen2018gradnorm,
  title={Gradnorm: Gradient normalization for adaptive loss balancing in deep multitask networks},
  author={Chen, Zhao and Badrinarayanan, Vijay and Lee, Chen-Yu and Rabinovich, Andrew},
  booktitle={International conference on machine learning},
  pages={794--803},
  year={2018},
  organization={PMLR}
}

@inproceedings{kendall2018multi,
  title={Multi-task learning using uncertainty to weigh losses for scene geometry and semantics},
  author={Kendall, Alex and Gal, Yarin and Cipolla, Roberto},
  booktitle={Proceedings of the IEEE conference on computer vision and pattern recognition},
  pages={7482--7491},
  year={2018}
}

@inproceedings{guo2018dynamic,
  title={Dynamic task prioritization for multitask learning},
  author={Guo, Michelle and Haque, Albert and Huang, De-An and Yeung, Serena and Fei-Fei, Li},
  booktitle={Proceedings of the European conference on computer vision (ECCV)},
  pages={270--287},
  year={2018}
}

@article{Laforgue2022AdaTaskAM,
  title={AdaTask: Adaptive Multitask Online Learning},
  author={Pierre Laforgue and Andrea Della Vecchia and Nicol{\`o} Cesa-Bianchi and Lorenzo Rosasco},
  journal={ArXiv},
  year={2022},
  volume={abs/2205.15802},
  url={https://api.semanticscholar.org/CorpusID:249209943}
}

@inproceedings{Yang2016RevisitingSL,
  title={Revisiting Semi-Supervised Learning with Graph Embeddings},
  author={Zhilin Yang and William W. Cohen and Ruslan Salakhutdinov},
  booktitle={International Conference on Machine Learning},
  year={2016},
  url={https://api.semanticscholar.org/CorpusID:7008752}
}

@article{Shchur2018PitfallsOG,
  title={Pitfalls of Graph Neural Network Evaluation},
  author={Oleksandr Shchur and Maximilian Mumme and Aleksandar Bojchevski and Stephan G{\"u}nnemann},
  journal={ArXiv},
  year={2018},
  volume={abs/1811.05868},
  url={https://api.semanticscholar.org/CorpusID:53303554}
}

@article{Rozemberczki2019MultiscaleAN,
  title={Multi-scale Attributed Node Embedding},
  author={Benedek Rozemberczki and Carl Allen and Rik Sarkar},
  journal={ArXiv},
  year={2019},
  volume={abs/1909.13021},
  url={https://api.semanticscholar.org/CorpusID:203593476}
}

@article{Pei2020GeomGCNGG,
  title={Geom-GCN: Geometric Graph Convolutional Networks},
  author={Hongbin Pei and Bingzhen Wei and Kevin Chen-Chuan Chang and Yu Lei and Bo Yang},
  journal={ArXiv},
  year={2020},
  volume={abs/2002.05287},
  url={https://api.semanticscholar.org/CorpusID:210843644}
}

@article{Hu2020OpenGB,
  title={Open Graph Benchmark: Datasets for Machine Learning on Graphs},
  author={Weihua Hu and Matthias Fey and Marinka Zitnik and Yuxiao Dong and Hongyu Ren and Bowen Liu and Michele Catasta and Jure Leskovec},
  journal={ArXiv},
  year={2020},
  volume={abs/2005.00687},
  url={https://api.semanticscholar.org/CorpusID:218487328}
}

@article{Mernyei2020WikiCSAW,
  title={Wiki-CS: A Wikipedia-Based Benchmark for Graph Neural Networks},
  author={P{\'e}ter Mernyei and Cătălina Cangea},
  journal={ArXiv},
  year={2020},
  volume={abs/2007.02901},
  url={https://api.semanticscholar.org/CorpusID:220364329}
}

@article{guerreiro2020hypervolume,
  title={The hypervolume indicator: Problems and algorithms},
  author={Guerreiro, Andreia P and Fonseca, Carlos M and Paquete, Lu{\'\i}s},
  journal={arXiv preprint arXiv:2005.00515},
  year={2020}
}

@article{shreedhar1996efficient,
  title={Efficient fair queuing using deficit round-robin},
  author={Shreedhar, Madhavapeddi and Varghese, George},
  journal={IEEE/ACM Transactions on networking},
  volume={4},
  number={3},
  pages={375--385},
  year={1996},
  publisher={IEEE}
}

@article{von2007tutorial,
  title={A tutorial on spectral clustering},
  author={Von Luxburg, Ulrike},
  journal={Statistics and computing},
  volume={17},
  number={4},
  pages={395--416},
  year={2007},
  publisher={Springer}
}

@inproceedings{Fang2024ExploringCO,
  title={Exploring Correlations of Self-supervised Tasks for Graphs},
  author={Taoran Fang and Wei Zhou and Yifei Sun and Kaiqiao Han and Lvbin Ma and Yang Yang},
  booktitle={International Conference on Machine Learning},
  year={2024}
}
\bibliographystyle{icml2026}

\appendix
\onecolumn


\section*{Appendix: Table of Contents}
\addcontentsline{toc}{section}{Appendix: Table of Contents}

\noindent\textbf{Appendix Contents:}
\vspace{0.5em}
\begin{itemize}[leftmargin=1.5em, itemsep=0.3em]
  \item[\textbf{A}] \hyperref[app:state-estimation]{\textbf{State Estimation: Theory and Proofs}}
  \begin{itemize}[leftmargin=1.5em, itemsep=0.1em, label={--}]
    \item \hyperref[app:laplacian-prelim]{A.1 Normalized Laplacian preliminaries}
    \item \hyperref[app:dirichlet]{A.2 Dirichlet energy identities}
    \item \hyperref[app:rq-spectrum]{A.3 Spectral meaning of the Rayleigh quotient}
    \item \hyperref[app:rq-difficulty]{A.4 High RQ implies reduced progress under low-pass attenuation}
    \item \hyperref[app:mgda-ps]{A.5 Interference and MGDA relevance}
    \item \hyperref[app:mgda-weights]{A.6 Why MGDA weights identify locally relevant objectives}
    \item \hyperref[app:composite]{A.7 Composite difficulty aggregation}
  \end{itemize}
  \item[\textbf{B}] \hyperref[app:hv]{\textbf{Log-Hypervolume Planning: Definitions and Proofs}}
  \begin{itemize}[leftmargin=1.5em, itemsep=0.1em, label={--}]
    \item \hyperref[app:hv-singleton]{B.1 Preliminaries: dominance and hypervolume}
    \item \hyperref[app:hv-pareto]{B.2 Pareto compliance for fixed reference point}
    \item \hyperref[app:hv-sens]{B.3 Log-hypervolume sensitivities}
    \item \hyperref[app:hv-ref]{B.4 Reference point specification and positivity}
    \item \hyperref[app:hv-alloc-derivation]{B.5 Deriving the allocation plan}
  \end{itemize}
  \item[\textbf{C}] \hyperref[app:control-details]{\textbf{Tactical Execution: Deficit Dynamics and Guarantees}}
  \begin{itemize}[leftmargin=1.5em, itemsep=0.1em, label={--}]
    \item \hyperref[app:deficit-dynamics]{C.1 Deficit dynamics and basic identities}
    \item \hyperref[app:max-deficit]{C.2 Worst-case tracking guarantee in the max-deficit limit}
    \item \hyperref[app:pid-softmax]{C.3 Relation to the PID-softmax controller}
    \item \hyperref[app:exploration]{C.4 Starvation prevention under explicit exploration}
    \item \hyperref[app:stabilizers]{C.5 Implementation stabilizers}
  \end{itemize}
  \item[\textbf{D}] \hyperref[app:experiments]{\textbf{Additional Experimental Details}}
  \begin{itemize}[leftmargin=1.5em, itemsep=0.1em, label={--}]
    \item \hyperref[app:pretext_tasks]{D.1 Pretext task objectives}
    \item \hyperref[app:dataset_stats]{D.2 Dataset statistics and preprocessing}
    \item \hyperref[app:eval_protocols]{D.3 Evaluation protocols}
    \item \hyperref[app:hparams]{D.4 Hyperparameters and reproducibility}
    \item \hyperref[app:lp_results]{D.5 Additional results: link prediction}
    \item \hyperref[app:clustering_results]{D.6 Additional results: node clustering}
    \item \hyperref[app:ablations]{D.7 Ablations}
    \item \hyperref[app:compute]{D.8 Compute and overhead}
    \item \hyperref[app:diagnostics]{D.9 Diagnostics: scheduling, tracking, and state}
    \item \hyperref[app:reviewer_additions]{D.10 Additional sensitivity and robustness analyses}
  \end{itemize}
\end{itemize}

\vspace{1em}
\noindent\textit{See next page for a \hyperref[app:notation]{notation summary}.}

\newpage

\section*{Notation Summary}
\label{app:notation}
\addcontentsline{toc}{section}{Notation Summary}

\noindent Key notation used throughout the paper, organized by category.

\vspace{0.3em}
\begin{center}
\footnotesize
\renewcommand{\arraystretch}{1.05}
\begin{tabular}{@{}lp{5.8cm}@{\hspace{0.8cm}}lp{5.8cm}@{}}
\toprule
\textbf{Symbol} & \textbf{Description} & \textbf{Symbol} & \textbf{Description} \\
\midrule
\multicolumn{4}{@{}l}{\textit{Graph, Encoder, and Spectral Quantities}} \\
$G=(V,E,\mathbf{X})$ & Attributed graph & $\mathbf{A}, \mathbf{D}$ & Adjacency, degree matrices \\
$n=|V|$ & Number of nodes & $\mathbf{X}\in\mathbb{R}^{n\times d}$ & Node features \\
$f_\theta$, $\mathbf{Z}\in\mathbb{R}^{n\times h}$ & Encoder, embeddings & $z_v$ & Embedding of node $v$ \\
$\tilde{\mathbf{L}}$ & Normalized Laplacian & $\mathcal{E}(\mathbf{H})$ & Dirichlet energy \\
$\mathrm{RQ}_k$ & Rayleigh quotient (spectral demand) & $H_k\in\mathbb{R}^{n\times h}$ & Representation-gradient field \\
\midrule
\multicolumn{4}{@{}l}{\textit{Objectives, Losses, and Gradients}} \\
$K$ & Number of objectives & $\mathcal{T}_k$ & Pretext task $k$ \\
$L_k(\theta)$ & Loss (deterministic) & $\mathcal{L}_k(\theta)$ & Expected loss (stochastic) \\
$\tilde{L}_k$, $\tilde{\boldsymbol{\ell}}$ & Normalized loss, vector & $g_k=\nabla_\theta L_k$ & Parameter gradient \\
$\hat{g}_k$ & Unit-normalized gradient & $\Delta^K$ & Probability simplex \\
$\lambda^\star$, $g_{\mathrm{mix}}$ & MGDA weights, mixed gradient & $\mathrm{Conf}_k$ & Interference score \\
\midrule
\multicolumn{4}{@{}l}{\textit{State Estimation and Planning}} \\
$D_k$ & Difficulty state & $\alpha,\beta,\gamma,\rho$ & Difficulty hyperparams \\
$\mathbf{r}(t)$ & HV reference point & $\mathrm{HV}(t)$, $\phi(t)$ & Hypervolume, log-HV \\
$w_k^{\mathrm{HV}}$ & Log-HV sensitivity & $a_k$ & Adjusted priority \\
$f(t)\in\Delta^K$ & Allocation plan & $\delta$, $\varepsilon$ & Margin, stabilizer \\
\midrule
\multicolumn{4}{@{}l}{\textit{Timescales and Control}} \\
$t$, $T$ & Epoch index, total epochs & $m$, $M$ & Block index, blocks/epoch \\
$u$ & Sensing period & $B_{\mathrm{ep}}$, $B_{\text{block}}$ & Batches/epoch, batches/block \\
$N_k(m)$ & Cumulative task-$k$ count & $N_k^{\mathrm{ref}}(m)$ & Reference count \\
$e_k(m)$ & Pre-decision deficit & $I_k(m)$, $\Delta e_k$ & Integral, derivative \\
$\nu_k(m)$ & PID control logit & $K_P, K_I, K_D$ & PID gains \\
$\tau$, $\epsilon$ & Temperature, exploration & $k_{t,m}$ & Selected task \\
\midrule
\multicolumn{4}{@{}l}{\textit{Miscellaneous}} \\
$\eta$ & Learning rate & $\bbone\{\cdot\}$ & Indicator function \\
$\clip(x,[a,b])$ & Clipping function & $[x]_+$ & Positive part \\
\bottomrule
\end{tabular}
\end{center}

\vspace{0.5em}
\noindent\textbf{Note:} We use $\tilde{\boldsymbol{\ell}}$ for the normalized loss vector to distinguish from the normalized Laplacian $\tilde{\mathbf{L}}$.

\newpage

\section{State Estimation: Theory and Proofs}
\label{app:state-estimation}

This appendix provides formal derivations and proofs for the state-estimation quantities used in Sec.~\ref{sec:state}. Throughout, we assume an undirected graph unless stated otherwise so that the normalized Laplacian is symmetric PSD and admits an orthonormal eigenbasis.

\subsection{Normalized Laplacian preliminaries}
\label{app:laplacian-prelim}

Let $G=(V,E)$ be an undirected weighted graph with symmetric adjacency $\mathbf{A}\in\mathbb{R}^{n\times n}$ ($n=|V|$), degrees $d_i=\sum_{j}A_{ij}$, and $\mathbf{D}=\mathrm{diag}(d_1,\dots,d_n)$. The symmetric normalized Laplacian is
\[
\tilde{\mathbf{L}} := \mathbf{I} - \mathbf{D}^{-1/2}\mathbf{A}\mathbf{D}^{-1/2}.
\]

\begin{claim}[Eigenvalue range of the normalized Laplacian]
\label{clm:normlap-range}
All eigenvalues of $\tilde{\mathbf{L}}$ lie in $[0,2]$.
\end{claim}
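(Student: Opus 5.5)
The plan is to bound the Rayleigh quotient of $\tilde{\mathbf{L}}$ from both sides using the standard edge-wise quadratic-form identities. Since $\mathbf{A}$ is symmetric, $\tilde{\mathbf{L}}$ is real symmetric. Its eigenvalues are therefore real, and each equals $x^\top\tilde{\mathbf{L}}x/x^\top x$ at the corresponding unit eigenvector. It thus suffices to show $0\le x^\top\tilde{\mathbf{L}}x\le 2\,x^\top x$ for all $x\in\mathbb{R}^n$.

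We will assume $d_i>0$ for every $i$, with nonnegative weights $A_{ij}\ge 0$. Isolated nodes would either be excluded or handled by the convention $d_i^{-1/2}:=0$; the corresponding rows of $\tilde{\mathbf{L}}$ are then those of $\mathbf{I}$ and contribute eigenvalue $1$. Under this assumption, substitute $y=\mathbf{D}^{-1/2}x$, i.e.\ $x_i=\sqrt{d_i}\,y_i$. Using $d_i=\sum_j A_{ij}$ and symmetrizing over ordered pairs, we obtain
\[
x^\top x=\sum_i d_i y_i^2=\tfrac12\sum_{i,j}A_{ij}(y_i^2+y_j^2),\qquad x^\top\mathbf{D}^{-1/2}\mathbf{A}\mathbf{D}^{-1/2}x=\sum_{i,j}A_{ij}y_iy_j .
\]

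These two identities give the quadratic forms
\[
x^\top\tilde{\mathbf{L}}x=\tfrac12\sum_{i,j}A_{ij}(y_i-y_j)^2\ \ge 0,\qquad x^\top(2\mathbf{I}-\tilde{\mathbf{L}})x=\tfrac12\sum_{i,j}A_{ij}(y_i+y_j)^2\ \ge 0 .
\]
Since $A_{ij}\ge 0$, both are nonnegative, so $0\le x^\top\tilde{\mathbf{L}}x\le 2\,x^\top x$. Applying this to an eigenvector gives $\lambda\in[0,2]$.

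The only real obstacle is bookkeeping. One must get the factor $\tfrac12$ right when the sum runs over ordered pairs versus edges, and treat zero-degree nodes explicitly; the weights being nonnegative is essential for both inequalities. As a cross-check, the constant vector in $y$ attains $0$, and a bipartite sign pattern $y_i=\pm1$ across sides attains $2$. This shows the interval is tight, though tightness is not needed for the claim.
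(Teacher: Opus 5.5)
Your proof is correct and is essentially the paper's argument. Both substitute $y=\mathbf{D}^{-1/2}x$ and bound $\sum_{i,j}A_{ij}y_iy_j$ edgewise by $\sum_i d_i y_i^2$: the paper uses $2|ab|\le a^2+b^2$ to get $\|\mathbf{D}^{-1/2}\mathbf{A}\mathbf{D}^{-1/2}\|_2\le 1$, while your two sum-of-squares identities $(y_i\mp y_j)^2\ge 0$ are just the two signs of that same inequality, applied directly to $\tilde{\mathbf{L}}$ and $2\mathbf{I}-\tilde{\mathbf{L}}$.
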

\begin{proof}
Let $\tilde{\mathbf{S}}:=\mathbf{D}^{-1/2}\mathbf{A}\mathbf{D}^{-1/2}$, so $\tilde{\mathbf{L}}=\mathbf{I}-\tilde{\mathbf{S}}$. Since $\tilde{\mathbf{S}}$ is symmetric, its spectral radius equals $\|\tilde{\mathbf{S}}\|_2 = \max_{\|x\|_2=1} |x^\top \tilde{\mathbf{S}} x|$.
For any $x\in\mathbb{R}^n$, define $y:=\mathbf{D}^{-1/2}x$. Then
\[
x^\top \tilde{\mathbf{S}} x
= x^\top \mathbf{D}^{-1/2}\mathbf{A}\mathbf{D}^{-1/2}x
= y^\top \mathbf{A} y
= \sum_{i,j} A_{ij} y_i y_j.
\]
Assuming no self-loops (or absorbing them into the diagonal similarly), apply the inequality $2|ab|\le a^2+b^2$ edgewise to bound the absolute value:
\[
|A_{ij} y_i y_j| \;\le\; \frac{A_{ij}}{2}(y_i^2+y_j^2).
\]
Summing over all pairs yields
\[
|y^\top \mathbf{A} y|
\le \sum_{i,j} |A_{ij} y_i y_j|
\le \frac{1}{2}\sum_{i,j}A_{ij}(y_i^2+y_j^2)
= \sum_i \left(\sum_j A_{ij}\right) y_i^2
= \sum_i d_i y_i^2
= x^\top x.
\]
Thus $|x^\top \tilde{\mathbf{S}} x| \le x^\top x$ for all $x$, which directly implies $\|\tilde{\mathbf{S}}\|_2\le 1$. Equivalently, all eigenvalues of $\tilde{\mathbf{S}}$ lie in $[-1,1]$. Therefore eigenvalues of $\tilde{\mathbf{L}}=\mathbf{I}-\tilde{\mathbf{S}}$ lie in $[0,2]$. (This is also a standard spectral graph theory fact.)
\end{proof}

\subsection{Dirichlet energy identities}
\label{app:dirichlet}

\begin{claim}[Edge-domain form of the normalized Laplacian quadratic form]
\label{clm:dirichlet-edge}
For any vector $x\in\mathbb{R}^n$,
\[
x^\top \tilde{\mathbf{L}} x
=
\frac{1}{2}\sum_{i,j=1}^n A_{ij}
\left(\frac{x_i}{\sqrt{d_i}}-\frac{x_j}{\sqrt{d_j}}\right)^2.
\]
\end{claim}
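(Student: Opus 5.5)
The plan is to expand the right-hand side and match it term by term with $x^\top\tilde{\mathbf{L}}x = x^\top x - x^\top \mathbf{D}^{-1/2}\mathbf{A}\mathbf{D}^{-1/2}x$. As in the proof of Claim~\ref{clm:normlap-range}, I substitute $y:=\mathbf{D}^{-1/2}x$, i.e.\ $y_i = x_i/\sqrt{d_i}$, assuming every $d_i>0$ (isolated nodes are excluded or handled by the usual convention). The right-hand side then becomes $\tfrac12\sum_{i,j}A_{ij}(y_i-y_j)^2$, and the off-diagonal part of the left-hand side becomes $y^\top\mathbf{A}y=\sum_{i,j}A_{ij}y_iy_j$.

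The main step is expanding the square:
\[
\tfrac12\sum_{i,j}A_{ij}(y_i-y_j)^2 = \tfrac12\sum_{i,j}A_{ij}y_i^2 + \tfrac12\sum_{i,j}A_{ij}y_j^2 - \sum_{i,j}A_{ij}y_iy_j .
\]
Next I use the symmetry $A_{ij}=A_{ji}$ to relabel indices in the second sum, so it equals the first. Summing over $j$ with $d_i=\sum_j A_{ij}$ then gives
\[
\sum_i d_i y_i^2 = \sum_i x_i^2 = x^\top x .
\]
The cross term is exactly $y^\top\mathbf{A}y = x^\top\mathbf{D}^{-1/2}\mathbf{A}\mathbf{D}^{-1/2}x$. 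Subtracting the cross term from $x^\top x$ reproduces $x^\top\tilde{\mathbf{L}}x$.

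The only point needing care is bookkeeping rather than a real obstacle. The factor $\tfrac12$ compensates for the double sum over ordered pairs counting each edge twice. Self-loops contribute zero to the right-hand side, while $d_i$ still includes $A_{ii}$. The computation remains valid for self-loops because $A_{ii}(y_i-y_i)^2=0$ and the expansion is purely algebraic.

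For the directed case, I would note that the identity is applied to the symmetrized adjacency, which restores $A_{ij}=A_{ji}$. As a remark, nonnegativity of the right-hand side immediately gives $\tilde{\mathbf{L}}\succeq 0$. This is consistent with Claim~\ref{clm:normlap-range}, and summing the identity over the columns of $\mathbf{H}$ gives the edge form of the Dirichlet energy $\mathcal{E}(\mathbf{H})$.
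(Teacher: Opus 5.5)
Your proposal is correct and follows essentially the same route as the paper: both substitute $y=\mathbf{D}^{-1/2}x$, use $d_i=\sum_j A_{ij}$ and the symmetry $A_{ij}=A_{ji}$ to write $\sum_i d_i y_i^2$ as two equal half-sums, and identify the cross term with $y^\top\mathbf{A}y$, with the only difference being that you expand from the right-hand side toward the left rather than the reverse. Your remarks on requiring $d_i>0$ and on self-loops cancelling consistently are sound details that the paper leaves implicit.
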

\begin{proof}
Let $y:=\mathbf{D}^{-1/2}x$. Then
\[
x^\top \tilde{\mathbf{L}} x
= x^\top x - x^\top \mathbf{D}^{-1/2}\mathbf{A}\mathbf{D}^{-1/2}x
= y^\top \mathbf{D} y - y^\top \mathbf{A} y.
\]
Expand each term:
\[
y^\top \mathbf{D} y = \sum_{i=1}^n d_i y_i^2,
\qquad
y^\top \mathbf{A} y = \sum_{i,j=1}^n A_{ij} y_i y_j.
\]
Use $d_i=\sum_{j}A_{ij}$ to rewrite $\sum_i d_i y_i^2 = \sum_{i,j} A_{ij} y_i^2$. By symmetry of $A$, also $\sum_{i,j}A_{ij} y_i^2 = \sum_{i,j}A_{ij} y_j^2$. Hence
\[
\sum_i d_i y_i^2
= \frac{1}{2}\sum_{i,j}A_{ij}y_i^2 + \frac{1}{2}\sum_{i,j}A_{ij}y_j^2.
\]
Therefore
\begin{align*}
x^\top \tilde{\mathbf{L}} x
&= \frac{1}{2}\sum_{i,j}A_{ij}y_i^2 + \frac{1}{2}\sum_{i,j}A_{ij}y_j^2 - \sum_{i,j}A_{ij}y_i y_j \\
&= \frac{1}{2}\sum_{i,j}A_{ij}\big(y_i^2+y_j^2-2y_i y_j\big)
= \frac{1}{2}\sum_{i,j}A_{ij}(y_i-y_j)^2.
\end{align*}
Substituting $y_i=x_i/\sqrt{d_i}$ yields the claim.
\end{proof}

\begin{claim}[Matrix-signal extension]
\label{clm:dirichlet-matrix}
For any matrix signal $H\in\mathbb{R}^{n\times h}$ with rows $H_{i,:}$,
\[
\mathrm{tr}(H^\top \tilde{\mathbf{L}} H)
=
\frac{1}{2}\sum_{i,j=1}^n A_{ij}
\left\|
\frac{H_{i,:}}{\sqrt{d_i}}-\frac{H_{j,:}}{\sqrt{d_j}}
\right\|_2^2.
\]
\end{claim}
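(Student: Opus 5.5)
The plan is to reduce the matrix identity to the scalar identity of Claim~\ref{clm:dirichlet-edge}, applied column by column. Write $H=[h^{(1)},\dots,h^{(h)}]$ with columns $h^{(c)}\in\mathbb{R}^n$. The diagonal entries of $H^\top\tilde{\mathbf{L}}H$ are $(h^{(c)})^\top\tilde{\mathbf{L}}h^{(c)}$. Hence
\[
\mathrm{tr}(H^\top\tilde{\mathbf{L}}H)=\sum_{c=1}^{h}(h^{(c)})^\top\tilde{\mathbf{L}}\,h^{(c)},
\]
which follows directly from the definition of the trace and of matrix multiplication.

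Next, I apply Claim~\ref{clm:dirichlet-edge} to each column $x=h^{(c)}$. This gives
\[
(h^{(c)})^\top\tilde{\mathbf{L}}\,h^{(c)}=\frac{1}{2}\sum_{i,j}A_{ij}\left(\frac{H_{ic}}{\sqrt{d_i}}-\frac{H_{jc}}{\sqrt{d_j}}\right)^2 .
\]
Summing over $c$ and exchanging the finite sums over $c$ and over $(i,j)$ yields
\[
\frac{1}{2}\sum_{i,j}A_{ij}\sum_{c}\left(\frac{H_{ic}}{\sqrt{d_i}}-\frac{H_{jc}}{\sqrt{d_j}}\right)^2 .
\]
The inner sum over $c$ is exactly $\bigl\|H_{i,:}/\sqrt{d_i}-H_{j,:}/\sqrt{d_j}\bigr\|_2^2$, which is the claimed expression.

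No step here is a real obstacle; the argument is bookkeeping. The only point needing care is the same one as in Claim~\ref{clm:dirichlet-edge}: nodes with $d_i=0$ must be excluded, or $\mathbf{D}^{-1/2}$ must be interpreted with the convention $0^{-1/2}:=0$. I would state this standing assumption of no isolated nodes explicitly. Under it, the scalar claim holds verbatim for every column.
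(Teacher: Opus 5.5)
Your proof is correct and follows the paper's argument exactly: split the trace into the column quadratic forms $(h^{(c)})^\top\tilde{\mathbf{L}}h^{(c)}$, apply Claim~\ref{clm:dirichlet-edge} to each column, and swap the finite sums so that the inner sum becomes the squared row-difference norm. One small correction to your side remark: the convention $0^{-1/2}:=0$ would not rescue the identity, because for an isolated node $i$ the $\mathbf{I}$ term in $\tilde{\mathbf{L}}$ still contributes $\|H_{i,:}\|_2^2$ to the left-hand side, while every right-hand term involving that $i$ has $A_{ij}=0$; the no-isolated-nodes assumption (implicit in the paper's step $x^\top x = y^\top\mathbf{D}y$) is therefore the hypothesis you actually need.
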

\begin{proof}
Write $H=[x^{(1)},\dots,x^{(h)}]$ by columns. Then
\[
\mathrm{tr}(H^\top \tilde{\mathbf{L}} H)
=\sum_{\ell=1}^h (x^{(\ell)})^\top \tilde{\mathbf{L}} x^{(\ell)}.
\]
Apply Claim~\ref{clm:dirichlet-edge} to each column $x^{(\ell)}$ and sum over $\ell$:
\[
\sum_{\ell=1}^h \frac{1}{2}\sum_{i,j}A_{ij}\left(\frac{x^{(\ell)}_i}{\sqrt{d_i}}-\frac{x^{(\ell)}_j}{\sqrt{d_j}}\right)^2
=
\frac{1}{2}\sum_{i,j}A_{ij}\sum_{\ell=1}^h\left(\frac{x^{(\ell)}_i}{\sqrt{d_i}}-\frac{x^{(\ell)}_j}{\sqrt{d_j}}\right)^2,
\]
and the inner sum is exactly the squared Euclidean norm of the row difference, proving the result.
\end{proof}

\subsection{Spectral meaning of the Rayleigh quotient}
\label{app:rq-spectrum}

\begin{claim}[Eigenbasis decomposition for matrix Dirichlet energy]
\label{clm:rq-eig}
Let $\tilde{\mathbf{L}}=\mathbf{U}\boldsymbol{\Lambda}\mathbf{U}^\top$ with orthonormal $\mathbf{U}$ and $\boldsymbol{\Lambda}=\mathrm{diag}(\lambda_1,\dots,\lambda_n)$. Then for any $H\in\mathbb{R}^{n\times h}$,
\[
\mathrm{tr}(H^\top \tilde{\mathbf{L}} H)
=
\sum_{i=1}^n \lambda_i\,\|\mathbf{U}_i^\top H\|_2^2,
\qquad
\|H\|_F^2
=
\sum_{i=1}^n \|\mathbf{U}_i^\top H\|_2^2.
\]
Consequently, for $H\neq 0$ and ignoring the stabilizer,
\[
\frac{\mathrm{tr}(H^\top \tilde{\mathbf{L}} H)}{\|H\|_F^2}
=
\frac{\sum_{i=1}^n \lambda_i\,w_i}{\sum_{i=1}^n w_i},
\quad
w_i:=\|\mathbf{U}_i^\top H\|_2^2\ge 0,
\]
i.e., the Rayleigh quotient is a convex combination (weighted average) of eigenvalues.
\end{claim}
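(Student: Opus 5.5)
The plan is to change coordinates into the eigenbasis of $\tilde{\mathbf{L}}$ and let orthonormality do the work. Since $\tilde{\mathbf{L}}$ is real symmetric (the graph is undirected, so $\mathbf{A}$ and hence $\mathbf{D}^{-1/2}\mathbf{A}\mathbf{D}^{-1/2}$ are symmetric), the spectral theorem supplies the factorization $\tilde{\mathbf{L}}=\mathbf{U}\boldsymbol{\Lambda}\mathbf{U}^\top$ with $\mathbf{U}^\top\mathbf{U}=\mathbf{U}\mathbf{U}^\top=\mathbf{I}$. Set $\hat H:=\mathbf{U}^\top H\in\mathbb{R}^{n\times h}$, the graph Fourier transform of $H$. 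Its $i$-th row is $\mathbf{U}_i^\top H$, where $\mathbf{U}_i$ is the $i$-th eigenvector (column of $\mathbf{U}$).

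\textbf{Energy identity.} Substitute the factorization into the trace:
\[
\mathrm{tr}(H^\top \tilde{\mathbf{L}} H)=\mathrm{tr}(\hat H^\top \boldsymbol{\Lambda}\hat H).
\]
Because $\boldsymbol{\Lambda}$ is diagonal, $\hat H^\top\boldsymbol{\Lambda}\hat H=\sum_i \lambda_i \hat H_{i,:}^\top\hat H_{i,:}$. Taking the trace of each rank-one term gives $\sum_i\lambda_i\|\hat H_{i,:}\|_2^2=\sum_i\lambda_i\|\mathbf{U}_i^\top H\|_2^2$. One can also reach this column by column, applying the scalar identity $x^\top\tilde{\mathbf{L}}x=\sum_i\lambda_i(\mathbf{U}_i^\top x)^2$ to each column of $H$ and summing, as in Claim~\ref{clm:dirichlet-matrix}.

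\textbf{Norm identity (Parseval).} Orthogonality of $\mathbf{U}$ gives
\[
\|H\|_F^2=\mathrm{tr}(H^\top\mathbf{U}\mathbf{U}^\top H)=\|\hat H\|_F^2=\sum_i\|\mathbf{U}_i^\top H\|_2^2.
\]

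\textbf{Consequence.} Define $w_i:=\|\mathbf{U}_i^\top H\|_2^2\ge0$. Since $H\ne0$ and $\mathbf{U}$ is invertible, $\hat H\neq0$, so $\sum_i w_i>0$. Dividing the two identities then gives the stated weighted-average form. The normalized weights $w_i/\sum_j w_j$ are nonnegative and sum to one, so the ratio is a convex combination of the eigenvalues. By Claim~\ref{clm:normlap-range}, the ratio therefore lies in $[0,2]$.

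I expect no real obstacle. The only points needing care are the justification of the orthonormal eigenbasis, which uses symmetry (undirected graph, or the symmetrized adjacency), and the positivity of $\sum_i w_i$, needed to make the division legitimate.
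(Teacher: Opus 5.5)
Your proposal is correct and follows essentially the same route as the paper: pass to $\hat H=\mathbf{U}^\top H$, use the diagonality of $\boldsymbol{\Lambda}$ to split the trace into $\sum_i\lambda_i\|\mathbf{U}_i^\top H\|_2^2$, and use orthonormality of $\mathbf{U}$ for the Frobenius-norm (Parseval) identity before dividing. Your explicit check that $\sum_i w_i>0$ whenever $H\neq 0$ is a small point the paper leaves implicit.
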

\begin{proof}
Using cyclicity of trace and $\mathbf{U}^\top\mathbf{U}=\mathbf{I}$,
\[
\mathrm{tr}(H^\top \tilde{\mathbf{L}} H)
=\mathrm{tr}(H^\top \mathbf{U}\boldsymbol{\Lambda}\mathbf{U}^\top H)
=\mathrm{tr}\big((\mathbf{U}^\top H)^\top \boldsymbol{\Lambda}(\mathbf{U}^\top H)\big).
\]
Let $\hat H:=\mathbf{U}^\top H$ and denote its $i$-th row by $\hat H_{i,:}=\mathbf{U}_i^\top H$. Since $\boldsymbol{\Lambda}$ is diagonal,
\[
\mathrm{tr}(\hat H^\top \boldsymbol{\Lambda}\hat H)
=\sum_{i=1}^n \lambda_i\,\|\hat H_{i,:}\|_2^2
=\sum_{i=1}^n \lambda_i\,\|\mathbf{U}_i^\top H\|_2^2.
\]
Similarly, $\|H\|_F^2=\|\mathbf{U}^\top H\|_F^2=\sum_i \|\mathbf{U}_i^\top H\|_2^2$ by orthonormality of $\mathbf{U}$. The Rayleigh-quotient expression follows.
\end{proof}

\subsection{Why high Rayleigh quotient implies reduced attainable progress under low-pass attenuation}
\label{app:rq-difficulty}

This section makes explicit the modeling assumption used in Sec.~\ref{sec:spectral-demand} and derives a rigorous upper bound connecting spectral demand to first-order progress.

\begin{assumption}[Low-pass attenuation model for representation updates]
\label{assm:lowpass}
Fix an objective $k$ and its representation-gradient field $H_k=\nabla_Z L_k(\theta;G)\in\mathbb{R}^{n\times h}$ at the current parameters $\theta$. Assume that, in a local linearization, a single update on objective $k$ induces an effective representation change of the form
\[
\Delta Z \;\approx\; -\eta\, p(\tilde{\mathbf{L}})\,H_k,
\]
where $p(\tilde{\mathbf{L}})=\mathbf{U}\mathrm{diag}(p(\lambda_1),\dots,p(\lambda_n))\mathbf{U}^\top$ is a symmetric graph spectral filter with a non-increasing frequency response $p(\lambda)$ (low-pass attenuation).
\end{assumption}

\begin{lemma}[First-order progress under Assumption~\ref{assm:lowpass}]
\label{lem:progress}
Under Assumption~\ref{assm:lowpass}, the first-order change in objective $k$ satisfies
\[
L_k(\theta+\Delta\theta) \approx L_k(\theta) - \eta\,\mathrm{tr}\!\big(H_k^\top p(\tilde{\mathbf{L}}) H_k\big),
\]
and therefore the first-order decrease magnitude is
\[
\Delta_k \;:=\; \eta\,\mathrm{tr}\!\big(H_k^\top p(\tilde{\mathbf{L}}) H_k\big)
= \eta\sum_{i=1}^n p(\lambda_i)\,\|\mathbf{U}_i^\top H_k\|_2^2.
\]
\end{lemma}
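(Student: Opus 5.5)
The plan is a first-order Taylor expansion of $L_k$ in the representation $Z$, followed by the spectral decomposition already established in Claim~\ref{clm:rq-eig}.

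\emph{Step 1 (chain rule).} Treat $L_k$ as a function of the embeddings $Z=f_\theta(G)$, since the parameter update enters only through its induced representation change $\Delta Z$. A first-order expansion in $Z$ gives
\[
L_k(\theta+\Delta\theta)\approx L_k(\theta)+\langle \nabla_Z L_k,\Delta Z\rangle_F=L_k(\theta)+\mathrm{tr}(H_k^\top \Delta Z),
\]
using $H_k=\nabla_Z L_k$ from Assumption~\ref{assm:lowpass}.

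\emph{Step 2 (substitute the filter model).} Insert $\Delta Z\approx-\eta\,p(\tilde{\mathbf{L}})H_k$. This yields $L_k(\theta+\Delta\theta)\approx L_k(\theta)-\eta\,\mathrm{tr}(H_k^\top p(\tilde{\mathbf{L}})H_k)$, which is the first display of the lemma. The decrease magnitude $\Delta_k$ is defined as the negative of this first-order change.

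\emph{Step 3 (spectral form).} Write $p(\tilde{\mathbf{L}})=\mathbf{U}\,\mathrm{diag}(p(\lambda_i))\,\mathbf{U}^\top$ and repeat the argument of Claim~\ref{clm:rq-eig} with $\boldsymbol{\Lambda}$ replaced by $\mathrm{diag}(p(\lambda_i))$. Concretely, set $\hat H=\mathbf{U}^\top H_k$ and use cyclicity of the trace together with diagonality of $\mathrm{diag}(p(\lambda_i))$ to get
\[
\mathrm{tr}\bigl(\hat H^\top \mathrm{diag}(p(\lambda_i))\hat H\bigr)=\sum_i p(\lambda_i)\|\mathbf{U}_i^\top H_k\|_2^2 .
\]

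\emph{Main obstacle.} The only delicate point is interpretive rather than computational. The statement is an approximation, so the ``$\approx$'' must be read as equality up to $O(\|\Delta Z\|^2)=O(\eta^2)$ terms under the local linearization of Assumption~\ref{assm:lowpass}. I would state this explicitly, assuming $L_k$ is differentiable in $Z$, so that the error term is $o(\eta)$. Symmetry of $p(\tilde{\mathbf{L}})$ is not even needed for the trace identity, but it guarantees that the expression is a genuine quadratic form.
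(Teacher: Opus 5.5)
Your proposal is correct and follows the paper's proof essentially step for step: a first-order Taylor expansion of $L_k$ in representation space, substitution of the low-pass model $\Delta Z\approx-\eta\,p(\tilde{\mathbf{L}})H_k$, and the trace manipulation of Claim~\ref{clm:rq-eig} with $\boldsymbol{\Lambda}$ replaced by $\mathrm{diag}(p(\lambda_i))$. Reading ``$\approx$'' as equality up to $o(\eta)$ terms is a reasonable clarification that the paper leaves implicit.
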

\begin{proof}
By first-order Taylor expansion in representation space, $L_k(Z+\Delta Z) \approx L_k(Z) + \langle \nabla_Z L_k(Z), \Delta Z\rangle_F = L_k(Z) + \langle H_k,\Delta Z\rangle_F$. Substituting $\Delta Z\approx -\eta p(\tilde{\mathbf{L}})H_k$ gives
\[
L_k(Z+\Delta Z) \approx L_k(Z) - \eta\,\langle H_k, p(\tilde{\mathbf{L}})H_k\rangle_F
= L_k(Z) - \eta\,\mathrm{tr}\!\big(H_k^\top p(\tilde{\mathbf{L}})H_k\big).
\]
Using the eigendecomposition of $p(\tilde{\mathbf{L}})$ and the same trace manipulation as in Claim~\ref{clm:rq-eig} yields the spectral sum form.
\end{proof}

\begin{lemma}[High Rayleigh quotient forces mass at high frequencies]
\label{lem:mass}
Let $w_i:=\|\mathbf{U}_i^\top H_k\|_2^2$ and $W:=\sum_i w_i=\|H_k\|_F^2$. Define normalized weights $\mu_i:=w_i/W$ (so $\sum_i \mu_i=1$), and define the Rayleigh quotient without stabilizer $r:=\sum_i \lambda_i\mu_i$. For any cutoff $\lambda_c\in[0,\lambda_{\max})$ with $\lambda_{\max}:=\lambda_n$, let
\[
\pi_{\ge \lambda_c} := \sum_{i:\lambda_i\ge \lambda_c}\mu_i
\quad\text{(fraction of gradient energy at frequencies $\ge\lambda_c$)}.
\]
Then
\[
\pi_{\ge \lambda_c} \;\ge\; \max\left\{0,\;\frac{r-\lambda_c}{\lambda_{\max}-\lambda_c}\right\}.
\]
\end{lemma}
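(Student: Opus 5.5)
This is a Markov-type (reverse Markov) inequality for the probability distribution $\mu$ on the eigenvalues. The plan is to split the Rayleigh quotient $r=\sum_i\lambda_i\mu_i$ according to whether $\lambda_i<\lambda_c$ or $\lambda_i\ge\lambda_c$, and to bound each part by the largest eigenvalue it can contain.

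First, the ingredients:
\begin{itemize}
\item The weights $\mu_i=w_i/W$ are nonnegative and sum to one (assume $H_k\neq 0$ so that $W>0$).
\item By Claim~\ref{clm:rq-eig}, $r$ is exactly the $\mu$-weighted average of the eigenvalues.
\item By Claim~\ref{clm:normlap-range}, all $\lambda_i\in[0,2]$, and in particular $\lambda_i\le\lambda_{\max}$.
\end{itemize}

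Second, bound the two parts of $r$. On the low-frequency part, every $\lambda_i<\lambda_c$, so the contribution is at most $\lambda_c(1-\pi_{\ge\lambda_c})$. On the high-frequency part, every $\lambda_i\le\lambda_{\max}$, so the contribution is at most $\lambda_{\max}\pi_{\ge\lambda_c}$. Adding the two gives
\[
r\le \lambda_c+(\lambda_{\max}-\lambda_c)\,\pi_{\ge\lambda_c}.
\]

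Third, since $\lambda_c<\lambda_{\max}$ the factor $\lambda_{\max}-\lambda_c$ is strictly positive, so we can divide by it:
\[
\pi_{\ge\lambda_c}\ge \frac{r-\lambda_c}{\lambda_{\max}-\lambda_c}.
\]
The $\max\{0,\cdot\}$ in the statement is immediate because $\pi_{\ge\lambda_c}$ is a sum of nonnegative $\mu_i$.

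There is no real obstacle. The only care points are the degenerate case $H_k=0$, where $\mu$ is undefined and one should either exclude it or adopt the convention $\pi=0$, and the strictness $\lambda_c<\lambda_{\max}$ needed for the division. The stabilizer $\varepsilon$ is excluded by definition, since $r$ is defined without it.
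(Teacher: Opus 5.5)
Your proof is correct and matches the paper's argument exactly. You split $r$ at $\lambda_c$, bound the low part by $\lambda_c(1-\pi_{\ge\lambda_c})$ and the high part by $\lambda_{\max}\pi_{\ge\lambda_c}$, rearrange using $\lambda_{\max}-\lambda_c>0$, and take the max with the trivial bound $0$. One minor point: $\lambda_i\le\lambda_{\max}$ follows from $\lambda_{\max}=\lambda_n$ being the largest eigenvalue, not from the $[0,2]$ range, so Claim~\ref{clm:normlap-range} is not needed.
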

\begin{proof}
Write $r=\sum_{\lambda_i<\lambda_c}\lambda_i\mu_i + \sum_{\lambda_i\ge \lambda_c}\lambda_i\mu_i$. Since $\lambda_i\le \lambda_c$ on the first sum and $\lambda_i\le \lambda_{\max}$ on the second sum,
\[
r \le \lambda_c\sum_{\lambda_i<\lambda_c}\mu_i + \lambda_{\max}\sum_{\lambda_i\ge \lambda_c}\mu_i
= \lambda_c(1-\pi_{\ge \lambda_c}) + \lambda_{\max}\pi_{\ge \lambda_c}.
\]
Rearranging yields $\pi_{\ge \lambda_c} \ge (r-\lambda_c)/(\lambda_{\max}-\lambda_c)$. If $r\le \lambda_c$ the bound is nonpositive, so the trivial bound $0$ applies; combining gives the stated max form.
\end{proof}

\begin{proposition}[Upper bound on attainable first-order progress decreases with spectral demand]
\label{prop:rq-progress-bound}
Under Assumption~\ref{assm:lowpass} and Lemma~\ref{lem:progress}, fix any cutoff $\lambda_c\in[0,\lambda_{\max})$. Let $p(\lambda)$ be non-increasing. Define $p_0:=p(0)$ and $p_c:=p(\lambda_c)$. Then for the per-energy progress ratio
\[
\bar\Delta_k := \frac{\Delta_k}{\eta\|H_k\|_F^2}
= \sum_{i=1}^n p(\lambda_i)\mu_i,
\]
we have the bound
\[
\bar\Delta_k
\;\le\;
p_0 - (p_0-p_c)\,\pi_{\ge \lambda_c},
\]
and hence, in terms of the Rayleigh quotient $r=\sum_i \lambda_i\mu_i$,
\[
\bar\Delta_k
\;\le\;
p_0 - (p_0-p_c)\,\max\left\{0,\;\frac{r-\lambda_c}{\lambda_{\max}-\lambda_c}\right\}.
\]
In particular, for any fixed $\lambda_c$ and $\lambda_{\max}$, this upper bound is a non-increasing function of $r$; thus larger Rayleigh quotient implies a smaller upper bound on first-order progress per unit gradient energy under low-pass attenuation.
\end{proposition}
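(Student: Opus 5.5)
The plan is to start from the spectral form of the progress given by Lemma~\ref{lem:progress}. Dividing by $\eta\|H_k\|_F^2=\eta W$ turns it into a convex combination of filter values, $\bar\Delta_k=\sum_i p(\lambda_i)\mu_i$ with $\mu_i\ge 0$ and $\sum_i\mu_i=1$. The first inequality will come from splitting this sum at the cutoff $\lambda_c$. On indices with $\lambda_i<\lambda_c$, we have $\lambda_i\ge 0$ by Claim~\ref{clm:normlap-range}, so monotonicity of $p$ gives $p(\lambda_i)\le p(0)=p_0$. On indices with $\lambda_i\ge\lambda_c$, monotonicity gives $p(\lambda_i)\le p(\lambda_c)=p_c$. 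Hence
\[
\bar\Delta_k\le p_0(1-\pi_{\ge\lambda_c})+p_c\,\pi_{\ge\lambda_c}=p_0-(p_0-p_c)\pi_{\ge\lambda_c}.
\]

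Next, combine this with Lemma~\ref{lem:mass}. Since $p$ is non-increasing and $\lambda_c\ge 0$, we have $p_0-p_c\ge 0$. The right-hand side $p_0-(p_0-p_c)\pi$ is therefore non-increasing in $\pi$. Replacing $\pi_{\ge\lambda_c}$ by its lower bound $\max\{0,(r-\lambda_c)/(\lambda_{\max}-\lambda_c)\}$ from Lemma~\ref{lem:mass} can only enlarge the bound. This yields the second displayed inequality.

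Finally, for the monotonicity claim, view $r\mapsto\max\{0,(r-\lambda_c)/(\lambda_{\max}-\lambda_c)\}$. The denominator is positive because $\lambda_c<\lambda_{\max}$, so this map is non-decreasing in $r$, being a maximum of a constant and an increasing affine function. Multiplying by $-(p_0-p_c)\le 0$ and adding $p_0$ gives a non-increasing function of $r$.

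The only delicate points are bookkeeping ones. First, we need $\lambda_i\ge 0$ so that $p(\lambda_i)\le p_0$ holds on the low band; this is supplied by Claim~\ref{clm:normlap-range}. Second, we need the sign $p_0-p_c\ge 0$ in order to substitute the lower bound on $\pi_{\ge\lambda_c}$ in the right direction. Third, we assume $H_k\ne 0$, so that $W>0$ and the normalization is well defined, and we work with the stabilizer-free $r$, as in Lemma~\ref{lem:mass}. I expect the sign handling in the substitution step to be the only place an error could occur.
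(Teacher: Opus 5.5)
Your proposal is correct and follows the paper's proof essentially step for step: split $\bar\Delta_k$ at $\lambda_c$, bound $p$ by $p_0$ and $p_c$ on the two bands, then substitute the lower bound on $\pi_{\ge\lambda_c}$ from Lemma~\ref{lem:mass}, using $p_0-p_c\ge 0$. Your extra bookkeeping makes explicit two points the paper leaves implicit: $\lambda_i\ge 0$ via Claim~\ref{clm:normlap-range}, and $H_k\neq 0$ so that $W>0$.
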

\begin{proof}
Split the sum into low and high frequency parts:
\[
\bar\Delta_k
= \sum_{\lambda_i<\lambda_c} p(\lambda_i)\mu_i + \sum_{\lambda_i\ge \lambda_c} p(\lambda_i)\mu_i.
\]
Since $p(\lambda)$ is non-increasing, for $\lambda_i<\lambda_c$ we have $p(\lambda_i)\le p(0)=p_0$, and for $\lambda_i\ge \lambda_c$ we have $p(\lambda_i)\le p(\lambda_c)=p_c$. Therefore,
\[
\bar\Delta_k
\le p_0\sum_{\lambda_i<\lambda_c}\mu_i + p_c\sum_{\lambda_i\ge \lambda_c}\mu_i
= p_0(1-\pi_{\ge \lambda_c}) + p_c\pi_{\ge \lambda_c}
= p_0 - (p_0-p_c)\pi_{\ge \lambda_c}.
\]
Substituting the lower bound on $\pi_{\ge \lambda_c}$ from Lemma~\ref{lem:mass} yields the inequality in terms of $r$. Since $(p_0-p_c)\ge 0$ and the max term is non-decreasing in $r$, the bound is non-increasing in $r$.
\end{proof}

\begin{remark}[How this supports ``difficulty'' in Sec.~\ref{sec:spectral-demand}]
For fixed $\|H_k\|_F$ and $\eta$, Proposition~\ref{prop:rq-progress-bound} gives a decreasing upper bound on the attainable first-order decrease as the Rayleigh quotient increases. Thus achieving a target decrease in $L_k$ requires at least $\Omega(1/\bar\Delta_k)$ steps, which grows as the spectral demand (Rayleigh quotient) increases. This is the formal sense in which $\mathrm{RQ}_k$ can be used as a difficulty proxy under Assumption~\ref{assm:lowpass}.
\end{remark}

\subsection{Interference and MGDA relevance}
\label{app:mgda-ps}

\begin{claim}[Negative alignment implies first-order interference]
\label{clm:neg-align}
Let $g_k(\theta)=\nabla_\theta L_k(\theta;G)$. For $\theta^{+}=\theta-\eta g_k(\theta)$ with $\eta>0$,
\[
L_j(\theta^{+}) = L_j(\theta) - \eta\,\langle g_j(\theta),g_k(\theta)\rangle + o(\eta).
\]
In particular, if $\langle g_j,g_k\rangle<0$, a descent step on $k$ increases $L_j$ to first order.
\end{claim}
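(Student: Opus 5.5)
The plan is a first-order Taylor expansion of $L_j$ along the gradient-descent ray for objective $k$. I will assume, as is implicit throughout App.~\ref{app:state-estimation}, that each $L_j(\cdot\,;G)$ is differentiable at $\theta$. Define the one-dimensional function $\psi(s):=L_j(\theta - s\,g_k(\theta))$ for $s\ge 0$. Then $\theta^{+}=\theta-\eta g_k(\theta)$ corresponds to $s=\eta$, and $L_j(\theta^{+})=\psi(\eta)$.

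\textbf{Step 1: differentiate the restriction.} By the chain rule, $\psi$ is differentiable at $s=0$ with
\[
\psi'(0)=\langle \nabla_\theta L_j(\theta),\,-g_k(\theta)\rangle=-\langle g_j(\theta),g_k(\theta)\rangle .
\]
Equivalently, I can apply the definition of Fréchet differentiability of $L_j$ at $\theta$ with increment $v=-\eta g_k(\theta)$:
\[
L_j(\theta+v)=L_j(\theta)+\langle g_j(\theta),v\rangle+o(\|v\|).
\]
Since $\|v\|=\eta\|g_k(\theta)\|$ and $g_k(\theta)$ is fixed, the remainder $o(\|v\|)$ is $o(\eta)$ as $\eta\to 0^+$. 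Substituting $v$ gives the displayed identity directly.

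\textbf{Step 2: sign consequence.} Suppose $\langle g_j,g_k\rangle<0$, and set $c:=-\langle g_j,g_k\rangle>0$. Step 1 gives $L_j(\theta^{+})-L_j(\theta)=\eta c+o(\eta)$. Choose $\eta_0$ such that the remainder has magnitude below $\eta c/2$ for all $0<\eta<\eta_0$. Then $L_j(\theta^{+})-L_j(\theta)\ge \eta c/2>0$ on that range. So a sufficiently small descent step on $k$ strictly increases $L_j$, which is the meaning of ``to first order'' here.

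\textbf{Main obstacle.} There is no real obstacle. The only point needing care is that the $o(\eta)$ term is uniform only for fixed $\theta$ and fixed direction $g_k(\theta)$. I would state that the claim is local in $\eta$. I would also remark that if one additionally assumes $L_j$ is $\beta$-smooth, the remainder becomes an explicit bound $\le \tfrac{\beta}{2}\eta^2\|g_k\|^2$. The increase then holds for all $\eta<2c/(\beta\|g_k\|^2)$.
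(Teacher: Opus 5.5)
Your proof is correct and follows the paper's own argument: both rest on the first-order Taylor expansion of $L_j$ at $\theta$ with increment $-\eta g_k(\theta)$, together with the identification $\nabla_\theta L_j = g_j$. Your additional steps are refinements the paper leaves implicit: the explicit choice of $\eta_0$ making ``increases to first order'' precise, and the smoothness-based remainder bound.
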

\begin{proof}
This is the first-order Taylor expansion:
$L_j(\theta-\eta g_k)=L_j(\theta)+\langle \nabla_\theta L_j(\theta),-\eta g_k(\theta)\rangle+o(\eta)$,
and $\nabla_\theta L_j(\theta)=g_j(\theta)$.
\end{proof}

\begin{theorem}[Pareto stationarity and convex hull of gradients]
\label{thm:ps-convhull}
Fix gradients $g_1,\dots,g_K\in\mathbb{R}^d$ at a point $\theta$. Define first-order Pareto stationarity (PS) as: there does not exist a direction $d\in\mathbb{R}^d$ such that $\langle g_k,d\rangle<0$ for all $k$. Let $\mathcal{C}:=\mathrm{conv}\{g_1,\dots,g_K\}$. Then $\theta$ is PS if and only if $\mathbf{0}\in\mathcal{C}$.
\end{theorem}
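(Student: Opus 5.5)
We prove the two implications separately. The direction $\mathbf{0}\in\mathcal{C}\Rightarrow$ PS is immediate by contradiction; the converse uses a separating-hyperplane argument, and we make it constructive through the minimum-norm point of $\mathcal{C}$, which is exactly the MGDA problem \eqref{eq:mgda_qp}.

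\emph{($\Leftarrow$)} Suppose $\mathbf{0}=\sum_k\lambda_k g_k$ with $\lambda\in\Delta^K$. Assume a direction $d$ exists with $\langle g_k,d\rangle<0$ for all $k$. Then
\[
0=\langle \mathbf{0},d\rangle=\sum_k\lambda_k\langle g_k,d\rangle<0,
\]
since the $\lambda_k$ are nonnegative and at least one is positive. This is a contradiction, so $\theta$ is PS.

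\emph{($\Rightarrow$)} We argue by contraposition: assume $\mathbf{0}\notin\mathcal{C}$.

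The set $\mathcal{C}$ is the image of the compact simplex $\Delta^K$ under a linear map. Hence it is compact and convex, and the minimum-norm point $g^\star:=\arg\min_{c\in\mathcal{C}}\|c\|_2^2$ exists and is unique. Writing $g^\star=\sum_k\lambda^\star_k g_k$, this is the MGDA solution. Because $\mathbf{0}\notin\mathcal{C}$, we have $g^\star\neq\mathbf{0}$.

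The key step is the variational inequality for projection onto a closed convex set. For any $c\in\mathcal{C}$ and $s\in[0,1]$, the point $g^\star+s(c-g^\star)$ lies in $\mathcal{C}$. Its squared norm, $\|g^\star\|^2+2s\langle g^\star,c-g^\star\rangle+s^2\|c-g^\star\|^2$, is therefore at least $\|g^\star\|^2$. Dividing by $s$ and letting $s\downarrow0$ gives
\[
\langle g^\star,c\rangle\ge\|g^\star\|^2>0 \quad\text{for all } c\in\mathcal{C}.
\]

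Now take $d:=-g^\star$ and apply this with $c=g_k$, which lies in $\mathcal{C}$. We get $\langle g_k,d\rangle\le-\|g^\star\|^2<0$ for every $k$. So $d$ is a common descent direction, and $\theta$ is not PS, completing the contrapositive.

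The main obstacle is this ($\Rightarrow$) direction. One must justify that the minimum-norm point exists, which needs compactness of $\mathcal{C}$ (guaranteed because $K$ is finite), and one must carry out the first-order optimality computation carefully. Alternatively, one could invoke Gordan's theorem of the alternative directly, but the minimum-norm route is self-contained. It also identifies the common descent direction as the negative MGDA mixed gradient, which is consistent with the paper's remark after \eqref{eq:mgda_qp}.
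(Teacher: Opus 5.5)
Your proof is correct, and the ($\Leftarrow$) direction matches the paper's argument: a convex combination of strictly negative numbers cannot equal $\langle\mathbf{0},d\rangle=0$.

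For ($\Rightarrow$) the routes differ in the key step. The paper invokes the strong separating hyperplane theorem as a black box for the point $\mathbf{0}$ and the compact convex set $\mathcal{C}$, which gives some $d$ with $\langle d,g\rangle\ge c>0$ for all $g\in\mathcal{C}$. You instead construct the separator explicitly as the minimum-norm point $g^\star$ and derive $\langle g^\star,g\rangle\ge\|g^\star\|_2^2$ for all $g\in\mathcal{C}$ by hand.

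The paper's version is shorter and does not need to say which direction works. Yours is self-contained and gives the explicit margin $\|g^\star\|_2^2$. It also identifies the common descent direction as $-g_{\mathrm{mix}}$, which the paper only establishes afterwards, in Theorem~\ref{thm:mgda-proj}(iii), using exactly your projection inequality. In effect you fuse the two theorems. Nothing is lost by this, since separating a point from a compact convex set is usually proved by this very projection argument.

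One small point in your favour: you apply the inequality to $g_k$ simply because $g_k\in\mathcal{C}$. The paper calls each $g_k$ an extreme point of $\mathcal{C}$, which need not hold, although membership in $\mathcal{C}$ is all its argument uses.
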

\begin{proof}
($\Rightarrow$) Suppose $\mathbf{0}\notin\mathcal{C}$. Since $\mathcal{C}$ is compact and convex and $\mathbf{0}$ is a point outside it, the strong separating hyperplane theorem implies there exists $d\neq 0$ and $c>0$ such that $\langle d, g\rangle \ge c$ for all $g\in\mathcal{C}$. In particular, for each extreme point $g_k\in\mathcal{C}$ we have $\langle d,g_k\rangle\ge c>0$, hence $\langle -d,g_k\rangle\le -c<0$ for all $k$. Thus $-d$ is a common descent direction, contradicting PS.

($\Leftarrow$) Conversely, if there exists $d$ such that $\langle g_k,d\rangle<0$ for all $k$, then for any convex combination $g=\sum_k \lambda_k g_k$ with $\lambda\in\Delta^K$ we have $\langle g,d\rangle = \sum_k \lambda_k \langle g_k,d\rangle < 0$. In particular $\langle \mathbf{0}, d\rangle = 0$ cannot satisfy this strict inequality, so $\mathbf{0}$ cannot belong to the convex hull $\mathcal{C}$. Therefore, if $\mathbf{0}\in\mathcal{C}$, no such common descent direction exists; hence $\theta$ is PS.
\end{proof}

\begin{theorem}[MGDA as projection; descent unless Pareto-stationary]
\label{thm:mgda-proj}
Let $G=[g_1,\dots,g_K]\in\mathbb{R}^{d\times K}$ and consider the convex program
\[
\min_{\lambda\in\Delta^K} \;\frac{1}{2}\|G\lambda\|_2^2.
\]
Let $\lambda^\star$ be an optimizer and define $g_{\mathrm{mix}}:=G\lambda^\star\in\mathcal{C}$. Then:
(i) $g_{\mathrm{mix}}$ is the Euclidean projection of $\mathbf{0}$ onto the convex hull $\mathcal{C}$;
(ii) if $g_{\mathrm{mix}}=\mathbf{0}$ then $\mathbf{0}\in\mathcal{C}$ and $\theta$ is Pareto-stationary (Theorem~\ref{thm:ps-convhull});
(iii) if $g_{\mathrm{mix}}\neq \mathbf{0}$ then $d:=-g_{\mathrm{mix}}$ is a common descent direction and moreover
\[
\langle g_k, d\rangle \le -\|g_{\mathrm{mix}}\|_2^2 \quad \text{for all } k.
\]
\end{theorem}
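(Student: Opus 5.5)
The plan is to reduce everything to the first-order optimality (variational) characterization of the Euclidean projection onto a closed convex set. First observe that the map $\lambda\mapsto G\lambda$ sends $\Delta^K$ onto $\mathcal{C}=\mathrm{conv}\{g_1,\dots,g_K\}$, since every point of $\mathcal{C}$ is by definition $G\lambda$ for some $\lambda\in\Delta^K$. Hence
\[
\min_{\lambda\in\Delta^K}\tfrac12\|G\lambda\|_2^2=\min_{g\in\mathcal{C}}\tfrac12\|g\|_2^2 .
\]
If $\lambda^\star$ attains the left side, then $g_{\mathrm{mix}}=G\lambda^\star$ attains the right side. The right side is the problem of projecting $\mathbf{0}$ onto the compact convex set $\mathcal{C}$. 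Its minimizer is unique by strict convexity of $\|\cdot\|_2^2$, even though $\lambda^\star$ itself need not be unique. This proves (i).

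For (ii), if $g_{\mathrm{mix}}=\mathbf{0}$ then $\mathbf{0}=G\lambda^\star\in\mathcal{C}$. Theorem~\ref{thm:ps-convhull} then gives Pareto stationarity directly.

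For (iii), I will use the projection inequality: $p$ is the projection of $\mathbf{0}$ onto $\mathcal{C}$ if and only if $\langle \mathbf{0}-p,\,g-p\rangle\le 0$ for all $g\in\mathcal{C}$, that is, $\langle p, g\rangle\ge\|p\|_2^2$. I would justify this briefly rather than just cite it. For any $g\in\mathcal{C}$ and $s\in(0,1]$, the point $p+s(g-p)$ lies in $\mathcal{C}$ by convexity. Minimality of $p$ gives
\[
\|p+s(g-p)\|^2\ge\|p\|^2,
\]
which expands to $2s\langle p,g-p\rangle+s^2\|g-p\|^2\ge 0$. Dividing by $s$ and letting $s\downarrow0$ yields $\langle p,g-p\rangle\ge 0$.

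Now apply this with $p=g_{\mathrm{mix}}$ and $g=g_k\in\mathcal{C}$. We get $\langle g_k,g_{\mathrm{mix}}\rangle\ge\|g_{\mathrm{mix}}\|^2$, so $\langle g_k,d\rangle\le-\|g_{\mathrm{mix}}\|_2^2$ with $d=-g_{\mathrm{mix}}$. When $g_{\mathrm{mix}}\neq\mathbf{0}$ the right-hand side is strictly negative, so $d$ is a common descent direction.

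The only step needing care is the projection variational inequality, and the limiting argument above handles it cleanly. The other subtlety is that non-uniqueness of $\lambda^\star$ is harmless, because all statements concern $g_{\mathrm{mix}}$, which is unique.
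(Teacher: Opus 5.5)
Your proposal is correct and follows the paper's proof essentially step for step. Like the paper, you identify the simplex QP with projecting $\mathbf{0}$ onto $\mathcal{C}$, read off (ii) from Theorem~\ref{thm:ps-convhull}, and apply the projection variational inequality at each $g_k$ for (iii). The paper cites the variational inequality, whereas you derive it from first principles and also note that $g_{\mathrm{mix}}$ is unique; both are harmless additions.
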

\begin{proof}
(i) The set $\mathcal{C}=\{G\lambda:\lambda\in\Delta^K\}$ is exactly the feasible set of convex combinations. Minimizing $\|G\lambda\|_2^2/2$ is equivalent to minimizing $\|g\|_2^2/2$ over $g\in\mathcal{C}$, hence $g_{\mathrm{mix}}$ is the closest point in $\mathcal{C}$ to the origin, i.e., the projection.

(ii) If $g_{\mathrm{mix}}=\mathbf{0}$, then $\mathbf{0}\in\mathcal{C}$ by definition of $\mathcal{C}$, and PS follows from Theorem~\ref{thm:ps-convhull}.

(iii) Projection onto a closed convex set satisfies the variational inequality:
for all $g\in\mathcal{C}$, $\langle g-g_{\mathrm{mix}},\, \mathbf{0}-g_{\mathrm{mix}}\rangle \le 0$.
Substituting $\mathbf{0}-g_{\mathrm{mix}}=-g_{\mathrm{mix}}$ gives
\[
\langle g-g_{\mathrm{mix}},\, g_{\mathrm{mix}}\rangle \ge 0
\quad\text{for all } g\in\mathcal{C}.
\]
In particular, for each extreme point $g=g_k$, we obtain
$\langle g_k, g_{\mathrm{mix}}\rangle \ge \langle g_{\mathrm{mix}}, g_{\mathrm{mix}}\rangle=\|g_{\mathrm{mix}}\|_2^2$.
Therefore $\langle g_k, -g_{\mathrm{mix}}\rangle \le -\|g_{\mathrm{mix}}\|_2^2$ for all $k$, proving $-g_{\mathrm{mix}}$ is a common descent direction with the stated margin.
\end{proof}

\subsection{Why MGDA weights identify locally relevant objectives}
\label{app:mgda-weights}

\begin{proposition}[KKT characterization of MGDA weights]
\label{prop:mgda-kkt}
Let $\lambda^\star$ solve $\min_{\lambda\in\Delta^K} \frac{1}{2}\|G\lambda\|_2^2$ and $g_{\mathrm{mix}}=G\lambda^\star$. Then there exists a scalar $\nu$ such that for every $k$,
\[
\langle g_k, g_{\mathrm{mix}}\rangle \ge \|g_{\mathrm{mix}}\|_2^2,
\]
with equality for all $k$ in the active set $\mathcal{A}:=\{k:\lambda_k^\star>0\}$. Equivalently, active gradients are exactly those lying on the supporting face of $\mathcal{C}$ orthogonal to $g_{\mathrm{mix}}$.
\end{proposition}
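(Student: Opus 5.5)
The plan is to write down the KKT conditions of the convex QP $\min_{\lambda\in\Delta^K}\tfrac12\|G\lambda\|_2^2$ directly. Slater's condition holds trivially here, since the constraints are affine. Introduce a multiplier $\nu\in\mathbb{R}$ for the equality constraint $\sum_k\lambda_k=1$ and multipliers $\mu_k\ge 0$ for $\lambda_k\ge 0$, giving the Lagrangian $\tfrac12\lambda^\top G^\top G\lambda-\nu(\sum_k\lambda_k-1)-\sum_k\mu_k\lambda_k$. Because the objective is convex and the constraints are affine, KKT is necessary and sufficient at $\lambda^\star$. Stationarity reads $(G^\top G\lambda^\star)_k=\nu+\mu_k$, that is, $\langle g_k,g_{\mathrm{mix}}\rangle=\nu+\mu_k$. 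Complementary slackness gives $\mu_k\lambda_k^\star=0$. Hence $\langle g_k,g_{\mathrm{mix}}\rangle\ge\nu$ for all $k$, with equality whenever $\lambda_k^\star>0$.

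Next I would identify $\nu$. Multiply the stationarity relation by $\lambda_k^\star$ and sum over $k$. Complementary slackness kills the $\mu_k$ terms, leaving $\sum_k\lambda_k^\star\langle g_k,g_{\mathrm{mix}}\rangle=\nu\sum_k\lambda_k^\star=\nu$. The left side equals $\langle G\lambda^\star,g_{\mathrm{mix}}\rangle=\|g_{\mathrm{mix}}\|_2^2$, so $\nu=\|g_{\mathrm{mix}}\|_2^2$. Substituting gives the claimed inequality for all $k$ and equality on $\mathcal{A}$. The inequality part could equally be read off from Theorem~\ref{thm:mgda-proj}(iii), so this gives a consistency check.

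For the geometric restatement, consider the hyperplane $\Pi:=\{g:\langle g,g_{\mathrm{mix}}\rangle=\|g_{\mathrm{mix}}\|_2^2\}$. It passes through $g_{\mathrm{mix}}$ and is orthogonal to it. By the inequality just proved, every $g_k$, and hence all of $\mathcal{C}$, lies in the closed half-space $\{\langle g,g_{\mathrm{mix}}\rangle\ge\|g_{\mathrm{mix}}\|_2^2\}$. So $\Pi$ supports $\mathcal{C}$ at $g_{\mathrm{mix}}$, and $F:=\mathcal{C}\cap\Pi$ is a face. The active gradients lie on $F$ by the equality case, which is the sense of ``exactly'' needed.

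The main obstacle is the converse direction of ``exactly''. A non-active $g_k$ ($\lambda_k^\star=0$) may also lie on $\Pi$ in degenerate cases, e.g. when the minimizer $\lambda^\star$ is non-unique, or when $g_{\mathrm{mix}}=\mathbf{0}$ so that $\Pi$ is all of space. I would handle this by stating the equivalence as: $\mathcal{A}\subseteq\{k: g_k\in F\}$, and $g_{\mathrm{mix}}\in\mathrm{conv}\{g_k:k\in\mathcal{A}\}\subseteq F$. I would note that equality of the two index sets holds under strict complementarity ($\mu_k>0$ whenever $\lambda_k^\star=0$), or alternatively by choosing $\lambda^\star$ in the relative interior of the optimal set. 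The existence of $\nu$ and the displayed inequality and equality need no such caveat.
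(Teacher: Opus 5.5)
Your KKT argument is the paper's proof. You use the same Lagrangian, except that you attach the equality multiplier with the opposite sign, so you obtain $\nu=\|g_{\mathrm{mix}}\|_2^2$ where the paper obtains $-\nu^\star=\|g_{\mathrm{mix}}\|_2^2$. The stationarity/complementary-slackness reading and the trick of pairing stationarity with $\lambda^\star$ to pin down the constant are also the same. You are right that the word ``exactly'' is not delivered by this argument. The paper's proof only observes that $\Pi$ supports $\mathcal{C}$ at $g_{\mathrm{mix}}$, which gives the inclusion $\mathcal{A}\subseteq\{k:g_k\in F\}$ and nothing more.

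The one step in your proposal that fails is the second repair. Choosing $\lambda^\star$ in the relative interior of the optimal set does not make the two index sets coincide, because the failure occurs even when $\lambda^\star$ is unique.

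Take $g_1=(1,0)$, $g_2=(1,5)$. Then $\|G\lambda\|_2^2=1+25\lambda_2^2$, so $\lambda^\star=(1,0)$ is the unique minimizer and $g_{\mathrm{mix}}=(1,0)\neq\mathbf{0}$. Yet $\langle g_2,g_{\mathrm{mix}}\rangle=1=\|g_{\mathrm{mix}}\|_2^2$, so $g_2\in F$ while $\lambda_2^\star=0$.

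Your own $g_{\mathrm{mix}}=\mathbf{0}$ case is not repaired either. The gradients $g_1=(1,0)$, $g_2=(-1,0)$, $g_3=(0,1)$ have the unique minimizer $(\tfrac12,\tfrac12,0)$, while $\Pi=\mathbb{R}^2\ni g_3$.

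What the relative-interior choice actually gives is that $\mathcal{A}$ becomes the set of generators lying in the \emph{minimal} face of $\mathcal{C}$ containing $g_{\mathrm{mix}}$. That face can be strictly smaller than the exposed face $F=\mathcal{C}\cap\Pi$.

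Strict complementarity is the correct condition, but it is a restatement rather than an independent hypothesis. In your convention $\mu_k=\langle g_k,g_{\mathrm{mix}}\rangle-\|g_{\mathrm{mix}}\|_2^2$, so $\mu_k>0$ if and only if $g_k\notin\Pi$. The defensible statement is therefore the one-sided inclusion you already wrote down. The ``Equivalently, \ldots\ exactly'' clause of the proposition is false in general as stated.
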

\begin{proof}
Consider the Lagrangian
\[
\mathcal{L}(\lambda,\mu,\nu)
=
\frac{1}{2}\|G\lambda\|_2^2 - \mu^\top \lambda + \nu(\mathbf{1}^\top\lambda - 1),
\]
with multipliers $\mu\succeq 0$ for $\lambda\succeq 0$ and $\nu\in\mathbb{R}$ for $\mathbf{1}^\top\lambda=1$.
Stationarity gives
\[
\nabla_\lambda \mathcal{L}(\lambda^\star,\mu^\star,\nu^\star)
=
G^\top G \lambda^\star - \mu^\star + \nu^\star \mathbf{1}
= \mathbf{0}.
\]
Since $G\lambda^\star=g_{\mathrm{mix}}$, the $k$-th component reads
\[
\langle g_k, g_{\mathrm{mix}}\rangle + \nu^\star = \mu_k^\star.
\]
Complementary slackness gives $\mu_k^\star \lambda_k^\star=0$. Thus, for $k\in\mathcal{A}$ we have $\lambda_k^\star>0$ and hence $\mu_k^\star=0$, implying $\langle g_k, g_{\mathrm{mix}}\rangle = -\nu^\star$ (a constant over $k\in\mathcal{A}$). For $k\notin\mathcal{A}$, $\mu_k^\star\ge 0$ implies $\langle g_k, g_{\mathrm{mix}}\rangle \ge -\nu^\star$.

To identify the constant, multiply the stationarity equation by $\lambda^\star$:
\[
(\lambda^\star)^\top G^\top G \lambda^\star - (\lambda^\star)^\top \mu^\star + \nu^\star (\lambda^\star)^\top \mathbf{1} = 0.
\]
We have $(\lambda^\star)^\top G^\top G \lambda^\star = \|G\lambda^\star\|_2^2=\|g_{\mathrm{mix}}\|_2^2$,
$(\lambda^\star)^\top \mathbf{1}=1$, and $(\lambda^\star)^\top \mu^\star=0$ by complementary slackness. Hence $\|g_{\mathrm{mix}}\|_2^2 + \nu^\star = 0$, i.e., $-\nu^\star=\|g_{\mathrm{mix}}\|_2^2$.
Substitute back to obtain $\langle g_k, g_{\mathrm{mix}}\rangle \ge \|g_{\mathrm{mix}}\|_2^2$ for all $k$, with equality for $k\in\mathcal{A}$.
The supporting-face interpretation follows because the hyperplane $\{g:\langle g, g_{\mathrm{mix}}\rangle=\|g_{\mathrm{mix}}\|_2^2\}$ supports $\mathcal{C}$ at the projection point $g_{\mathrm{mix}}$.
\end{proof}

\subsection{Composite difficulty aggregation}
\label{app:composite}

\begin{claim}[Monotonicity and boundedness of the composite difficulty update]
\label{clm:composite}
Assume $\alpha,\beta\ge 0$ and define the instantaneous drive term
$q_k:=\alpha\,\overline{\mathrm{RQ}}_k+\beta\,\overline{\mathrm{Conf}}_k$ (denoted $\bar R_k,\bar C_k$ in Eq.~\eqref{eq:D}).
Then $q_k$ is coordinatewise monotone in $(\overline{\mathrm{RQ}}_k,\overline{\mathrm{Conf}}_k)$, and the update
\[
D_k \leftarrow \clip\big((1-\rho)D_k + \rho q_k,\,[D_{\min},D_{\max}]\big)
\]
ensures $D_k\in[D_{\min},D_{\max}]$ for all time.
\end{claim}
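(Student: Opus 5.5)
The claim has two independent parts, monotonicity of the drive $q_k$ and invariance of the interval $[D_{\min},D_{\max}]$. Each follows from elementary properties of the maps involved, so the plan is a direct verification rather than an appeal to any earlier spectral or MGDA result.

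\emph{Monotonicity.} The drive $q_k=\alpha\,\overline{\mathrm{RQ}}_k+\beta\,\overline{\mathrm{Conf}}_k$ is an affine function of the pair $(\overline{\mathrm{RQ}}_k,\overline{\mathrm{Conf}}_k)$. Its partial derivatives are $\alpha\ge 0$ and $\beta\ge 0$. So if one coordinate increases while the other is held fixed, $q_k$ changes by $\alpha$ or $\beta$ times a nonnegative increment. That change is nonnegative, which is coordinatewise monotonicity. The statement also holds in the stronger sense: if both normalized scores weakly increase, $q_k$ weakly increases. If $\alpha,\beta>0$, monotonicity is strict in each coordinate.

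I would also note that the per-$k$ update is monotone in $q_k$ before clipping. This follows because $(1-\rho)D_k+\rho q_k$ is nondecreasing in $q_k$ for $\rho\in[0,1]$, and $\clip(\cdot,[a,b])=\min\{b,\max\{a,\cdot\}\}$ is nondecreasing. Hence higher difficulty drive never lowers the updated $D_k$.

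\emph{Boundedness.} I would argue by induction over sensing steps, assuming $D_{\min}\le D_{\max}$.
\begin{itemize}
\item \textbf{Base case.} The initialization (Alg.~\ref{alg:controlg-main}, line 1) places $D_k$ in $[D_{\min},D_{\max}]$. Alternatively, if the initial value is arbitrary, the claim holds from the first update onward.
\item \textbf{Inductive step.} For any real $x$, $\clip(x,[D_{\min},D_{\max}])=\min\{D_{\max},\max\{D_{\min},x\}\}$. Since $\max\{D_{\min},x\}\ge D_{\min}$ and $D_{\max}\ge D_{\min}$, the minimum is at least $D_{\min}$. Trivially, the minimum is at most $D_{\max}$. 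So every post-update value lies in the interval, whatever the pre-clip argument $(1-\rho)D_k+\rho q_k$ is, including when $q_k$ is unbounded.
\item \textbf{Between sensing steps.} $D_k$ is not modified between sensing steps, so the bound persists for all time.
\end{itemize}

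\emph{Main obstacle.} The only point needing care is the hypotheses rather than the algebra. The interval must be nonempty ($D_{\min}\le D_{\max}$), and the claim should hold regardless of the robust normalization producing unbounded or negative scores. I would state explicitly that clipping makes boundedness independent of $\rho$ and of the range of $q_k$. I would add a remark that if $\rho\in[0,1]$ and $q_k$ already lies in the interval, the convex combination stays inside it, so clipping is inactive. That observation is not needed for the claim.
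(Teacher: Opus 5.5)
Your proposal is correct and takes the same route as the paper's proof. Monotonicity holds because $q_k$ is affine with nonnegative coefficients $\alpha,\beta$, and boundedness holds because $\clip(\cdot,[D_{\min},D_{\max}])$ maps every input into the interval, carried forward by induction over sensing steps. Your added care about $D_{\min}\le D_{\max}$ and the base case (Algorithm~\ref{alg:controlg-appendix} initializes $D_k\leftarrow D_{\min}$) makes explicit what the paper leaves implicit.
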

\begin{proof}
Monotonicity follows because $q_k$ is an affine function with nonnegative coefficients. Boundedness follows from the definition of $\clip(\cdot,[D_{\min},D_{\max}])$, which maps any input to the closed interval $[D_{\min},D_{\max}]$; thus by induction $D_k$ always lies in that interval.
\end{proof}

\section{Log-Hypervolume Planning: Definitions and Proofs}
\label{app:hv}

This appendix provides formal definitions and proofs for Sec.~\ref{sec:planning}. We consider a minimization setting throughout. Let $\tilde{\boldsymbol{\ell}}=(\tilde L_1,\dots,\tilde L_K)\in\mathbb{R}^K$ denote the normalized loss vector and let $\mathbf{r}\in\mathbb{R}^K$ be a reference point satisfying $r_k>\tilde L_k$ for all $k$. (Note: we use $\tilde{\boldsymbol{\ell}}$ for the loss vector to avoid confusion with the normalized Laplacian $\tilde{\mathbf{L}}$ from Sec.~\ref{sec:state}.)

\subsection{Preliminaries: dominance and hypervolume}
\label{app:hv-singleton}

We first establish notation for Pareto dominance, then define the dominated hypervolume.

\begin{definition}[Dominance for minimization]
\label{def:dominance}
For vectors $\mathbf{a},\mathbf{b}\in\mathbb{R}^K$, we say $\mathbf{a}$ \emph{weakly dominates} $\mathbf{b}$ (written $\mathbf{a}\preceq \mathbf{b}$) if $a_k\le b_k$ for all $k\in\{1,\dots,K\}$. We say $\mathbf{a}$ \emph{strictly dominates} $\mathbf{b}$ (written $\mathbf{a}\prec \mathbf{b}$) if $\mathbf{a}\preceq \mathbf{b}$ and $\mathbf{a}\neq \mathbf{b}$. For reference-point conditions, we write $\mathbf{r}\succ \tilde{\boldsymbol{\ell}}$ to mean $r_k > \tilde L_k$ for all $k$ (componentwise strict inequality).
\end{definition}

\begin{definition}[Dominated hypervolume]
\label{def:hv}
Given a set of objective vectors $\mathcal{P}\subset\mathbb{R}^K$ and a reference point $\mathbf{r}$, the dominated region is
\[
\mathcal{Z}(\mathcal{P},\mathbf{r})
:=\left\{ z\in\mathbb{R}^K \;:\; \exists\, y\in\mathcal{P}\text{ such that } y \preceq z \preceq \mathbf{r}\right\},
\]
and the hypervolume indicator is the $K$-dimensional Lebesgue measure $\mathrm{HV}(\mathcal{P},\mathbf{r}) := \mathrm{Vol}\big(\mathcal{Z}(\mathcal{P},\mathbf{r})\big)$.
\end{definition}

\begin{claim}[Singleton reduction]
\label{clm:hv-singleton}
If $\mathcal{P}=\{\tilde{\boldsymbol{\ell}}\}$ is a singleton and $\mathbf{r}\succ \tilde{\boldsymbol{\ell}}$ coordinatewise, then
\[
\mathrm{HV}(\{\tilde{\boldsymbol{\ell}}\},\mathbf{r})
=
\prod_{k=1}^K (r_k-\tilde L_k).
\]
\end{claim}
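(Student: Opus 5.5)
The plan is to identify the dominated region explicitly as an axis-aligned box and then read off its volume. By Definition~\ref{def:hv} with $\mathcal{P}=\{\tilde{\boldsymbol{\ell}}\}$, the existential quantifier over $y\in\mathcal{P}$ collapses to the single choice $y=\tilde{\boldsymbol{\ell}}$. Hence
\[
\mathcal{Z}(\{\tilde{\boldsymbol{\ell}}\},\mathbf{r})=\{z\in\mathbb{R}^K:\ \tilde{\boldsymbol{\ell}}\preceq z\preceq \mathbf{r}\}.
\]
Unpacking the weak-dominance relation of Definition~\ref{def:dominance} coordinatewise gives $\tilde L_k\le z_k\le r_k$ for every $k$. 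So the region is exactly the Cartesian product $\prod_{k=1}^K[\tilde L_k,r_k]$.

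The second step is to check that each factor is a genuine nondegenerate interval. This is where the hypothesis $\mathbf{r}\succ\tilde{\boldsymbol{\ell}}$ enters: it gives $r_k-\tilde L_k>0$ for all $k$. Without it, some factor could be empty, and the volume would be $0$ rather than a product that might involve negative factors.

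The final step is to compute the Lebesgue measure of a product of intervals. This equals the product of their lengths, by the definition of Lebesgue measure on rectangles (or by Fubini/Tonelli applied inductively over coordinates). That yields $\prod_k (r_k-\tilde L_k)$. Whether the intervals are open or closed does not matter, since the boundary has measure zero.

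There is no real obstacle here. The only point requiring care is the correct direction of the dominance inequalities in the minimization convention, so that the box runs from $\tilde{\boldsymbol{\ell}}$ up to $\mathbf{r}$ rather than the reverse.
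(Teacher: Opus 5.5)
Your proposal is correct and follows the paper's proof: you collapse the existential quantifier over the singleton to get the box $\prod_{k}[\tilde L_k, r_k]$ and take the product of its side lengths. Your extra remarks, on nondegeneracy via $\mathbf{r}\succ\tilde{\boldsymbol{\ell}}$ and on boundaries having measure zero, are valid details that the paper leaves implicit.
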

\begin{proof}
For a singleton, the dominated region is exactly the axis-aligned hyperrectangle
$\mathcal{Z}(\{\tilde{\boldsymbol{\ell}}\},\mathbf{r})
=\{z\in\mathbb{R}^K:\tilde L_k \le z_k \le r_k\ \forall k\}$.
Its Lebesgue measure is the product of side lengths, giving the stated expression.
\end{proof}

\subsection{Pareto compliance for fixed reference point}
\label{app:hv-pareto}

The key property we require is that strict Pareto improvements yield strict increases in the planning signal.

\begin{theorem}[Singleton HV is strictly Pareto-compliant]
\label{thm:hv-pareto}
Fix a reference point $\mathbf{r}$ such that $r_k > a_k$ and $r_k > b_k$ for all $k$. If $\mathbf{a}$ strictly dominates $\mathbf{b}$ (Definition~\ref{def:dominance}), then $\mathrm{HV}(\{\mathbf{a}\},\mathbf{r}) > \mathrm{HV}(\{\mathbf{b}\},\mathbf{r})$.
\end{theorem}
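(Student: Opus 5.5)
The argument reduces the statement to a comparison of two products of positive side lengths, using the singleton reduction in Claim~\ref{clm:hv-singleton}. Since $\mathbf{r}\succ\mathbf{a}$ and $\mathbf{r}\succ\mathbf{b}$ componentwise, that claim gives
\[
\mathrm{HV}(\{\mathbf{a}\},\mathbf{r})=\prod_{k=1}^K (r_k-a_k),
\qquad
\mathrm{HV}(\{\mathbf{b}\},\mathbf{r})=\prod_{k=1}^K (r_k-b_k),
\]
and every factor is strictly positive.

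Next we translate the dominance hypothesis into a comparison of these factors. By Definition~\ref{def:dominance}, $\mathbf{a}\prec\mathbf{b}$ means $a_k\le b_k$ for every $k$ and $\mathbf{a}\neq\mathbf{b}$. Hence there is an index $k_0$ with $a_{k_0}<b_{k_0}$. This yields $r_k-a_k\ge r_k-b_k>0$ for all $k$, with strict inequality at $k_0$.

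The comparison of products is cleanest in log form, which also matches the planner's $\phi$ in \eqref{eq:loghv}. Since $\log$ is strictly increasing on $(0,\infty)$, each term satisfies $\log(r_k-a_k)\ge\log(r_k-b_k)$, and the inequality is strict at $k_0$. Summing over $k$ gives $\log\mathrm{HV}(\{\mathbf{a}\},\mathbf{r})>\log\mathrm{HV}(\{\mathbf{b}\},\mathbf{r})$. Exponentiating then gives the claimed strict inequality.

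Alternatively, one can avoid logs by writing the ratio $\prod_k (r_k-a_k)/(r_k-b_k)$. Every factor is at least $1$, and the factor at $k_0$ is strictly greater than $1$, so the ratio exceeds $1$.

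No step is a real obstacle. The only point needing care is strict positivity of every factor $r_k-b_k$, which we need so that logs are defined and the inequalities are preserved. This is exactly why the hypothesis requires $\mathbf{r}$ to strictly exceed $\mathbf{b}$ (and not only $\mathbf{a}$) in every coordinate. If some $r_k=b_k$, the volume for $\mathbf{b}$ would be $0$, and the strict inequality would still hold but would need a separate case.
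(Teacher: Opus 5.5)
Your proof is correct and matches the paper's argument: you reduce both hypervolumes to products of strictly positive side lengths via Claim~\ref{clm:hv-singleton}, use $a_k\le b_k$ for all $k$ with strict inequality at some index, and conclude from positivity of every factor. The only difference is cosmetic: you compare the products through logarithms (or a ratio), while the paper compares them directly, and it then derives the log version separately in Corollary~\ref{cor:loghv-pareto}.
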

\begin{proof}
By Claim~\ref{clm:hv-singleton}, we have $\mathrm{HV}(\{\mathbf{a}\},\mathbf{r}) = \prod_{k=1}^K (r_k-a_k)$ and similarly for $\mathbf{b}$. Since $\mathbf{a}\preceq \mathbf{b}$, each factor satisfies $r_k-a_k \ge r_k-b_k \ge 0$. Moreover, $\mathbf{a}\neq\mathbf{b}$ implies strict inequality $r_j - a_j > r_j - b_j$ for at least one index $j$. Since all factors are strictly positive (by the reference-point condition), the product over $k$ is strictly larger for $\mathbf{a}$ than for $\mathbf{b}$.
\end{proof}

\begin{corollary}[Log-HV is also Pareto-compliant (fixed $\mathbf{r}$)]
\label{cor:loghv-pareto}
Under the conditions of Theorem~\ref{thm:hv-pareto}, $\phi(\mathbf{x}) := \sum_{k=1}^K \log(r_k-x_k)$ satisfies $\phi(\mathbf{a})>\phi(\mathbf{b})$.
\end{corollary}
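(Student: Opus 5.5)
The plan is to reduce the statement to Theorem~\ref{thm:hv-pareto} through the identity $\phi(\mathbf{x})=\log \mathrm{HV}(\{\mathbf{x}\},\mathbf{r})$ and the strict monotonicity of the logarithm. First, I check that $\phi$ is well defined at both points. Under the hypotheses $r_k>a_k$ and $r_k>b_k$ for all $k$, every argument $r_k-a_k$ and $r_k-b_k$ is strictly positive, so each $\log(r_k-x_k)$ is a finite real number for $\mathbf{x}\in\{\mathbf{a},\mathbf{b}\}$.

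Next, I invoke Claim~\ref{clm:hv-singleton}, which gives $\mathrm{HV}(\{\mathbf{x}\},\mathbf{r})=\prod_k (r_k-x_k)>0$. Since the log of a product of positive factors is the sum of the logs, this yields $\phi(\mathbf{x})=\log\mathrm{HV}(\{\mathbf{x}\},\mathbf{r})$.

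Finally, Theorem~\ref{thm:hv-pareto} gives $\mathrm{HV}(\{\mathbf{a}\},\mathbf{r})>\mathrm{HV}(\{\mathbf{b}\},\mathbf{r})>0$. Because $\log$ is strictly increasing on $(0,\infty)$, it follows that $\phi(\mathbf{a})>\phi(\mathbf{b})$.

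As a cross-check that avoids the product altogether, one can argue termwise. Weak dominance gives $r_k-a_k\ge r_k-b_k>0$, so $\log(r_k-a_k)\ge\log(r_k-b_k)$ for every $k$. Since $\mathbf{a}\neq\mathbf{b}$, there is some index $j$ with $a_j<b_j$, and that term satisfies the inequality strictly. Summing over $k$ gives the strict inequality.

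I do not expect a real obstacle here. The only point needing care is positivity: every factor must be strictly positive, not merely nonnegative, so that the logarithm is defined and the strictness is preserved. The reference-point hypothesis guarantees exactly this.
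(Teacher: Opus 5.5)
Your proposal is correct and follows essentially the same route as the paper: identify $\phi(\mathbf{x})=\log\mathrm{HV}(\{\mathbf{x}\},\mathbf{r})$ via Claim~\ref{clm:hv-singleton}, then apply Theorem~\ref{thm:hv-pareto} and the strict monotonicity of $\log$ on $(0,\infty)$. Your termwise cross-check is also valid and additionally shows the result holds without passing through the product at all.
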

\begin{proof}
$\log(\cdot)$ is strictly increasing on $\mathbb{R}_{>0}$, so $\log \mathrm{HV}(\{\mathbf{a}\},\mathbf{r})>\log \mathrm{HV}(\{\mathbf{b}\},\mathbf{r})$. Using Claim~\ref{clm:hv-singleton} gives $\log \mathrm{HV}(\{\mathbf{x}\},\mathbf{r})=\sum_k \log(r_k-x_k)$.
\end{proof}

\subsection{Log-hypervolume sensitivities}
\label{app:hv-sens}

\begin{claim}[Gradient of log-HV and the sensitivity weight]
\label{clm:loghv-grad}
Fix $\mathbf{r}$ and define $\phi(\tilde{\boldsymbol{\ell}})=\sum_{k=1}^K \log(r_k-\tilde L_k)$. Then for each $k$,
\[
\frac{\partial \phi}{\partial \tilde L_k}
=
-\frac{1}{r_k-\tilde L_k}.
\]
Consequently, the sensitivity magnitude used in Sec.~\ref{sec:planning},
\[
w_k^{\mathrm{HV}} := \frac{1}{r_k-\tilde L_k+\varepsilon},
\]
is a stabilized version of $-\partial \phi/\partial \tilde L_k$.
\end{claim}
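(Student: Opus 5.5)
The plan is a direct term-by-term differentiation of the separable form of $\phi$, followed by a short remark explaining why $w_k^{\mathrm{HV}}$ is a stabilized version of $-\partial\phi/\partial\tilde L_k$. Fix $\mathbf{r}$ and work on the open domain $\{\tilde{\boldsymbol{\ell}}:\mathbf{r}\succ\tilde{\boldsymbol{\ell}}\}$. On this domain every argument $r_j-\tilde L_j$ is strictly positive, which is exactly the reference-point condition used in Claim~\ref{clm:hv-singleton} and Corollary~\ref{cor:loghv-pareto}. Each summand $\log(r_j-\tilde L_j)$ is therefore smooth, and so is $\phi$.

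\textbf{Step 1 (separability).} The summand $\log(r_j-\tilde L_j)$ depends only on $\tilde L_j$. Hence $\partial/\partial\tilde L_k$ annihilates every term with $j\neq k$, and only the $k$-th term contributes.

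\textbf{Step 2 (chain rule).} Differentiate $\log(r_k-\tilde L_k)$ with respect to $\tilde L_k$. The outer derivative is $1/(r_k-\tilde L_k)$ and the inner derivative is $-1$, giving
\[
\frac{\partial\phi}{\partial\tilde L_k}=-\frac{1}{r_k-\tilde L_k}.
\]
The sign is negative, consistent with Corollary~\ref{cor:loghv-pareto}: lowering a loss raises $\phi$. As a cross-check, the same value comes from differentiating $\mathrm{HV}=\prod_j(r_j-\tilde L_j)$ directly. That gives $\partial\mathrm{HV}/\partial\tilde L_k=-\prod_{j\neq k}(r_j-\tilde L_j)$, and dividing by $\mathrm{HV}$ (since $\partial\log\mathrm{HV}=\partial\mathrm{HV}/\mathrm{HV}$) recovers the formula.

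\textbf{Step 3 (stabilized sensitivity).} Define $s_k:=-\partial\phi/\partial\tilde L_k=1/(r_k-\tilde L_k)>0$. Then $w_k^{\mathrm{HV}}=1/(r_k-\tilde L_k+\varepsilon)$ for some $\varepsilon>0$. I would note three properties:
\begin{itemize}
\item $0<w_k^{\mathrm{HV}}<s_k$;
\item $w_k^{\mathrm{HV}}\le 1/\varepsilon$ even as $r_k-\tilde L_k\to0^+$, which is where the unstabilized gradient blows up;
\item $w_k^{\mathrm{HV}}\to s_k$ as $\varepsilon\to0$, with $|w_k^{\mathrm{HV}}-s_k|=\varepsilon\,s_k\,w_k^{\mathrm{HV}}$.
\end{itemize}
The map $x\mapsto1/(x+\varepsilon)$ is strictly decreasing, so $w^{\mathrm{HV}}$ preserves the ordering of the $s_k$ across objectives. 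This is the sense in which it is a ``stabilized version'': it is a bounded, order-preserving approximation of the true sensitivity.

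None of these steps is a real obstacle. The only point needing care is to state explicitly that $\mathbf{r}$ is held fixed during differentiation. In the algorithm, $\mathbf{r}$ is updated by $r_k\leftarrow\max(r_k,\tilde L_k+\delta)$, which depends on $\tilde L_k$, so the claim should be read as a statement about the partial derivative at a frozen reference point.
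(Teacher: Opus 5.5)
Your proof is correct and follows essentially the same route as the paper: use separability so only the $k$-th summand depends on $\tilde L_k$, then apply the chain rule to get $-1/(r_k-\tilde L_k)$. Your quantitative reading of ``stabilized'' (boundedness by $1/\varepsilon$, the error identity $|w_k^{\mathrm{HV}}-s_k|=\varepsilon\,s_k\,w_k^{\mathrm{HV}}$, and preservation of the ordering) and your caveat that $\mathbf{r}$ must be frozen go beyond the paper's one-line justification, but they are all correct.
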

\begin{proof}
The function $\phi$ is separable across coordinates. Differentiating $\log(r_k-\tilde L_k)$ w.r.t.\ $\tilde L_k$ yields
$\partial/\partial \tilde L_k \log(r_k-\tilde L_k)=-(r_k-\tilde L_k)^{-1}$.
All other terms do not depend on $\tilde L_k$.
\end{proof}

\begin{claim}[Relation to the HV gradient]
\label{clm:hv-vs-loghv}
Let $\mathrm{HV}(\tilde{\boldsymbol{\ell}})=\prod_{k}(r_k-\tilde L_k)$ with fixed $\mathbf{r}$. Then
\[
\frac{\partial\,\mathrm{HV}}{\partial \tilde L_k}
=
-\frac{\mathrm{HV}(\tilde{\boldsymbol{\ell}})}{r_k-\tilde L_k},
\qquad
\frac{\partial\,\log \mathrm{HV}}{\partial \tilde L_k}
=
-\frac{1}{r_k-\tilde L_k}.
\]
\end{claim}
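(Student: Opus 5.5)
This is a direct differentiation of a product and its logarithm, using the singleton form from Claim~\ref{clm:hv-singleton}. With $\mathbf{r}$ fixed and $r_j>\tilde L_j$ for all $j$, we write $\mathrm{HV}(\tilde{\boldsymbol{\ell}})=\prod_{j}(r_j-\tilde L_j)$. This is a polynomial in the coordinates of $\tilde{\boldsymbol{\ell}}$, hence smooth, and it is strictly positive on the open region $\{\tilde{\boldsymbol{\ell}}:\mathbf{r}\succ\tilde{\boldsymbol{\ell}}\}$, so its logarithm is also smooth there.

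\emph{HV derivative.} We isolate the $k$-th factor: $\mathrm{HV}=(r_k-\tilde L_k)\,P_{-k}$, where $P_{-k}:=\prod_{j\neq k}(r_j-\tilde L_j)$ does not depend on $\tilde L_k$. The product rule then gives $\partial\mathrm{HV}/\partial\tilde L_k=-P_{-k}$. Since $r_k-\tilde L_k>0$, we may write $P_{-k}=\mathrm{HV}/(r_k-\tilde L_k)$, which yields the first identity.

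\emph{Log-HV derivative.} We give two routes and would present both, since they cross-check each other. First, the chain rule gives $\partial\log\mathrm{HV}/\partial\tilde L_k=(\partial\mathrm{HV}/\partial\tilde L_k)/\mathrm{HV}=-1/(r_k-\tilde L_k)$. Second, this is exactly Claim~\ref{clm:loghv-grad}, applied to the separable form $\phi=\sum_j\log(r_j-\tilde L_j)$ from Corollary~\ref{cor:loghv-pareto}.

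The only real care point is positivity of $r_k-\tilde L_k$, which justifies both the division and the logarithm. It is guaranteed by the standing reference-point condition $\mathbf{r}\succ\tilde{\boldsymbol{\ell}}$. We would close with a remark: the two gradients differ only by the positive scalar $\mathrm{HV}$, so they induce the same normalized allocation direction. This is what justifies using the log form for the planner in Eq.~\eqref{eq:alloc}.
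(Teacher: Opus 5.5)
Your proposal is correct and follows the paper's proof: isolate the $k$-th factor to get $\partial\mathrm{HV}/\partial\tilde L_k=-\prod_{j\neq k}(r_j-\tilde L_j)=-\mathrm{HV}/(r_k-\tilde L_k)$, then obtain the log identity from Claim~\ref{clm:loghv-grad}, with your chain-rule route as an equally valid cross-check. Your explicit use of $r_k-\tilde L_k>0$ to justify the division and the logarithm is a welcome addition, since the paper leaves that point implicit.
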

\begin{proof}
Differentiate the product: $\partial \mathrm{HV}/\partial \tilde L_k = -\prod_{j\neq k}(r_j-\tilde L_j)$, and note that $\prod_{j\neq k}(r_j-\tilde L_j)=\mathrm{HV}/(r_k-\tilde L_k)$. The second identity follows from Claim~\ref{clm:loghv-grad} and $\log \mathrm{HV}=\sum_k \log(r_k-\tilde L_k)$.
\end{proof}

\begin{remark}[Why log-HV sensitivities are preferred]
\label{rem:why-loghv}
Claim~\ref{clm:hv-vs-loghv} reveals a key difference: the HV gradient at coordinate $k$ scales with the current hypervolume value, whereas the log-HV gradient depends only on the slack $(r_k-\tilde L_k)$. Using log-HV sensitivities therefore yields priorities that are invariant to the absolute scale of the dominated region, making them more stable across training phases where HV may span orders of magnitude.
\end{remark}

\subsection{Reference point specification and positivity}
\label{app:hv-ref}

\begin{claim}[Positivity condition for log-HV]
\label{clm:loghv-positive}
$\phi(\tilde{\boldsymbol{\ell}})=\sum_k \log(r_k-\tilde L_k)$ is well-defined if and only if $r_k-\tilde L_k>0$ for all $k$.
\end{claim}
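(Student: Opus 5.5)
The claim is essentially a statement about the domain of the real logarithm, so I will prove both directions directly from the fact that $\log:\mathbb{R}_{>0}\to\mathbb{R}$ is defined exactly on the positive reals.

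For the ``if'' direction, I assume $r_k-\tilde L_k>0$ for every $k\in\{1,\dots,K\}$. Each summand $\log(r_k-\tilde L_k)$ is then a well-defined real number, so the finite sum $\phi(\tilde{\boldsymbol{\ell}})$ is a well-defined real number. I will also note that this is consistent with Claim~\ref{clm:hv-singleton}: the product $\prod_k(r_k-\tilde L_k)$ is strictly positive, so $\log\mathrm{HV}$ exists and equals $\phi$.

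For the ``only if'' direction, I argue by contraposition. Suppose some index $j$ has $r_j-\tilde L_j\le 0$. Then the summand $\log(r_j-\tilde L_j)$ is undefined as a real number. It equals $-\infty$ if the slack is exactly $0$, and it is undefined if the slack is negative. Hence $\phi$, read as a sum of real logarithms, is not well-defined.

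The only subtle point, which I regard as the main obstacle, is the following. For $K\ge 2$, the product $\prod_k(r_k-\tilde L_k)$ can be positive even when an even number of factors are negative. So $\log\mathrm{HV}$ could still be defined through the product form while the separable form $\phi$ is not. I will make explicit that the claim concerns the separable sum $\phi=\sum_k\log(r_k-\tilde L_k)$, so the condition is coordinatewise. I will also remark that with negative slacks the product no longer equals a hypervolume, because Claim~\ref{clm:hv-singleton} requires $\mathbf{r}\succ\tilde{\boldsymbol{\ell}}$.

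Finally, I will connect this to the monotone safeguard $r_k\leftarrow\max(r_k,\tilde L_k+\delta)$ with $\delta>0$. After the safeguard, $r_k-\tilde L_k\ge\delta>0$ for every $k$, so the positivity condition, and hence well-definedness of $\phi$ and of $w_k^{\mathrm{HV}}$, holds at every planning step.
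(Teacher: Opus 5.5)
Your proof is correct and follows the paper's argument: each summand $\log(r_k-\tilde L_k)$ is a real number exactly when its argument is positive, and well-definedness of the finite sum then follows coordinatewise by separability. Your extra remarks go beyond what the paper proves but are consistent with it: the product form can be positive when an even number of slacks are negative, and the $\delta$-safeguard guarantees positivity at every planning step.
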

\begin{proof}
Each term $\log(r_k-\tilde L_k)$ is defined if and only if its argument is positive. The claim follows by separability.
\end{proof}

\begin{remark}[Reference point initialization]
To ensure positivity throughout training, we initialize $r_k$ from a short warm-up window observed on the full graph:
$r_k \leftarrow \max_{t\le t_{\mathrm{warm}}}\tilde L_k(t) + \delta$,
matching the additive margin used in Algorithm~\ref{alg:controlg-appendix}. We also apply a monotone safeguard $r_k\leftarrow \max(r_k,\tilde L_k+\delta)$ to handle rare upward fluctuations. Since HV values depend on $\mathbf{r}$, we treat log-HV as a priority signal for planning rather than a globally comparable metric across time.
\end{remark}

\subsection{Deriving the allocation plan}
\label{app:hv-alloc-derivation}

Sec.~\ref{sec:planning} converts per-objective planning priorities into an allocation fraction $f\in\Delta^K$. Let
\[
a_k \;:=\; \frac{w_k^{\mathrm{HV}}}{1+\gamma D_k},
\]
which we interpret as a difficulty-adjusted marginal value score (higher is better).

\begin{proposition}[Proportional-fair allocation yields normalized priorities]
\label{prop:alloc-pf}
Consider the concave planning objective
\[
\max_{f\in\Delta^K,\ f\succ 0}\;\; \sum_{k=1}^K a_k \log f_k.
\]
Its unique maximizer is $f_k^\star = a_k / \sum_{j=1}^K a_j$, which matches Eq.~\eqref{eq:alloc} in Sec.~\ref{sec:planning}.
\end{proposition}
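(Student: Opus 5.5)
The plan is a standard Lagrangian/KKT argument for a strictly concave objective on the relative interior of the simplex, preceded by a check that the $a_k$ are strictly positive.

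\textbf{Step 1: positivity of the weights.} We have $w_k^{\mathrm{HV}}=1/(r_k-\tilde L_k+\varepsilon)>0$, by the reference-point condition $r_k>\tilde L_k$ (Claim~\ref{clm:loghv-positive}). We also have $1+\gamma D_k>0$, assuming $\gamma\ge 0$ and $D_k\ge D_{\min}\ge 0$; Claim~\ref{clm:composite} guarantees that $D_k\in[D_{\min},D_{\max}]$. Hence $a_k>0$ for every $k$. This is the one hypothesis I would make explicit, since it is where the argument could fail if $\gamma D_k\le -1$ were permitted.

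\textbf{Step 2: concavity and uniqueness.} The objective $F(f)=\sum_k a_k\log f_k$ is strictly concave on the open positive orthant: its Hessian is $\mathrm{diag}(-a_k/f_k^2)\prec 0$. The feasible set $\{f\succ 0,\ \mathbf{1}^\top f=1\}$ is convex, so there is at most one maximizer.

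\textbf{Step 3: existence.} As any $f_k\to 0^+$, the term $a_k\log f_k\to-\infty$, while the remaining terms stay bounded above because $f_j\le 1$. Therefore the superlevel set $\{F\ge F(\mathbf{1}/K)\}$ is a compact subset of the relative interior of the simplex, and the supremum is attained there. Since the maximizer is interior, the inequality constraints are inactive, and only the equality constraint matters.

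\textbf{Step 4: identifying the maximizer.} I would form $\mathcal{L}(f,\mu)=\sum_k a_k\log f_k-\mu(\mathbf{1}^\top f-1)$. Stationarity gives $a_k/f_k=\mu$, so $f_k=a_k/\mu$. Substituting into $\sum_k f_k=1$ yields $\mu=\sum_j a_j$, hence $f_k^\star=a_k/\sum_j a_j$. This is exactly Eq.~\eqref{eq:alloc}.

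As a cross-check that avoids KKT altogether, I would write $A=\sum_j a_j$ and $\pi_k=a_k/A$. Then
\[
F(f)-F(f^\star)=-A\,\mathrm{KL}(\pi\,\|\,f)\le 0,
\]
with equality if and only if $f=\pi$, by Gibbs' inequality. This one line already gives existence, optimality and uniqueness, so I would likely present it as the proof and use the KKT derivation as the intuition behind it.
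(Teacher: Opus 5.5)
Your proposal is correct, and its core (Steps 2 and 4) is exactly the paper's proof. The paper forms $\mathcal{L}(f,\nu)=\sum_k a_k\log f_k+\nu(1-\sum_k f_k)$, reads off $f_k=a_k/\nu$ from stationarity, fixes $\nu=\sum_j a_j$ by normalization, and cites strict concavity on $f\succ 0$ for uniqueness.

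You differ from the paper in what you make explicit and in the argument you would actually present.

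\textbf{Positivity (Step 1).} The paper uses $a_k>0$ only tacitly. Without it, $f_k=a_k/\nu$ need not be feasible, and if some $a_k\le 0$ the problem has no unique maximizer over $f\succ 0$. You are right to surface this. In the paper's pipeline nothing stated guarantees $1+\gamma D_k>0$: robustly normalized scores can be negative, and $D_{\min}$ is never fixed. What is actually needed is $\gamma D_{\min}>-1$, which is weaker than your $D_{\min}\ge 0$ but is still an added hypothesis.

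\textbf{Existence (Step 3).} The paper never argues that its stationary point is a maximizer. That follows from sufficiency of the Lagrange conditions for a concave objective under an affine constraint: $F(f)\le F(f^\star)+\nabla F(f^\star)^\top(f-f^\star)=F(f^\star)$ for every feasible $f$. So your compactness argument is correct but not strictly needed once the stationary point is in hand.

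\textbf{The KL identity.} Your identity $F(f)-F(f^\star)=-A\,\mathrm{KL}(\pi\,\|\,f)$ is a genuinely different and tighter route. Via Gibbs' inequality it gives existence, global optimality and uniqueness in one line, with no constraint qualification or Hessian computation. It also makes the role of positivity transparent, since $\pi$ must be a probability vector. The trade-off is that it verifies a known answer rather than deriving it. Presenting the Lagrangian computation as motivation and the KL identity as the proof, as you suggest, is therefore a sensible combination.
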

\begin{proof}
Form the Lagrangian $\mathcal{L}(f,\nu)=\sum_k a_k \log f_k + \nu(1-\sum_k f_k)$.
Stationarity gives $\partial \mathcal{L}/\partial f_k = a_k/f_k - \nu = 0$, so $f_k=a_k/\nu$.
Enforcing $\sum_k f_k=1$ yields $\nu=\sum_j a_j$, hence $f_k^\star=a_k/\sum_j a_j$.
Strict concavity of $\sum_k a_k\log f_k$ on $f\succ 0$ implies uniqueness.
\end{proof}

\begin{remark}[Interpretation of the proportional-fair planner]
Proposition~\ref{prop:alloc-pf} provides a principled rationale for converting priorities into a non-degenerate allocation over tasks: it favors high-priority tasks while avoiding collapsing all budget onto a single objective. Other regularizers (e.g., entropy or minimum-floor constraints) can be incorporated via standard convex analysis; we evaluate these variants in ablations.
\end{remark}

\paragraph{Allocation floor.}
In practice we apply a small allocation floor $f_{\min}\in[0,1/K)$ to the planner output, guaranteeing a minimum scheduled share per objective:
\begin{equation}
\label{eq:fmin}
\tilde f_k \;=\; f_{\min} + (1-Kf_{\min})\,\frac{a_k}{\sum_{j} a_j},
\qquad\text{so that}\qquad \sum_k \tilde f_k = 1,\quad \tilde f_k \ge f_{\min}.
\end{equation}
Setting $f_{\min}=0$ recovers the unfloored plan $f_k^\star$ of Proposition~\ref{prop:alloc-pf}. The tuned values (App.~\ref{app:hparams}, typically $f_{\min}\in\{0,0.05,0.1\}$) act as an implementation stabilizer that complements the controller's $\epsilon$-exploration floor; it is a strategic-level guarantee and is not required by any theorem in this section.

\section{Tactical Execution: Deficit Dynamics and Guarantees}
\label{app:control-details}

This appendix formalizes the controller in Sec.~\ref{sec:control}. We fix an epoch $t$ and suppress $(t)$ in $f_k(t)$ for readability.

\subsection{Deficit dynamics and basic identities}
\label{app:deficit-dynamics}

\begin{definition}[Counts, reference, and pre-decision deficits]
Let $f\in\Delta^K$ be fixed within an epoch. Let $N_k(m):=\sum_{\tau=1}^{m}\bbone\{k_{t,\tau}=k\}$ be realized counts after $m$ executed blocks, and let $N_k^{\mathrm{ref}}(m):=m f_k$ be the reference trajectory. Define the pre-decision deficit at block $m$ as
\[
e_k(m) := N_k^{\mathrm{ref}}(m) - N_k(m-1) = m f_k - N_k(m-1).
\]
\end{definition}

\begin{claim}[Sum of deficits is constant]
\label{clm:deficit-sum}
For every $m\ge 1$, $\sum_{k=1}^K e_k(m)=1$.
\end{claim}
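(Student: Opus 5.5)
The plan is to sum the definition of the pre-decision deficit over $k$ and evaluate the two resulting sums separately. By definition $e_k(m)=m f_k - N_k(m-1)$, so
\[
\sum_{k=1}^K e_k(m) \;=\; m\sum_{k=1}^K f_k \;-\; \sum_{k=1}^K N_k(m-1).
\]
The first sum equals $m$ because $f\in\Delta^K$, so $\sum_k f_k=1$. This is why the claim requires $f$ to be fixed within the epoch. If a floored plan $\tilde f$ from Eq.~\eqref{eq:fmin} is used instead, it also sums to one, so the same step applies.

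For the second sum, I will swap the order of summation in the definition of the counts:
\[
\sum_{k=1}^K N_k(m-1)=\sum_{\tau=1}^{m-1}\sum_{k=1}^K \bbone\{k_{t,\tau}=k\}=\sum_{\tau=1}^{m-1}1=m-1.
\]
The inner sum is one because each block selects exactly one task $k_{t,\tau}\in\{1,\dots,K\}$. When $m=1$ the outer sum is empty, so it equals $0=m-1$ and that case needs no separate treatment. Subtracting the two sums gives $m-(m-1)=1$.

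No step here is a real obstacle. The one point to state explicitly is that $f$ is held fixed across blocks $1,\dots,m$ of the epoch, so that $N_k^{\mathrm{ref}}(m)=mf_k$ holds exactly. If the planner instead re-plans mid-epoch every $u$ blocks, as in Algorithm~\ref{alg:controlg-main}, the reference becomes the accumulated sum $\sum_{\tau\le m} f_k^{(\tau)}$. Each $f^{(\tau)}$ lies in the simplex, so the accumulated reference summed over $k$ still equals $m$, and the identity survives unchanged. I would note this in a remark after the proof.
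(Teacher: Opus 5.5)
Your proof is correct and follows the paper's argument exactly. You sum the definition of $e_k(m)$, use $\sum_k f_k=1$ for the reference term, and use the fact that exactly one task is selected per block to get $\sum_k N_k(m-1)=m-1$. Your extra observations are valid refinements that the paper does not spell out: the empty-sum treatment of $m=1$, and the point that mid-epoch re-planning preserves the identity because every plan lies in the simplex.
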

\begin{proof}
We have
\[
\sum_{k=1}^K e_k(m)
=
\sum_{k=1}^K \big(m f_k - N_k(m-1)\big)
=
m\sum_{k=1}^K f_k - \sum_{k=1}^K N_k(m-1).
\]
Because $f\in\Delta^K$, $\sum_k f_k=1$, and because exactly one task is selected per block, $\sum_k N_k(m-1)=m-1$. Hence $\sum_k e_k(m)=m-(m-1)=1$.
\end{proof}

\begin{claim}[Deficit recursion]
\label{clm:deficit-recursion}
Let $k_{t,m}$ be the task executed at block $m$. Then for every $k$ and $m\ge 1$,
\[
e_k(m+1)= e_k(m) + f_k - \bbone\{k_{t,m}=k\}.
\]
\end{claim}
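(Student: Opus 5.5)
The recursion follows by writing both deficits from the definition and subtracting. By the definition of the pre-decision deficit,
\[
e_k(m+1) = (m+1) f_k - N_k(m), \qquad e_k(m) = m f_k - N_k(m-1).
\]
Subtracting the second identity from the first gives
\[
e_k(m+1) - e_k(m) = f_k - \big(N_k(m) - N_k(m-1)\big).
\]

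The only remaining step is to identify the count increment. Since $N_k(m)=\sum_{\tau=1}^{m}\bbone\{k_{t,\tau}=k\}$, the increment is
\[
N_k(m)-N_k(m-1)=\bbone\{k_{t,m}=k\},
\]
that is, the single term added at block $m$. This holds for every $m\ge 1$ with the convention $N_k(0)=0$, which is the empty sum. Substituting this increment into the difference yields the stated recursion.

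There is no genuine obstacle here. The one point that needs care is bookkeeping. The pre-decision deficit at $m+1$ uses $N_k(m)$, which is the count after block $m$ has been executed. This is why the decision $k_{t,m}$, and not $k_{t,m+1}$, appears in the recursion. It also relies on $f$ being held fixed within the epoch, as assumed in this appendix. If $f$ were re-planned mid-epoch at a sensing step, then $N_k^{\mathrm{ref}}$ would be updated incrementally as in Algorithm~\ref{alg:controlg-main}, and $f_k$ in the recursion would become the plan value in force when block $m+1$ is scheduled. The same subtraction argument still applies in that case.

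As a consistency check, summing the recursion over $k$ gives $\sum_k e_k(m+1)=\sum_k e_k(m)+1-1$. This agrees with Claim~\ref{clm:deficit-sum}.
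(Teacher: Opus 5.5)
Your proof is correct and follows essentially the same argument as the paper: expand $e_k(m+1)=(m+1)f_k-N_k(m)$, rewrite it as $e_k(m)+f_k-\big(N_k(m)-N_k(m-1)\big)$, and identify the count increment with $\bbone\{k_{t,m}=k\}$. Your extra remarks on mid-epoch re-planning and the consistency check against Claim~\ref{clm:deficit-sum} are also sound.
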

\begin{proof}
By definition,
\[
e_k(m+1) = (m+1)f_k - N_k(m)
= mf_k - N_k(m-1) + f_k - \big(N_k(m)-N_k(m-1)\big).
\]
The increment $N_k(m)-N_k(m-1)$ equals $\bbone\{k_{t,m}=k\}$, giving the recursion.
\end{proof}

\subsection{A worst-case tracking guarantee in the max-deficit limit}
\label{app:max-deficit}

The following result provides a clean, worst-case tracking guarantee for a canonical deterministic controller. It is the discrete analogue of deficit/credit-based fair schedulers.

\begin{definition}[Max-deficit controller]
\label{def:max-deficit}
At each block $m$, choose
\[
k_{t,m}\in \arg\max_{k\in[K]} e_k(m).
\]
(Ties may be broken arbitrarily.)
\end{definition}

\begin{theorem}[Bounded tracking error under max-deficit scheduling]
\label{thm:bounded-tracking}
Define the post-execution discrepancy
\[
d_k(m) := N_k(m) - m f_k.
\]
Under the max-deficit controller (Def.~\ref{def:max-deficit}), for every $m\ge 0$ and every $k\in[K]$,
\[
d_k(m) \;<\; 1 \qquad\text{and}\qquad d_k(m) \;>\; -(K-1),
\]
so $|d_k(m)| < K-1$. The bound is $O(K)$, and hence $O(1)$ for a fixed task pool.
\end{theorem}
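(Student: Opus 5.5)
The argument is a short induction using the deficit identities of App.~\ref{app:deficit-dynamics}, plus a conservation law. First I would relate the post-execution discrepancy to the pre-decision deficit. Directly from the definitions,
\[
e_k(m+1) = (m+1)f_k - N_k(m) = f_k - d_k(m),
\]
and $\sum_k d_k(m) = \sum_k N_k(m) - m\sum_k f_k = m - m = 0$ for every $m\ge 0$. This zero-sum identity is the same counting fact behind Claim~\ref{clm:deficit-sum}.

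\emph{Upper bound $d_k(m)<1$.} I would argue by induction on $m$. The base case is $d_k(0)=0<1$. For the step at block $m$, I distinguish two cases for each $k$.
\begin{itemize}
\item If $k$ is \emph{not} selected, then $d_k(m)=d_k(m-1)-f_k\le d_k(m-1)<1$ by the induction hypothesis, since $f_k\ge 0$.
\item If $k=k_{t,m}$ is selected, then $d_k(m)=N_k(m-1)+1-mf_k = 1-e_k(m)$. By Claim~\ref{clm:deficit-sum} the $K$ deficits sum to $1$, so the maximal one satisfies $e_k(m)\ge 1/K>0$. Hence $d_k(m)\le 1-1/K<1$.
\end{itemize}
The point of the max-deficit rule is exactly this: a task is only ever incremented when its deficit is strictly positive, so its count never overshoots the reference by a full block.

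\emph{Lower bound.} This follows from the upper bound and the zero-sum identity:
\[
d_k(m) = -\sum_{j\neq k} d_j(m) > -(K-1),
\]
because each of the $K-1$ summands satisfies $d_j(m)<1$. Combining the two bounds gives $-(K-1)<d_k(m)<1$. Hence $|d_k(m)|<\max\{1,K-1\}$, which equals $K-1$ for $K\ge 2$; the case $K=1$ is trivial since then $d_1\equiv 0$. The bound is $O(K)$, i.e.\ $O(1)$ for a fixed pool.

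I do not expect a serious obstacle. The only delicate step is the selected-task case, which requires recognizing that $d_k(m)=1-e_k(m)$ and that the maximum deficit is positive. That positivity is precisely where Claim~\ref{clm:deficit-sum} enters, and it holds for arbitrary tie-breaking.
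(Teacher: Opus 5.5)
Your proposal is correct and follows the paper's proof essentially step for step. It proves $d_k(m)<1$ by induction, handling the selected case through positivity of the maximal deficit (Claim~\ref{clm:deficit-sum}), and then derives the lower bound from $\sum_k d_k(m)=0$. Your version is slightly tighter in two places: the selected case gives $d_k(m)=1-e_k(m)\le 1-1/K$, and you use $\le$ instead of the paper's strict $<$ in the unselected case, which is needed when $f_k=0$. For $K=1$, the strict lower bound $d_1(m)>-(K-1)=0$ is false because the sum over $j\neq k$ is empty and $d_1\equiv 0$. This degenerate case affects the paper's proof equally, so calling it ``trivial'' is slightly off, but your $\max\{1,K-1\}$ form of the absolute bound does still hold there.
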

\begin{proof}
We proceed in two parts.

\textbf{(Upper bound: $d_k(m)<1$).}
We prove by induction on $m$. The base case $m=0$ satisfies $d_k(0)=0<1$. Assume $d_k(m-1)<1$ for all $k$.

If task $k$ is \emph{not} chosen at block $m$, then $N_k(m)=N_k(m-1)$ and
\begin{align*}
d_k(m)&=N_k(m-1)-mf_k = \big(N_k(m-1)-(m-1)f_k\big)-f_k \\
&= d_k(m-1)-f_k < d_k(m-1)<1.
\end{align*}

If task $k$ \emph{is} chosen at block $m$, then $N_k(m)=N_k(m-1)+1$ and
\[
d_k(m)=N_k(m-1)+1-mf_k = d_k(m-1)+1-f_k.
\]
To show this is $<1$, we use the selection rule. Note that
\[
e_k(m)=mf_k-N_k(m-1)= f_k - d_k(m-1).
\]
By Claim~\ref{clm:deficit-sum}, $\sum_j e_j(m)=1$, hence $\max_j e_j(m)\ge 1/K>0$. Since $k$ is chosen among maximizers, $e_k(m)>0$, implying $d_k(m-1)<f_k$. Therefore
\[
d_k(m)=d_k(m-1)+1-f_k < f_k + 1 - f_k = 1.
\]

\textbf{(Lower bound: $d_k(m)>-(K-1)$).}
The discrepancies sum to zero: since exactly one task is selected per block, $\sum_k N_k(m)=m$, and as $\sum_k f_k=1$,
\[
\sum_{k} d_k(m) = \sum_k N_k(m) - m\sum_k f_k = m - m = 0 .
\]
Fix any $j\in[K]$. Applying the upper bound $d_k(m)<1$ to each of the other $K-1$ tasks,
\[
d_j(m) = -\sum_{k\neq j} d_k(m) \;>\; -\sum_{k\neq j} 1 \;=\; -(K-1).
\]
Combining the two bounds gives $-(K-1) < d_k(m) < 1$, i.e. $|d_k(m)|<K-1$.
\end{proof}

\begin{remark}[Why the lower bound scales with $K$]
\label{rem:lower-bound-tight}
The symmetric statement $|d_k(m)|<1$ does not hold in general for $K>2$. Under a skewed target $f$, the max-deficit rule can repeatedly favor a few high-share tasks while a low-share task accumulates discrepancy below $-1$ before becoming the maximizer; a short deterministic simulation already exhibits $\min_{k,m} d_k(m)\approx -1.23$ for $K=8$. The one-sided over-allocation bound $d_k(m)<1$ is always tight, while under-allocation is controlled at the $O(K)$ scale above. In \modelname{} the $\epsilon$-exploration floor (Sec.~\ref{app:exploration}) provides a complementary, allocation-independent visitation guarantee.
\end{remark}

\subsection{Relation to the PID-softmax controller}
\label{app:pid-softmax}

\begin{claim}[Softmax concentrates on argmax as temperature vanishes]
\label{clm:softmax-argmax}
Fix $u\in\R^K$ and define $q_\tau=\softmax(u/\tau)$ with $\tau>0$. Let $\mathcal{A}=\arg\max_k u_k$. Then as $\tau\downarrow 0$, $q_\tau$ assigns probability mass only to $\mathcal{A}$, and for any $i\notin\mathcal{A}$, $q_{\tau,i}\to 0$.
\end{claim}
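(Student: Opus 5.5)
We work directly from the softmax formula, normalizing by the maximal logit so the limit becomes transparent. Let $u^\ast:=\max_k u_k$ and write the gap $\delta_i:=u^\ast-u_i\ge 0$, which is strictly positive exactly when $i\notin\mathcal{A}$. Multiplying numerator and denominator of $q_{\tau,i}=e^{u_i/\tau}/\sum_j e^{u_j/\tau}$ by $e^{-u^\ast/\tau}$ gives
\[
q_{\tau,i}=\frac{e^{-\delta_i/\tau}}{\sum_{j=1}^K e^{-\delta_j/\tau}}
=\frac{e^{-\delta_i/\tau}}{|\mathcal{A}|+\sum_{j\notin\mathcal{A}} e^{-\delta_j/\tau}} .
\]
This form removes any overflow or scale issue: every term lies in $(0,1]$, and the denominator is bounded below by $|\mathcal{A}|\ge 1$ and above by $K$.

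For $i\notin\mathcal{A}$ we have $\delta_i\ge\delta_{\min}:=\min_{j\notin\mathcal{A}}\delta_j>0$. Since the denominator is at least $1$, we get $q_{\tau,i}\le e^{-\delta_i/\tau}\le e^{-\delta_{\min}/\tau}\to 0$ as $\tau\downarrow 0$. Summing over the at most $K-1$ non-maximizers then yields the quantitative statement
\[
\sum_{i\notin\mathcal{A}} q_{\tau,i}\le (K-1)e^{-\delta_{\min}/\tau}.
\]
Consequently $\sum_{k\in\mathcal{A}}q_{\tau,k}\to 1$, which is what "assigns probability mass only to $\mathcal{A}$" means in the limit.

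We will also note, though the claim does not require it, that for $k\in\mathcal{A}$ the numerator equals $1$, so $q_{\tau,k}\to 1/|\mathcal{A}|$. The limit is therefore uniform over the maximizers. This refinement is what links the PID-softmax sampler to the max-deficit rule of Definition~\ref{def:max-deficit}, with ties broken uniformly at random.

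The only subtlety is the degenerate case $\mathcal{A}=[K]$, where there are no non-maximizers. There the statement holds trivially, because $q_\tau$ is exactly uniform for every $\tau>0$. We treat this case separately before invoking $\delta_{\min}$, since $\delta_{\min}$ is then undefined. No step is a genuine obstacle; the key move is the shift by $u^\ast$.
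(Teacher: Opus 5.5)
Your proof is correct and follows essentially the same route as the paper: shift every logit by $u^\ast$, so that each maximizer contributes exactly $1$ to the denominator while each non-maximizer's numerator $e^{-\delta_i/\tau}$ tends to $0$. Your explicit tail bound $(K-1)e^{-\delta_{\min}/\tau}$, the $1/|\mathcal{A}|$ limit on the maximizers, and the separate treatment of $\mathcal{A}=[K]$ are small quantitative refinements that the paper's proof leaves implicit.
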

\begin{proof}
Let $u^\star=\max_k u_k$. For any $i$,
\[
q_{\tau,i}=\frac{e^{u_i/\tau}}{\sum_j e^{u_j/\tau}}
=\frac{e^{(u_i-u^\star)/\tau}}{\sum_j e^{(u_j-u^\star)/\tau}}\,.
\]
If $i\notin\mathcal{A}$ then $u_i-u^\star<0$, so $e^{(u_i-u^\star)/\tau}\to 0$ as $\tau\downarrow 0$, while for $j\in\mathcal{A}$ the exponent is $0$ and the corresponding terms remain $1$. Hence $q_{\tau,i}\to 0$ for $i\notin\mathcal{A}$ and total mass concentrates on $\mathcal{A}$.
\end{proof}

\begin{remark}[Max-deficit as a limiting case of PID-softmax]
If $K_I^{(k)}=K_D^{(k)}=0$, $\epsilon=0$, and $\nu_k(m)=K_P e_k(m)$, then $\arg\max_k \nu_k(m)=\arg\max_k e_k(m)$ and by Claim~\ref{clm:softmax-argmax} the sampling rule in Eq.~\eqref{eq:prob} approaches the deterministic max-deficit controller as $\tau\downarrow 0$.
\end{remark}

\subsection{Starvation prevention under explicit exploration}
\label{app:exploration}

\begin{claim}[Uniform lower bound on per-step selection probability]
\label{clm:pk-lower}
Under Eq.~\eqref{eq:prob}, for every $m$ and every $k$, $p_k(m)\ge \epsilon/K$.
\end{claim}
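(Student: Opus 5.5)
The bound follows directly from the mixture form of Eq.~\eqref{eq:prob}. I would write $p(m)=(1-\epsilon)\,q(m)+\tfrac{\epsilon}{K}\mathbf{1}$ with $q(m):=\softmax(\nu(m)/\tau)$. Then, coordinatewise, $p_k(m)=(1-\epsilon)q_k(m)+\epsilon/K$.

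The only facts needed are these two:
\begin{itemize}
\item $q(m)$ is a probability vector, so $q_k(m)\ge 0$. Every softmax entry is a ratio of positive exponentials, since $\tau>0$ and the logits $\nu_k(m)$ from Eq.~\eqref{eq:pid} are finite real numbers.
\item The exploration parameter satisfies $\epsilon\in[0,1]$, so $1-\epsilon\ge 0$.
\end{itemize}
The product $(1-\epsilon)q_k(m)$ is therefore nonnegative, which gives $p_k(m)\ge \epsilon/K$ for every $m$ and $k$. Nothing here depends on the PID gains, the deficits, the integral state, or the plan $f$. The bound is therefore uniform over all histories, and that uniformity is what the later drought-tail argument will use.

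I would also check that $p(m)$ is a valid distribution: $\sum_k p_k(m)=(1-\epsilon)+\epsilon=1$. This confirms that sampling $k_{t,m}\sim\mathrm{Cat}(p(m))$ is well defined.

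The only step that needs care is stating $\epsilon\in[0,1]$ explicitly. If $\epsilon>1$ were allowed, the mixture weight $1-\epsilon$ would be negative and the argument would fail. In the boundary case $\epsilon=0$ the bound reduces to the trivial $p_k\ge 0$.

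I would also note, without relying on it, that since the logits are finite the softmax term is strictly positive, so for $\epsilon<1$ the inequality is in fact strict.

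The point I would emphasize for the subsequent geometric drought bound is that the per-step lower bound holds \emph{conditionally on the entire past}. This follows because $p(m)$ is a deterministic function of the history up to block $m-1$.
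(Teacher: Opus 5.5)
Your proposal is correct and follows the paper's argument: both read Eq.~\eqref{eq:prob} as a mixture of the softmax distribution with the uniform distribution and conclude that every coordinate is at least $\epsilon/K$. Your explicit requirement $\epsilon\in[0,1]$ is the condition that the paper's ``convex combination'' phrasing assumes without stating.
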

\begin{proof}
Eq.~\eqref{eq:prob} is a convex combination of two distributions: $(1-\epsilon)\softmax(\cdot)$ and $\epsilon \cdot \mathbf{1}/K$. Therefore each coordinate is at least $\epsilon/K$.
\end{proof}

\begin{proposition}[Geometric tail bound on drought length]
\label{prop:drought}
Fix any task $k$ and any time $m_0$. Let $T$ be the first time $m> m_0$ such that $k_{t,m}=k$. Then for any integer $s\ge 1$,
\[
\Pr\!\big(T>m_0+s \,\big|\, \mathcal{F}_{m_0}\big)
\;\le\;
\Big(1-\frac{\epsilon}{K}\Big)^{s},
\]
where $\mathcal{F}_{m_0}$ is the sigma-field generated by the history up to time $m_0$.
\end{proposition}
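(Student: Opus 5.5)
The plan is to write the drought event as an intersection of per-step non-selection events and peel them off one at a time by conditioning on the natural filtration. Claim~\ref{clm:pk-lower} does the per-step work. Let $\mathcal{F}_m$ be the sigma-field generated by the history through block $m$: chosen tasks, counts, deficits, integrals, planner outputs, and the training randomness $\xi$. Then $\{T>m_0+s\}=\bigcap_{j=1}^{s}\{k_{t,m_0+j}\neq k\}$.

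The key observation is that $p(m)$ in Eq.~\eqref{eq:prob} is $\mathcal{F}_{m-1}$-measurable. The logits $\nu(m)$ depend only on $e_k(m)$, $I_k(m)$ and $\Delta e_k(m)$. These in turn depend on $f$ and on counts $N_k(m-1)$, all of which are determined before block $m$ is sampled. The sampling rule then gives $\Pr(k_{t,m}=k\mid\mathcal{F}_{m-1})=p_k(m)\ge\epsilon/K$ by Claim~\ref{clm:pk-lower}. Hence, almost surely, $\Pr(k_{t,m}\neq k\mid\mathcal{F}_{m-1})\le 1-\epsilon/K$.

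Next I induct on $s$ using the tower property. Define $A_s:=\bigcap_{j=1}^{s}\{k_{t,m_0+j}\neq k\}$. The event $A_{s-1}$ lies in $\mathcal{F}_{m_0+s-1}$, so
\[
\Pr(A_s\mid\mathcal{F}_{m_0})=\E\!\left[\bbone_{A_{s-1}}\Pr\!\left(k_{t,m_0+s}\neq k\,\middle|\,\mathcal{F}_{m_0+s-1}\right)\,\middle|\,\mathcal{F}_{m_0}\right]\le\Big(1-\tfrac{\epsilon}{K}\Big)\Pr(A_{s-1}\mid\mathcal{F}_{m_0}).
\]
The base case $s=1$ is the one-step bound itself. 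Iterating the display gives the claimed $(1-\epsilon/K)^s$.

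The main obstacle is the measurability and bookkeeping step, not the arithmetic. Two issues need care. First, if a sensing or planning update occurs mid-window (Algorithm~\ref{alg:controlg-main}), $f$ changes, but it is computed from the state before sampling, so $p(m)$ stays predictable. Second, if the window crosses an epoch boundary, the controller reset is deterministic given the history. In both cases Claim~\ref{clm:pk-lower} holds uniformly, whatever $f$ or the PID state is, so the bound survives any adaptive, history-dependent choice of $\nu$. I would state this explicitly: only the $\epsilon$-mixture floor is used, and nothing about tracking quality.
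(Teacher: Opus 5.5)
Your proof is correct and follows the paper's argument. The $\epsilon/K$ floor from Claim~\ref{clm:pk-lower} gives a per-block non-selection bound of $1-\epsilon/K$ given $\mathcal{F}_{m-1}$, and these bounds are chained over the $s$ blocks of the window. Your tower-property induction is the rigorous form of the paper's ``chain rule'' step: the paper writes the conditional probability of the intersection as an equality with a product of conditional probabilities taken under different sigma-fields, whereas you correctly derive it as an iterated inequality.
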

\begin{proof}
By Claim~\ref{clm:pk-lower}, for every $m\ge m_0$ we have
\[
\Pr(k_{t,m}\neq k \mid \mathcal{F}_{m-1}) = 1 - p_k(m) \le 1-\epsilon/K.
\]
Therefore, using the chain rule for conditional probabilities,
\begin{align*}
\Pr(T>m_0+s \mid \mathcal{F}_{m_0})
&=
\Pr\Big(\bigcap_{r=1}^{s}\{k_{t,m_0+r}\neq k\}\,\Big|\,\mathcal{F}_{m_0}\Big)\\
&=
\prod_{r=1}^{s} \Pr(k_{t,m_0+r}\neq k \mid \mathcal{F}_{m_0+r-1})\\
&\le \Big(1-\frac{\epsilon}{K}\Big)^{s}.
\end{align*}
\end{proof}

\begin{corollary}[Bound on expected waiting time]
\label{cor:expected-wait}
Under the conditions of Prop.~\ref{prop:drought}, $\E[T-m_0 \mid \mathcal{F}_{m_0}] \le K/\epsilon$.
\end{corollary}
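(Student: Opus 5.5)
The plan is the tail-sum formula for the expectation of a nonnegative integer-valued random variable, combined with the geometric tail bound of Prop.~\ref{prop:drought}. Set $S:=T-m_0$. Since $T$ is the first time strictly after $m_0$ at which task $k$ is selected, $S$ takes values in $\{1,2,\dots\}\cup\{\infty\}$. Conditionally on $\mathcal{F}_{m_0}$ we have
\[
\E[S\mid\mathcal{F}_{m_0}] \;=\; \sum_{s=0}^{\infty}\Pr(S>s\mid\mathcal{F}_{m_0}).
\]
This identity also holds in the extended sense when $S=\infty$ has positive probability. In our case, however, the tail bound forces $\Pr(S=\infty\mid\mathcal{F}_{m_0})=0$, so that degenerate case does not arise.

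Next, split off the $s=0$ term, which is trivially at most $1$. For $s\ge 1$, apply Prop.~\ref{prop:drought}, which gives $\Pr(S>s\mid\mathcal{F}_{m_0})=\Pr(T>m_0+s\mid\mathcal{F}_{m_0})\le(1-\epsilon/K)^s$. The $s=0$ term also fits this pattern, since $(1-\epsilon/K)^0=1$. Summing the geometric series with ratio $1-\epsilon/K\in[0,1)$, which requires $\epsilon>0$, gives
\[
\E[S\mid\mathcal{F}_{m_0}] \;\le\; \sum_{s=0}^{\infty}\Big(1-\frac{\epsilon}{K}\Big)^s \;=\; \frac{1}{\epsilon/K} \;=\; \frac{K}{\epsilon}.
\]

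There is no real obstacle here; the work is bookkeeping at three points. First, the tail-sum identity must be the version indexed from $s=0$, because $S\ge1$. Second, the conditional expectation should be handled via the conditional version of the identity, which follows from monotone convergence applied to $S=\sum_{s\ge0}\bbone\{S>s\}$. Third, the statement should carry the implicit assumption $\epsilon>0$; for $\epsilon=0$ the bound $K/\epsilon$ is vacuous, and we can read it as $+\infty$.

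If the epoch ends after $M$ blocks, so that the horizon is finite, the bound still holds in either of two ways. One option is to extend the selection process beyond $M$ under the same sampling rule. The other is to interpret $T$ as truncated at the epoch boundary, which can only decrease the expectation.
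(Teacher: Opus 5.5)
Your proof is correct and is essentially the paper's argument: apply the tail-sum identity $\E[X]=\sum_{s\ge 0}\Pr(X>s)$ to $X=T-m_0$ conditionally on $\mathcal{F}_{m_0}$, bound each term by Prop.~\ref{prop:drought}, and sum the geometric series to get $K/\epsilon$. Your extra bookkeeping only makes explicit what the paper leaves implicit: the separate $s=0$ term (the proposition is stated for $s\ge 1$), the conditional monotone-convergence step, the assumption $\epsilon>0$, and the finite-horizon remark.
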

\begin{proof}
A nonnegative integer-valued random variable satisfies $\E[X]=\sum_{s\ge 0}\Pr(X>s)$. Apply Prop.~\ref{prop:drought} with $X=T-m_0$:
\begin{align*}
\E[T-m_0 \mid \mathcal{F}_{m_0}]
&=
\sum_{s\ge 0}\Pr(T>m_0+s\mid \mathcal{F}_{m_0})\\
&\le
\sum_{s\ge 0}\Big(1-\frac{\epsilon}{K}\Big)^s
=
\frac{K}{\epsilon}\,.
\end{align*}
\end{proof}

\subsection{Implementation stabilizers}
\label{app:stabilizers}

\begin{claim}[Anti-windup by clipping keeps the integral state bounded]
\label{clm:antiwindup}
If the integrator update is implemented as
\[
I_k(m) \leftarrow \clip\big(I_k(m-1)+e_k(m),\,[-I_{\max},I_{\max}]\big),
\]
then $|I_k(m)|\le I_{\max}$ for all $m$.
\end{claim}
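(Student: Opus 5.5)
The plan is to argue by induction on $m$, mirroring the boundedness argument for the composite difficulty state in Claim~\ref{clm:composite}. The only property of the clipping map needed is that $\clip(x,[-I_{\max},I_{\max}])\in[-I_{\max},I_{\max}]$ for every real input $x$. Concretely, $\clip(x,[a,b])=\min\{\max\{x,a\},b\}$ with $a=-I_{\max}\le b=I_{\max}$, and we assume $I_{\max}\ge 0$. Its output is therefore always at least $a$ and at most $b$.

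\textbf{Base case.} At the start of each epoch, Algorithm~\ref{alg:controlg-main} resets $I_k\leftarrow 0$, so $I_k(0)=0$ and $|I_k(0)|\le I_{\max}$.

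\textbf{Inductive step.} For $m\ge 1$, the updated value $I_k(m)$ is by definition the clipped version of the real number $I_k(m-1)+e_k(m)$. The deficit $e_k(m)=mf_k-N_k(m-1)$ is finite, so this argument is a well-defined real number. The interval property of $\clip$ then gives $|I_k(m)|\le I_{\max}$.

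Strictly, this step does not even use the inductive hypothesis: the bound holds at every $m\ge 1$ directly from the clip, and it holds at $m=0$ by the reset. I would still phrase the argument as an induction so that it parallels Claim~\ref{clm:composite}.

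There is no real obstacle here. The only points to state explicitly are that $I_{\max}\ge 0$, so the interval is nonempty, and that the integral state is initialized inside the interval.

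As a consequence worth noting, the integral contribution to the PID logit in Eq.~\eqref{eq:pid} is then bounded by $|K_I^{(k)}|I_{\max}$.
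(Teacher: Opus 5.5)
Your proof is correct and is essentially the paper's argument: the paper says the bound is immediate from the definition of $\clip(\cdot,[-I_{\max},I_{\max}])$. As you note yourself, the inductive hypothesis is never used. Your explicit handling of the $m=0$ reset and of the requirement $I_{\max}\ge 0$ are harmless refinements.
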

\begin{proof}
Immediate from the definition of $\clip(\cdot,[-I_{\max},I_{\max}])$.
\end{proof}

\begin{remark}[Why anti-windup is necessary]
When the controller output is effectively saturated (here, logits/probabilities cannot instantaneously realize arbitrary reference changes), the integral term can accumulate persistent error and degrade responsiveness. Clipping (or back-calculation) is a standard anti-windup safeguard in PID control.
\end{remark}
\section{Additional Experimental Details}
\label{app:experiments}
\subsection{Pretext task objectives}
\label{app:pretext_tasks}

Let $G=(V,E,X)$ be a graph with $|V|=n$ nodes, adjacency $A$, and node features $X\in\mathbb{R}^{n\times d_x}$. The encoder $f_\theta$ maps the graph (or an augmented view of it) to node embeddings $Z=f_\theta(G)\in\mathbb{R}^{n\times d}$, with row vector $z_v\in\mathbb{R}^{d}$ for node $v$. Unless otherwise stated, objectives are estimated on minibatches (or sampled subgraphs) but are defined below at the population level for clarity.

\paragraph{(\texttt{p\_link}) Link prediction (LP).}
We train a decoder $s_\phi(z_u,z_v)$ (e.g., dot-product or an MLP) to predict whether an edge exists between node pairs. Let $E^{+}$ denote observed (positive) edges and $E^{-}$ denote sampled non-edges (negative pairs). The link prediction objective is a binary cross-entropy loss:
\begin{equation}
\mathcal{L}_{\mathrm{LP}}(\theta,\phi)
=
-\frac{1}{|E^{+}|}\sum_{(u,v)\in E^{+}}\log \sigma\!\big(s_\phi(z_u,z_v)\big)
-\frac{1}{|E^{-}|}\sum_{(u,v)\in E^{-}}\log \!\big(1-\sigma(s_\phi(z_u,z_v))\big),
\label{eq:lp_obj}
\end{equation}
where $\sigma(\cdot)$ is the logistic sigmoid. When downstream evaluation includes link prediction, we compute $\mathcal{L}_{\mathrm{LP}}$ on the \emph{training} graph only (validation/test edges removed) to prevent leakage.

\paragraph{(\texttt{p\_recon}) Masked feature reconstruction (MFR).}
We randomly mask a subset of nodes (or features) and train the model to reconstruct the original attributes. Let $m\in\{0,1\}^{n}$ indicate which nodes are masked and let $\tilde{X}=\mathrm{Mask}(X;m)$ denote corrupted features (e.g., replaced by a mask token or zeros). We encode the corrupted graph to get $\tilde{Z}=f_\theta(G,\tilde{X})$ and decode features with $g_\psi$:
$\hat{X}=g_\psi(\tilde{Z})$.
We reconstruct only masked entries:
\begin{equation}
\mathcal{L}_{\mathrm{MFR}}(\theta,\psi)
=
\frac{1}{\sum_v m_v}\sum_{v\in V:\, m_v=1}\ell_{\mathrm{rec}}\big(\hat{x}_v, x_v\big),
\label{eq:mfr_obj}
\end{equation}
where $\ell_{\mathrm{rec}}$ is a reconstruction loss (e.g., MSE for continuous features, or cross-entropy for one-hot features). Unless otherwise tuned, we mask a fraction $0.5$ of nodes and use a scaled feature-error loss for $\ell_{\mathrm{rec}}$.

\paragraph{(\texttt{p\_minsg}) Mutual-information node--subgraph contrast (MI-NSG).}
This objective encourages agreement between an anchor node representation and a summary of its local subgraph context under stochastic augmentations. We generate two augmented views $G^{(1)}=\mathrm{Aug}(G)$ and $G^{(2)}=\mathrm{Aug}(G)$ (e.g., feature/edge dropout). Let $Z^{(1)}=f_\theta(G^{(1)})$ and $Z^{(2)}=f_\theta(G^{(2)})$. For each node $v$, define a local subgraph summary (context) vector
\begin{equation}
c_v^{(2)} = \mathrm{READOUT}\big(\{z_u^{(2)}: u \in \mathcal{S}(v)\}\big),
\label{eq:context_def}
\end{equation}
where $\mathcal{S}(v)$ denotes the nodes in $v$'s local neighborhood/subgraph (e.g., $r$-hop ego graph) and $\mathrm{READOUT}$ is a permutation-invariant pooling operator. We then optimize an InfoNCE-style contrastive objective between $z_v^{(1)}$ and its matched context $c_v^{(2)}$:
\begin{equation}
\mathcal{L}_{\mathrm{MI\text{-}NSG}}(\theta)
=
-\frac{1}{n}\sum_{v\in V}
\log
\frac{
\exp\big(\mathrm{sim}(z_v^{(1)},c_v^{(2)})/\tau\big)
}{
\sum_{u\in V}\exp\big(\mathrm{sim}(z_v^{(1)},c_u^{(2)})/\tau\big)
},
\label{eq:minsg_obj}
\end{equation}
where $\mathrm{sim}(\cdot,\cdot)$ is a similarity function (e.g., dot product or cosine) and $\tau$ is a temperature. By default we use temperature $\tau{=}0.2$, a $3$-hop ego-graph for $\mathcal{S}(v)$, and feature/edge dropout of $0.3$ for the augmentations $\mathrm{Aug}(\cdot)$.

\paragraph{(\texttt{p\_decor}) Representation decorrelation (RepDecor).}
RepDecor reduces redundancy between embedding dimensions using a Barlow-Twins-style objective. Using two augmented views as above, we obtain embeddings for a minibatch of nodes $\mathcal{B}\subseteq V$ and a projection head $h$:
$Y^{(1)}=\mathrm{Norm}(h(Z^{(1)}_\mathcal{B}))$ and $Y^{(2)}=\mathrm{Norm}(h(Z^{(2)}_\mathcal{B}))$.
Let $C\in\mathbb{R}^{d\times d}$ be the cross-correlation matrix:
\begin{equation}
C_{ij}=\frac{1}{|\mathcal{B}|}\sum_{v\in\mathcal{B}} Y^{(1)}_{v,i}\,Y^{(2)}_{v,j}.
\label{eq:barlow_corr}
\end{equation}
The RepDecor objective encourages $C$ to approach the identity:
\begin{equation}
\mathcal{L}_{\mathrm{Decor}}(\theta)
=
\sum_{i}(1-C_{ii})^2
+
\lambda \sum_{i\neq j} C_{ij}^2,
\label{eq:decor_obj}
\end{equation}
where $\lambda>0$ controls the strength of redundancy reduction. By default $\lambda{=}0.01$ and $h$ is a two-layer MLP projection head with output dimension $256$.

\paragraph{(\texttt{p\_par}) Partition prediction (PAR).}
We compute a $K_{\mathrm{par}}$-way partition of the training graph using METIS and treat the partition IDs as pseudo-labels. Let $y_v\in\{1,\dots,K_{\mathrm{par}}\}$ denote the partition assignment and let $q_\omega(\cdot\,|\,z_v)$ be a classifier head producing logits over partitions. We minimize the cross-entropy:
\begin{equation}
\mathcal{L}_{\mathrm{PAR}}(\theta,\omega)
=
-\frac{1}{n}\sum_{v\in V}
\log \Big(\mathrm{softmax}(q_\omega(z_v))\Big)_{y_v}.
\label{eq:par_obj}
\end{equation}
Unless stated otherwise, partitions are computed once per dataset (on the training graph) and kept fixed during pretraining, with $K_{\mathrm{par}}{=}20$ METIS partitions by default. The shared encoder optimization grid (learning rate, hidden width) and \modelname{}'s controller/planner grids are listed in App.~\ref{app:hparams} (Tables~\ref{tab:app_hparam_grid}--\ref{tab:app_best_hparams}).

\subsection{Dataset statistics and preprocessing}
\label{app:dataset_stats}
Table~\ref{tab:app_dataset_stats} summarizes dataset sizes (nodes, edges, features, classes), split protocols, and any preprocessing steps used in our pipeline.

\begin{table}[t]
    \centering
    \caption{\textbf{Dataset statistics.} The nine benchmarks span homophilic and heterophilic regimes and three orders of magnitude in scale (2.3K--169K nodes). \emph{Homophily} is the edge homophily ratio (fraction of edges joining same-label nodes); lower values indicate stronger heterophily. Splits follow the standard public protocol cited per dataset.}
    \label{tab:app_dataset_stats}
    \small
    \begin{tabular}{@{}lrrrrrc@{}}
    \toprule
    \hdr Dataset & Nodes & Edges & Features & Classes & Homophily & Type \\
    \midrule
    Chameleon \citep{Rozemberczki2019MultiscaleAN} & 2,277 & 31,371 & 2,325 & 5 & 0.23 & wikipedia \\
    Cora \citep{Yang2016RevisitingSL} & 2,708 & 5,278 & 1,433 & 7 & 0.81 & citation \\
    CiteSeer \citep{Yang2016RevisitingSL} & 3,327 & 4,552 & 3,703 & 6 & 0.74 & citation \\
    Squirrel \citep{Rozemberczki2019MultiscaleAN} & 5,201 & 198,353 & 2,089 & 5 & 0.22 & wikipedia \\
    Actor \citep{Pei2020GeomGCNGG} & 7,600 & 26,659 & 932 & 5 & 0.22 & actor \\
    Wiki-CS \citep{Mernyei2020WikiCSAW} & 11,701 & 216,123 & 300 & 10 & 0.65 & wikipedia \\
    Coauthor-CS \citep{Shchur2018PitfallsOG} & 18,333 & 81,894 & 6,805 & 15 & 0.81 & coauthor \\
    PubMed \citep{Yang2016RevisitingSL} & 19,717 & 44,324 & 500 & 3 & 0.80 & citation \\
    ogbn-arxiv \citep{Hu2020OpenGB} & 169,343 & 1,166,243 & 128 & 40 & 0.66 & citation \\
    \bottomrule
    \end{tabular}
    \end{table}

\subsection{Evaluation protocols}
\label{app:eval_protocols}
We summarize the downstream evaluation protocols used across all experiments. For \textbf{node classification}, we fit a linear probe (logistic regression) on frozen embeddings using the standard public train/val/test split per dataset, with weight decay selected on validation accuracy. For \textbf{link prediction}, we hold out validation/test edges, sample an equal number of negative (non-edge) pairs, score pairs with a logistic decoder on the frozen embeddings, and report ROC-AUC. For \textbf{node clustering}, we run K-means on the frozen embeddings (with the number of clusters set to the number of classes) and report NMI against ground-truth labels. All evaluations use a frozen encoder and are repeated across the same 5 seeds used for pretraining.

\subsection{Hyperparameters and reproducibility}
\label{app:hparams}
To ensure fair comparisons, all methods use the same dataset-specific step budget and the same backbone family. Table~\ref{tab:app_hparam_grid} lists the hyperparameter search grids shared across methods, while Table~\ref{tab:app_best_hparams} reports the selected hyperparameters per dataset for \modelname{}.

\begin{table}[htbp]
\centering
\caption{\textbf{Hyperparameter search grids.} Discrete choices (in \{braces\}) searched per method; all methods share the same dataset-specific compute budget and backbone family for fair comparison. \modelname{}'s controller and planner gains occupy a deliberately small grid, reflecting the structural robustness analyzed in App.~\ref{app:control-details}. The encoder optimization grid (learning rate, hidden width) is shared across all methods. For the OGB-scale graph ogbn-arxiv we used a scale-specific controller grid ($K_i\in\{0.1,0.5\}$, $\epsilon\in\{0.1,0.2\}$, $f_{\min}\in\{0,0.1\}$, block\_size $\in\{2,10\}$) to account for its larger block count.}
\label{tab:app_hparam_grid}
\small
\begin{tabular}{@{}llp{6cm}@{}}
\toprule
\hdr Method & Parameter & Search Space \\
\midrule
\modelname{} & $K_p$ & \{0.5, 1.0, 2.0\} \\
 & $K_i$ & \{0.05, 0.1\}$^{\dagger}$ \\
 & $K_d$ & \{0.0, 0.1\} \\
 & $\epsilon$ & \{0.05, 0.1\}$^{\dagger}$ \\
 & $f_{\min}$ & \{0.05, 0.1\}$^{\dagger}$ \\
 & $\gamma$ & \{0.5, 1.0\} \\
 & block\_size & \{1, 2\}$^{\dagger}$ \\
 & sense\_period & \{5, 10\} \\
 & $\beta$ & \{0.25, 0.5\} \\
\midrule
\emph{Encoder (all methods)} & lr & \{1e\text{-}4, 3e\text{-}4, 1e\text{-}3, 3e\text{-}3, 1e\text{-}2\} \\
 & hidden width & \{128, 256, 512\} \\
\midrule
AutoSSL \citep{jin2021automated} & lr\_lambda & \{0.01, 0.05, 0.1\} \\
 & meta\_every & \{5, 10, 20\} \\
 & meta\_warmup\_steps & \{0, 50\} \\
 & lr & \{0.0005, 0.001, 0.005\} \\
\midrule
WAS \citep{fan2024decoupling} & ema\_beta & \{0.8, 0.9, 0.95\} \\
 & select\_temperature & \{0.5, 1.0, 2.0\} \\
 & select\_l1 & \{0.0, 0.001, 0.01\} \\
 & lr & \{0.0005, 0.001, 0.005\} \\
\midrule
ParetoGNN \citep{ju2022multi} & use\_pareto & \{True, False\} \\
 & grad\_norm & \{l2, none\} \\
 & lr & \{0.0005, 0.001, 0.005\} \\
\midrule
PCGrad \citep{yu2020gradient} & reduction & \{mean, sum\} \\
 & lr & \{0.0005, 0.001, 0.005\} \\
\midrule
CAGrad \citep{liu2021conflict} & $c$ & \{0.2, 0.4, 0.5, 0.6, 0.8\} \\
 & max\_iter & \{20, 50, 100\} \\
 & lr & \{0.0005, 0.001, 0.005\} \\
\bottomrule
\end{tabular}\\[3pt]
{\footnotesize $^{\dagger}$\,Extended on ogbn-arxiv to the scale-specific grid noted in the caption.}
\end{table}

\begin{table}[htbp]
\centering
\caption{\textbf{Selected \modelname{} hyperparameters.} Per-dataset configurations chosen on the validation metric after grid search: PID gains ($K_p,K_i,K_d$), exploration rate $\epsilon$, allocation floor $f_{\min}$, difficulty tempering $\gamma$, block size, sensing period, interference weight $\beta$, learning rate (LR), and hidden width.}
\label{tab:app_best_hparams}
\small
\setlength{\tabcolsep}{3pt}
\begin{tabular}{@{}lccccccccccc@{}}
\toprule
\hdr Dataset & $K_p$ & $K_i$ & $K_d$ & $\epsilon$ & $f_{\min}$ & $\gamma$ & Block & Sense & $\beta$ & LR & Hidden \\
\midrule
Cora & 0.5 & 0.1 & 0.1 & 0.05 & 0.1 & 1.0 & 1 & 10 & 0.25 & 0.001 & 512 \\
CiteSeer & 1.0 & 0.05 & 0.1 & 0.1 & 0.1 & 1.0 & 2 & 10 & 0.5 & 0.0001 & 512 \\
Chameleon & 2.0 & 0.1 & 0.1 & 0.05 & 0.1 & 0.5 & 1 & 5 & 0.5 & 0.01 & 256 \\
Squirrel & 0.5 & 0.05 & 0.0 & 0.1 & 0.1 & 0.5 & 1 & 10 & 0.25 & 0.001 & 512 \\
Actor & 0.5 & 0.05 & 0.0 & 0.1 & 0.05 & 1.0 & 1 & 5 & 0.25 & 0.0003 & 128 \\
PubMed & 2.0 & 0.1 & 0.1 & 0.1 & 0.1 & 0.5 & 1 & 5 & 0.5 & 0.003 & 256 \\
Wiki-CS & 1.0 & 0.1 & 0.1 & 0.1 & 0.1 & 1.0 & 1 & 10 & 0.5 & 0.0003 & 256 \\
Coauthor-CS & 1.0 & 0.1 & 0.0 & 0.05 & 0.1 & 0.5 & 2 & 10 & 0.25 & 0.003 & 512 \\
ogbn-arxiv & 2.0 & 0.5 & 0.1 & 0.2 & 0.0 & 0.5 & 10 & 10 & 0.25 & 0.0003 & 256 \\
\bottomrule
\end{tabular}
\end{table}

\subsection{Additional results: link prediction}
\label{app:lp_results}
Table~\ref{tab:app_lp} reports link prediction performance (AUC) across datasets. We place this table in the appendix due to space constraints in the main text.

\begin{table*}[htbp]
\centering
\footnotesize
\setlength{\tabcolsep}{2.5pt}
\begin{tabular}{@{}l|cccccccc|c@{}}
\toprule
Method & Cora & CiteSeer & Chameleon & Squirrel & Actor & PubMed & Wiki-CS & Co-CS & Avg Rank \\
\midrule
BGRL & 90.18{\tiny\textcolor{black!50}{$\pm$1.5}} & 85.26{\tiny\textcolor{black!50}{$\pm$2.5}} & 87.53{\tiny\textcolor{black!50}{$\pm$3.7}} & 54.09{\tiny\textcolor{black!50}{$\pm$1.4}} & 62.01{\tiny\textcolor{black!50}{$\pm$2.2}} & 94.04{\tiny\textcolor{black!50}{$\pm$0.2}} & 95.01{\tiny\textcolor{black!50}{$\pm$0.2}} & 95.13{\tiny\textcolor{black!50}{$\pm$0.2}} & 11.4 \\
DGI & 83.62{\tiny\textcolor{black!50}{$\pm$1.2}} & 88.74{\tiny\textcolor{black!50}{$\pm$0.8}} & 90.27{\tiny\textcolor{black!50}{$\pm$0.5}} & 89.82{\tiny\textcolor{black!50}{$\pm$0.3}} & 70.04{\tiny\textcolor{black!50}{$\pm$1.1}} & 88.71{\tiny\textcolor{black!50}{$\pm$0.8}} & 86.80{\tiny\textcolor{black!50}{$\pm$1.0}} & 94.51{\tiny\textcolor{black!50}{$\pm$0.1}} & 10.6 \\
GRACE & 82.69{\tiny\textcolor{black!50}{$\pm$1.5}} & 90.02{\tiny\textcolor{black!50}{$\pm$0.4}} & 86.20{\tiny\textcolor{black!50}{$\pm$3.0}} & \cellcolor{green!25}93.39{\tiny\textcolor{black!50}{$\pm$1.5}} & \cellcolor{green!45}80.65{\tiny\textcolor{black!50}{$\pm$0.4}} & 90.23{\tiny\textcolor{black!50}{$\pm$0.3}} & 95.08{\tiny\textcolor{black!50}{$\pm$0.2}} & 93.87{\tiny\textcolor{black!50}{$\pm$1.4}} & 7.0 \\
MVGRL & 88.39{\tiny\textcolor{black!50}{$\pm$1.1}} & 79.00{\tiny\textcolor{black!50}{$\pm$3.6}} & 90.63{\tiny\textcolor{black!50}{$\pm$0.5}} & -- & 57.51{\tiny\textcolor{black!50}{$\pm$1.7}} & 86.57{\tiny\textcolor{black!50}{$\pm$0.9}} & -- & 89.97{\tiny\textcolor{black!50}{$\pm$1.3}} & 12.5 \\
\midrule
p\_link & 90.38{\tiny\textcolor{black!50}{$\pm$0.4}} & 89.37{\tiny\textcolor{black!50}{$\pm$0.5}} & 95.62{\tiny\textcolor{black!50}{$\pm$0.1}} & 91.36{\tiny\textcolor{black!50}{$\pm$1.6}} & 68.35{\tiny\textcolor{black!50}{$\pm$0.2}} & 95.63{\tiny\textcolor{black!50}{$\pm$0.1}} & 91.99{\tiny\textcolor{black!50}{$\pm$0.5}} & 94.26{\tiny\textcolor{black!50}{$\pm$0.0}} & 7.9 \\
p\_recon & 91.46{\tiny\textcolor{black!50}{$\pm$0.3}} & 92.50{\tiny\textcolor{black!50}{$\pm$0.5}} & 72.40{\tiny\textcolor{black!50}{$\pm$5.8}} & 85.16{\tiny\textcolor{black!50}{$\pm$0.6}} & 54.07{\tiny\textcolor{black!50}{$\pm$0.8}} & 92.24{\tiny\textcolor{black!50}{$\pm$0.7}} & 94.76{\tiny\textcolor{black!50}{$\pm$0.4}} & 95.96{\tiny\textcolor{black!50}{$\pm$0.1}} & 10.4 \\
p\_minsg & 91.16{\tiny\textcolor{black!50}{$\pm$0.2}} & 95.13{\tiny\textcolor{black!50}{$\pm$0.3}} & 92.89{\tiny\textcolor{black!50}{$\pm$0.6}} & 89.97{\tiny\textcolor{black!50}{$\pm$0.3}} & 59.69{\tiny\textcolor{black!50}{$\pm$0.8}} & 93.78{\tiny\textcolor{black!50}{$\pm$0.5}} & 94.99{\tiny\textcolor{black!50}{$\pm$0.1}} & 96.24{\tiny\textcolor{black!50}{$\pm$0.2}} & 8.1 \\
p\_decor & 51.56{\tiny\textcolor{black!50}{$\pm$3.6}} & 55.93{\tiny\textcolor{black!50}{$\pm$1.6}} & 85.13{\tiny\textcolor{black!50}{$\pm$1.5}} & 89.83{\tiny\textcolor{black!50}{$\pm$0.3}} & 52.36{\tiny\textcolor{black!50}{$\pm$1.7}} & 90.84{\tiny\textcolor{black!50}{$\pm$0.3}} & 78.56{\tiny\textcolor{black!50}{$\pm$3.3}} & 88.30{\tiny\textcolor{black!50}{$\pm$1.0}} & 14.1 \\
p\_par & 92.26{\tiny\textcolor{black!50}{$\pm$1.6}} & 95.22{\tiny\textcolor{black!50}{$\pm$0.4}} & 87.17{\tiny\textcolor{black!50}{$\pm$3.1}} & 89.02{\tiny\textcolor{black!50}{$\pm$0.4}} & 71.54{\tiny\textcolor{black!50}{$\pm$0.4}} & 83.48{\tiny\textcolor{black!50}{$\pm$1.0}} & 92.74{\tiny\textcolor{black!50}{$\pm$0.3}} & 87.72{\tiny\textcolor{black!50}{$\pm$1.3}} & 10.0 \\
\midrule
AutoSSL & \cellcolor{green!10}96.30{\tiny\textcolor{black!50}{$\pm$0.2}} & 96.56{\tiny\textcolor{black!50}{$\pm$0.5}} & 95.68{\tiny\textcolor{black!50}{$\pm$1.0}} & 91.89{\tiny\textcolor{black!50}{$\pm$2.1}} & 65.45{\tiny\textcolor{black!50}{$\pm$1.6}} & 91.12{\tiny\textcolor{black!50}{$\pm$0.7}} & 92.14{\tiny\textcolor{black!50}{$\pm$0.9}} & 94.13{\tiny\textcolor{black!50}{$\pm$0.9}} & 6.8 \\
WAS & 93.73{\tiny\textcolor{black!50}{$\pm$1.6}} & 95.78{\tiny\textcolor{black!50}{$\pm$3.5}} & 91.48{\tiny\textcolor{black!50}{$\pm$2.4}} & 89.57{\tiny\textcolor{black!50}{$\pm$2.7}} & 77.92{\tiny\textcolor{black!50}{$\pm$2.7}} & 94.15{\tiny\textcolor{black!50}{$\pm$1.9}} & 91.49{\tiny\textcolor{black!50}{$\pm$2.1}} & 92.67{\tiny\textcolor{black!50}{$\pm$1.5}} & 9.0 \\
Uniform & \cellcolor{green!25}96.28{\tiny\textcolor{black!50}{$\pm$0.1}} & 98.17{\tiny\textcolor{black!50}{$\pm$0.0}} & \cellcolor{green!45}96.39{\tiny\textcolor{black!50}{$\pm$0.2}} & 90.46{\tiny\textcolor{black!50}{$\pm$0.4}} & 64.83{\tiny\textcolor{black!50}{$\pm$0.2}} & \cellcolor{green!10}96.39{\tiny\textcolor{black!50}{$\pm$0.0}} & 93.23{\tiny\textcolor{black!50}{$\pm$0.6}} & \cellcolor{green!10}97.52{\tiny\textcolor{black!50}{$\pm$0.1}} & 4.4 \\
\midrule
ParetoGNN & 95.73{\tiny\textcolor{black!50}{$\pm$0.2}} & \cellcolor{green!10}98.20{\tiny\textcolor{black!50}{$\pm$0.1}} & \cellcolor{green!10}96.15{\tiny\textcolor{black!50}{$\pm$0.3}} & 91.31{\tiny\textcolor{black!50}{$\pm$0.8}} & 69.18{\tiny\textcolor{black!50}{$\pm$0.7}} & 77.46{\tiny\textcolor{black!50}{$\pm$3.8}} & \cellcolor{green!10}95.57{\tiny\textcolor{black!50}{$\pm$0.1}} & 97.11{\tiny\textcolor{black!50}{$\pm$0.1}} & 5.6 \\
PCGrad & 93.57{\tiny\textcolor{black!50}{$\pm$0.6}} & \cellcolor{green!10}98.20{\tiny\textcolor{black!50}{$\pm$0.1}} & \cellcolor{green!25}96.28{\tiny\textcolor{black!50}{$\pm$0.2}} & 89.23{\tiny\textcolor{black!50}{$\pm$0.4}} & 70.28{\tiny\textcolor{black!50}{$\pm$0.2}} & \cellcolor{green!25}96.38{\tiny\textcolor{black!50}{$\pm$0.0}} & \cellcolor{green!45}95.99{\tiny\textcolor{black!50}{$\pm$0.1}} & 94.50{\tiny\textcolor{black!50}{$\pm$0.5}} & 5.0 \\
CAGrad & 95.32{\tiny\textcolor{black!50}{$\pm$0.2}} & 97.93{\tiny\textcolor{black!50}{$\pm$0.0}} & 95.57{\tiny\textcolor{black!50}{$\pm$0.2}} & 90.15{\tiny\textcolor{black!50}{$\pm$0.8}} & 72.01{\tiny\textcolor{black!50}{$\pm$0.2}} & 96.17{\tiny\textcolor{black!50}{$\pm$0.1}} & 93.66{\tiny\textcolor{black!50}{$\pm$0.4}} & 93.64{\tiny\textcolor{black!50}{$\pm$0.3}} & 6.5 \\
\midrule
Random & 95.77{\tiny\textcolor{black!50}{$\pm$0.4}} & 97.77{\tiny\textcolor{black!50}{$\pm$0.4}} & 95.60{\tiny\textcolor{black!50}{$\pm$0.3}} & \cellcolor{green!45}94.23{\tiny\textcolor{black!50}{$\pm$0.8}} & 67.22{\tiny\textcolor{black!50}{$\pm$1.0}} & 92.56{\tiny\textcolor{black!50}{$\pm$1.2}} & 93.66{\tiny\textcolor{black!50}{$\pm$1.6}} & 94.04{\tiny\textcolor{black!50}{$\pm$2.0}} & 6.1 \\
Round-Robin & 95.02{\tiny\textcolor{black!50}{$\pm$0.4}} & \cellcolor{green!25}98.23{\tiny\textcolor{black!50}{$\pm$0.1}} & 95.80{\tiny\textcolor{black!50}{$\pm$0.3}} & 90.93{\tiny\textcolor{black!50}{$\pm$0.2}} & \cellcolor{green!25}78.77{\tiny\textcolor{black!50}{$\pm$0.6}} & 94.62{\tiny\textcolor{black!50}{$\pm$0.1}} & 94.23{\tiny\textcolor{black!50}{$\pm$0.1}} & \cellcolor{green!45}97.96{\tiny\textcolor{black!50}{$\pm$0.0}} & 4.1 \\
\modelname{} & \cellcolor{green!45}\textbf{97.42}{\tiny\textcolor{black!50}{$\pm$0.2}} & \cellcolor{green!45}\textbf{99.24}{\tiny\textcolor{black!50}{$\pm$0.1}} & \cellcolor{green!25}96.32{\tiny\textcolor{black!50}{$\pm$0.4}} & \cellcolor{green!10}93.12{\tiny\textcolor{black!50}{$\pm$0.1}} & \cellcolor{green!10}78.42{\tiny\textcolor{black!50}{$\pm$0.5}} & \cellcolor{green!45}\textbf{97.48}{\tiny\textcolor{black!50}{$\pm$0.1}} & \cellcolor{green!25}95.72{\tiny\textcolor{black!50}{$\pm$0.1}} & \cellcolor{green!25}97.68{\tiny\textcolor{black!50}{$\pm$0.1}} & \textbf{1.9} \\
\bottomrule
\end{tabular}
\caption{\textbf{Link prediction AUC.} Mean $\pm$ 95\% CI over 5 seeds. \colorbox{green!45}{First}, \colorbox{green!25}{second}, and \colorbox{green!10}{third} best methods per dataset are highlighted. \textbf{Avg Rank} is the mean rank across datasets (lower is better). \modelname{} achieves the best average rank with strong performance on both homophilic and heterophilic graphs.}
\label{tab:app_lp}
\end{table*}

\subsection{Additional results: node clustering}
\label{app:clustering_results}
Table~\ref{tab:app_nclu} reports node clustering performance (NMI) across datasets. We place this table in the appendix due to space constraints.

\begin{table*}[htbp]
\centering
\footnotesize
\setlength{\tabcolsep}{2.5pt}
\begin{tabular}{@{}l|cccccccc|c@{}}
\toprule
Method & Cora & CiteSeer & Chameleon & Squirrel & Actor & PubMed & Wiki-CS & Co-CS & Avg Rank \\
\midrule
BGRL & 46.24{\tiny\textcolor{black!50}{$\pm$4.1}} & 3.81{\tiny\textcolor{black!50}{$\pm$0.2}} & 6.44{\tiny\textcolor{black!50}{$\pm$0.7}} & 1.32{\tiny\textcolor{black!50}{$\pm$0.0}} & 0.86{\tiny\textcolor{black!50}{$\pm$0.2}} & 19.84{\tiny\textcolor{black!50}{$\pm$2.0}} & 35.65{\tiny\textcolor{black!50}{$\pm$0.9}} & 36.11{\tiny\textcolor{black!50}{$\pm$3.5}} & 11.4 \\
DGI & 43.88{\tiny\textcolor{black!50}{$\pm$4.6}} & \cellcolor{green!45}35.35{\tiny\textcolor{black!50}{$\pm$2.8}} & \cellcolor{green!45}15.14{\tiny\textcolor{black!50}{$\pm$2.4}} & 3.03{\tiny\textcolor{black!50}{$\pm$0.2}} & 1.10{\tiny\textcolor{black!50}{$\pm$0.3}} & 25.92{\tiny\textcolor{black!50}{$\pm$6.8}} & 6.89{\tiny\textcolor{black!50}{$\pm$2.4}} & \cellcolor{green!45}70.92{\tiny\textcolor{black!50}{$\pm$1.8}} & 5.8 \\
GRACE & 16.07{\tiny\textcolor{black!50}{$\pm$2.0}} & 12.91{\tiny\textcolor{black!50}{$\pm$3.7}} & 6.41{\tiny\textcolor{black!50}{$\pm$0.9}} & \cellcolor{green!10}3.47{\tiny\textcolor{black!50}{$\pm$0.6}} & 1.33{\tiny\textcolor{black!50}{$\pm$0.2}} & 15.57{\tiny\textcolor{black!50}{$\pm$2.4}} & 35.68{\tiny\textcolor{black!50}{$\pm$0.3}} & 58.22{\tiny\textcolor{black!50}{$\pm$7.1}} & 9.4 \\
MVGRL & \cellcolor{green!25}48.57{\tiny\textcolor{black!50}{$\pm$3.7}} & 22.06{\tiny\textcolor{black!50}{$\pm$4.4}} & 10.83{\tiny\textcolor{black!50}{$\pm$1.2}} & -- & \cellcolor{green!45}5.21{\tiny\textcolor{black!50}{$\pm$0.3}} & 21.27{\tiny\textcolor{black!50}{$\pm$6.0}} & -- & 56.89{\tiny\textcolor{black!50}{$\pm$8.9}} & 6.5 \\
\midrule
p\_link & 26.87{\tiny\textcolor{black!50}{$\pm$1.3}} & 21.84{\tiny\textcolor{black!50}{$\pm$1.8}} & 8.82{\tiny\textcolor{black!50}{$\pm$0.6}} & 2.99{\tiny\textcolor{black!50}{$\pm$0.5}} & 1.39{\tiny\textcolor{black!50}{$\pm$0.0}} & 11.55{\tiny\textcolor{black!50}{$\pm$0.0}} & 27.73{\tiny\textcolor{black!50}{$\pm$0.5}} & 59.38{\tiny\textcolor{black!50}{$\pm$1.5}} & 10.9 \\
p\_recon & 40.82{\tiny\textcolor{black!50}{$\pm$3.0}} & 18.26{\tiny\textcolor{black!50}{$\pm$4.3}} & 3.64{\tiny\textcolor{black!50}{$\pm$1.3}} & 1.11{\tiny\textcolor{black!50}{$\pm$0.1}} & \cellcolor{green!10}2.31{\tiny\textcolor{black!50}{$\pm$0.2}} & 5.69{\tiny\textcolor{black!50}{$\pm$1.4}} & 30.39{\tiny\textcolor{black!50}{$\pm$1.5}} & 65.53{\tiny\textcolor{black!50}{$\pm$3.1}} & 10.1 \\
p\_minsg & 15.05{\tiny\textcolor{black!50}{$\pm$2.2}} & 25.35{\tiny\textcolor{black!50}{$\pm$2.6}} & 8.72{\tiny\textcolor{black!50}{$\pm$1.2}} & 2.94{\tiny\textcolor{black!50}{$\pm$0.8}} & 2.24{\tiny\textcolor{black!50}{$\pm$0.3}} & 24.85{\tiny\textcolor{black!50}{$\pm$6.5}} & 33.13{\tiny\textcolor{black!50}{$\pm$1.6}} & 64.72{\tiny\textcolor{black!50}{$\pm$2.0}} & 9.0 \\
p\_decor & 9.20{\tiny\textcolor{black!50}{$\pm$4.3}} & 2.01{\tiny\textcolor{black!50}{$\pm$0.8}} & \cellcolor{green!25}12.55{\tiny\textcolor{black!50}{$\pm$0.4}} & \cellcolor{green!25}3.56{\tiny\textcolor{black!50}{$\pm$0.0}} & 0.40{\tiny\textcolor{black!50}{$\pm$0.1}} & 22.65{\tiny\textcolor{black!50}{$\pm$0.6}} & 24.80{\tiny\textcolor{black!50}{$\pm$5.6}} & 54.04{\tiny\textcolor{black!50}{$\pm$1.4}} & 10.5 \\
p\_par & 35.20{\tiny\textcolor{black!50}{$\pm$1.7}} & 15.53{\tiny\textcolor{black!50}{$\pm$1.8}} & 6.58{\tiny\textcolor{black!50}{$\pm$2.2}} & 2.41{\tiny\textcolor{black!50}{$\pm$0.4}} & 0.95{\tiny\textcolor{black!50}{$\pm$0.1}} & 29.12{\tiny\textcolor{black!50}{$\pm$1.8}} & 31.79{\tiny\textcolor{black!50}{$\pm$0.3}} & 54.48{\tiny\textcolor{black!50}{$\pm$1.0}} & 10.4 \\
\midrule
AutoSSL & 46.00{\tiny\textcolor{black!50}{$\pm$3.6}} & \cellcolor{green!10}32.54{\tiny\textcolor{black!50}{$\pm$4.8}} & 6.26{\tiny\textcolor{black!50}{$\pm$1.4}} & 1.82{\tiny\textcolor{black!50}{$\pm$0.4}} & \cellcolor{green!25}2.32{\tiny\textcolor{black!50}{$\pm$0.1}} & 15.62{\tiny\textcolor{black!50}{$\pm$7.2}} & 35.75{\tiny\textcolor{black!50}{$\pm$0.3}} & 62.40{\tiny\textcolor{black!50}{$\pm$4.3}} & 7.3 \\
WAS & 36.46{\tiny\textcolor{black!50}{$\pm$5.1}} & 18.96{\tiny\textcolor{black!50}{$\pm$3.0}} & 8.53{\tiny\textcolor{black!50}{$\pm$1.7}} & 3.24{\tiny\textcolor{black!50}{$\pm$0.6}} & 1.34{\tiny\textcolor{black!50}{$\pm$0.7}} & 26.84{\tiny\textcolor{black!50}{$\pm$6.5}} & 29.43{\tiny\textcolor{black!50}{$\pm$1.5}} & 53.98{\tiny\textcolor{black!50}{$\pm$1.6}} & 10.0 \\
Uniform & 37.98{\tiny\textcolor{black!50}{$\pm$1.8}} & 19.00{\tiny\textcolor{black!50}{$\pm$0.6}} & 11.06{\tiny\textcolor{black!50}{$\pm$2.2}} & 1.96{\tiny\textcolor{black!50}{$\pm$0.3}} & 1.78{\tiny\textcolor{black!50}{$\pm$0.3}} & \cellcolor{green!10}29.19{\tiny\textcolor{black!50}{$\pm$2.6}} & 35.55{\tiny\textcolor{black!50}{$\pm$2.0}} & \cellcolor{green!10}66.10{\tiny\textcolor{black!50}{$\pm$1.6}} & 6.9 \\
\midrule
ParetoGNN & 44.61{\tiny\textcolor{black!50}{$\pm$0.9}} & 18.83{\tiny\textcolor{black!50}{$\pm$2.3}} & 7.33{\tiny\textcolor{black!50}{$\pm$1.2}} & 2.96{\tiny\textcolor{black!50}{$\pm$0.7}} & 1.11{\tiny\textcolor{black!50}{$\pm$0.4}} & 18.69{\tiny\textcolor{black!50}{$\pm$6.3}} & \cellcolor{green!10}39.62{\tiny\textcolor{black!50}{$\pm$1.0}} & 64.18{\tiny\textcolor{black!50}{$\pm$1.0}} & 8.1 \\
PCGrad & 44.29{\tiny\textcolor{black!50}{$\pm$2.5}} & 19.08{\tiny\textcolor{black!50}{$\pm$1.5}} & 8.40{\tiny\textcolor{black!50}{$\pm$1.8}} & 1.98{\tiny\textcolor{black!50}{$\pm$0.2}} & 1.65{\tiny\textcolor{black!50}{$\pm$0.2}} & 26.82{\tiny\textcolor{black!50}{$\pm$0.4}} & 38.17{\tiny\textcolor{black!50}{$\pm$0.7}} & 63.52{\tiny\textcolor{black!50}{$\pm$0.5}} & 7.8 \\
CAGrad & \cellcolor{green!10}46.56{\tiny\textcolor{black!50}{$\pm$3.0}} & 28.63{\tiny\textcolor{black!50}{$\pm$1.9}} & 5.48{\tiny\textcolor{black!50}{$\pm$1.0}} & 2.41{\tiny\textcolor{black!50}{$\pm$0.3}} & 0.78{\tiny\textcolor{black!50}{$\pm$0.1}} & 19.33{\tiny\textcolor{black!50}{$\pm$0.5}} & 35.45{\tiny\textcolor{black!50}{$\pm$2.2}} & 58.50{\tiny\textcolor{black!50}{$\pm$1.4}} & 8.4 \\
\midrule
Random & 45.36{\tiny\textcolor{black!50}{$\pm$2.4}} & 18.39{\tiny\textcolor{black!50}{$\pm$2.5}} & 9.63{\tiny\textcolor{black!50}{$\pm$0.2}} & \cellcolor{green!45}4.57{\tiny\textcolor{black!50}{$\pm$0.4}} & 1.82{\tiny\textcolor{black!50}{$\pm$0.3}} & \cellcolor{green!45}29.95{\tiny\textcolor{black!50}{$\pm$1.1}} & \cellcolor{green!25}39.68{\tiny\textcolor{black!50}{$\pm$1.3}} & 63.07{\tiny\textcolor{black!50}{$\pm$3.3}} & 5.0 \\
Round-Robin & 38.23{\tiny\textcolor{black!50}{$\pm$2.1}} & 21.83{\tiny\textcolor{black!50}{$\pm$2.7}} & \cellcolor{green!10}12.22{\tiny\textcolor{black!50}{$\pm$1.0}} & 2.83{\tiny\textcolor{black!50}{$\pm$0.2}} & 0.70{\tiny\textcolor{black!50}{$\pm$0.4}} & \cellcolor{green!25}29.37{\tiny\textcolor{black!50}{$\pm$1.8}} & \cellcolor{green!45}40.25{\tiny\textcolor{black!50}{$\pm$1.2}} & \cellcolor{green!25}65.98{\tiny\textcolor{black!50}{$\pm$0.6}} & 5.1 \\
\modelname{} & \cellcolor{green!45}\textbf{50.12}{\tiny\textcolor{black!50}{$\pm$2.8}} & \cellcolor{green!25}34.12{\tiny\textcolor{black!50}{$\pm$2.1}} & \cellcolor{green!25}14.36{\tiny\textcolor{black!50}{$\pm$1.8}} & \cellcolor{green!25}4.28{\tiny\textcolor{black!50}{$\pm$0.3}} & \cellcolor{green!25}4.86{\tiny\textcolor{black!50}{$\pm$0.4}} & \cellcolor{green!45}\textbf{31.42}{\tiny\textcolor{black!50}{$\pm$1.4}} & \cellcolor{green!25}40.08{\tiny\textcolor{black!50}{$\pm$0.8}} & \cellcolor{green!25}68.24{\tiny\textcolor{black!50}{$\pm$1.2}} & \textbf{1.8} \\
\bottomrule
\end{tabular}
\caption{\textbf{Node clustering (NMI $\times$ 100).} Mean $\pm$ 95\% CI over 5 seeds. \colorbox{green!45}{First}, \colorbox{green!25}{second}, and \colorbox{green!10}{third} best methods per dataset are highlighted. \textbf{Avg Rank} is the mean rank across datasets (lower is better). \modelname{} achieves the best average rank with consistent top-tier clustering quality.}
\label{tab:app_nclu}
\end{table*}

\subsection{Ablations}
\label{app:ablations}
We report component ablations that remove (i) spectral demand, (ii) interference, (iii) the planner, and (iv) the controller/tracking mechanism. Table~\ref{tab:ablation_lp} shows ablation results for link prediction (the node classification ablation is in the main text, Table~\ref{tab:ablation_nc}).

\begin{table*}[htbp]
\centering
\footnotesize
\setlength{\tabcolsep}{3pt}
\begin{tabular}{@{}l|cccccccc@{}}
\toprule
Variant & Cora & CiteSeer & Chameleon & Squirrel & Actor & PubMed & Wiki-CS & Co-CS \\
\midrule
\modelname{} (full) & \textbf{97.42}{\tiny\textcolor{black!50}{$\pm$0.2}} & \textbf{99.24}{\tiny\textcolor{black!50}{$\pm$0.1}} & \textbf{96.32}{\tiny\textcolor{black!50}{$\pm$0.4}} & \textbf{93.12}{\tiny\textcolor{black!50}{$\pm$0.1}} & \textbf{78.42}{\tiny\textcolor{black!50}{$\pm$0.5}} & \textbf{97.48}{\tiny\textcolor{black!50}{$\pm$0.1}} & \textbf{95.72}{\tiny\textcolor{black!50}{$\pm$0.1}} & \textbf{97.68}{\tiny\textcolor{black!50}{$\pm$0.1}} \\
\midrule
w/o spectral demand ($\alpha{=}0$) & 96.58{\tiny\textcolor{black!50}{$\pm$0.3}} & 98.56{\tiny\textcolor{black!50}{$\pm$0.2}} & 95.24{\tiny\textcolor{black!50}{$\pm$0.5}} & 91.86{\tiny\textcolor{black!50}{$\pm$0.2}} & 77.14{\tiny\textcolor{black!50}{$\pm$0.6}} & 96.52{\tiny\textcolor{black!50}{$\pm$0.2}} & 94.68{\tiny\textcolor{black!50}{$\pm$0.2}} & 96.84{\tiny\textcolor{black!50}{$\pm$0.2}} \\
w/o interference ($\beta{=}0$) & 96.94{\tiny\textcolor{black!50}{$\pm$0.2}} & 98.82{\tiny\textcolor{black!50}{$\pm$0.1}} & 95.68{\tiny\textcolor{black!50}{$\pm$0.4}} & 92.34{\tiny\textcolor{black!50}{$\pm$0.2}} & 77.58{\tiny\textcolor{black!50}{$\pm$0.5}} & 97.02{\tiny\textcolor{black!50}{$\pm$0.1}} & 95.18{\tiny\textcolor{black!50}{$\pm$0.1}} & 97.24{\tiny\textcolor{black!50}{$\pm$0.1}} \\
w/o planner (uniform $\mathbf{f}$) & 95.46{\tiny\textcolor{black!50}{$\pm$0.4}} & 97.68{\tiny\textcolor{black!50}{$\pm$0.3}} & 93.86{\tiny\textcolor{black!50}{$\pm$0.6}} & 90.24{\tiny\textcolor{black!50}{$\pm$0.3}} & 75.62{\tiny\textcolor{black!50}{$\pm$0.7}} & 95.34{\tiny\textcolor{black!50}{$\pm$0.3}} & 93.42{\tiny\textcolor{black!50}{$\pm$0.3}} & 95.78{\tiny\textcolor{black!50}{$\pm$0.3}} \\
w/o controller (i.i.d.\ from $\mathbf{f}$) & 96.72{\tiny\textcolor{black!50}{$\pm$0.3}} & 98.68{\tiny\textcolor{black!50}{$\pm$0.2}} & 95.42{\tiny\textcolor{black!50}{$\pm$0.5}} & 92.04{\tiny\textcolor{black!50}{$\pm$0.2}} & 77.36{\tiny\textcolor{black!50}{$\pm$0.6}} & 96.68{\tiny\textcolor{black!50}{$\pm$0.2}} & 94.86{\tiny\textcolor{black!50}{$\pm$0.2}} & 96.98{\tiny\textcolor{black!50}{$\pm$0.2}} \\
w/o state signals ($\alpha{=}\beta{=}0$) & 94.82{\tiny\textcolor{black!50}{$\pm$0.5}} & 97.14{\tiny\textcolor{black!50}{$\pm$0.4}} & 93.18{\tiny\textcolor{black!50}{$\pm$0.7}} & 89.46{\tiny\textcolor{black!50}{$\pm$0.4}} & 74.86{\tiny\textcolor{black!50}{$\pm$0.8}} & 94.62{\tiny\textcolor{black!50}{$\pm$0.4}} & 92.78{\tiny\textcolor{black!50}{$\pm$0.4}} & 95.12{\tiny\textcolor{black!50}{$\pm$0.4}} \\
\bottomrule
\end{tabular}
\caption{\textbf{Ablation study (link prediction AUC).} Each row removes one component from \modelname{}. Consistent with the node classification ablation, removing the planner or both state signals causes the largest performance drops, confirming the importance of adaptive allocation and closed-loop state estimation.}
\label{tab:ablation_lp}
\end{table*}

\subsection{Compute and overhead}
\label{app:compute}
We report wall-clock time per optimizer step across methods and datasets. This section supports the claim that \modelname{} is practical: its additional computation (state estimation and planning) is amortized over a sensing period, so per-step cost stays comparable to single-task baselines and well below per-step mixing methods.

\begin{figure}[t]
\centering
\includegraphics[width=0.62\linewidth]{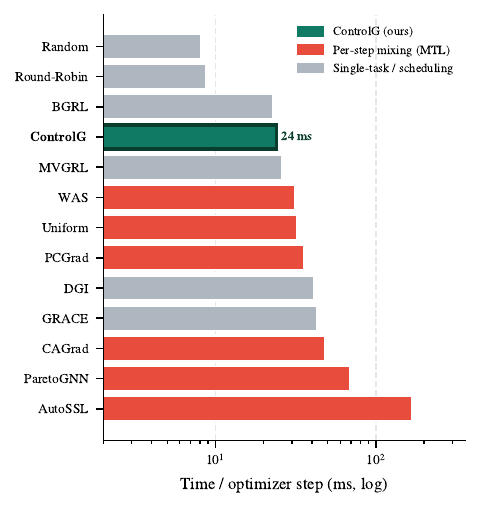}
\caption{\textbf{\modelname{} is practical.} Wall-clock time per optimizer step on Cora (log scale). \modelname{}'s sense--plan--control overhead is amortized over a sensing period, keeping it at $24$~ms---comparable to single-task baselines and $5$--$15\times$ faster than per-step mixing methods (AutoSSL, ParetoGNN, CAGrad) that resolve gradient conflicts at every update.}
\label{fig:timing_main}
\end{figure}

Table~\ref{tab:app_time_per_step} reports the average time per optimizer step (in milliseconds) across all methods and datasets. \modelname{} incurs modest overhead compared to simple scheduling baselines (Random, Round-Robin) due to state estimation and planning, but remains significantly faster than heavyweight multi-pretext methods such as AutoSSL and ParetoGNN. Compared to gradient-projection methods (PCGrad, CAGrad), \modelname{} achieves comparable or lower per-step cost since it avoids per-step gradient conflict resolution.

\begin{table*}[htbp]
\centering
\footnotesize
\setlength{\tabcolsep}{3pt}
\begin{tabular}{@{}l|cccccccc@{}}
\toprule
Method & Cora & CiteSeer & Chameleon & Squirrel & Actor & PubMed & Wiki-CS & Co-CS \\
\midrule
BGRL & 22.73ms & 18.44ms & 26.76ms & 21.88ms & 22.16ms & 29.54ms & 22.10ms & 69.34ms \\
DGI & 40.75ms & 22.19ms & 16.67ms & 92.84ms & 62.72ms & 38.28ms & 38.05ms & 108.40ms \\
GRACE & 42.62ms & 37.66ms & 29.17ms & 50.05ms & 73.08ms & 231.40ms & 111.58ms & 126.68ms \\
MVGRL & 25.99ms & 14.52ms & 25.68ms & -- & 31.67ms & 15.74ms & -- & 74.34ms \\
\midrule
p\_link & 9.78ms & 10.02ms & 11.62ms & 12.23ms & 11.15ms & 13.06ms & 11.94ms & 18.20ms \\
p\_recon & 7.26ms & 6.87ms & 6.14ms & 7.54ms & 6.74ms & 6.16ms & 6.43ms & 16.36ms \\
p\_minsg & 12.48ms & 12.88ms & 15.69ms & 14.56ms & 13.78ms & 12.47ms & 15.58ms & 20.22ms \\
p\_decor & 12.02ms & 13.51ms & 15.56ms & 15.82ms & 13.12ms & 15.08ms & 15.90ms & 20.46ms \\
p\_par & 3.43ms & 8.63ms & 6.38ms & 5.72ms & 4.79ms & 3.54ms & 3.39ms & 9.76ms \\
\midrule
AutoSSL & 167.68ms & 178.37ms & 124.93ms & 284.44ms & 357.78ms & 193.36ms & 413.90ms & 186.04ms \\
WAS & 31.27ms & 41.21ms & 28.32ms & 40.18ms & 47.45ms & 40.48ms & 38.83ms & 69.28ms \\
Uniform & 32.24ms & 36.30ms & 40.15ms & 38.74ms & 36.21ms & 46.76ms & 44.71ms & 95.43ms \\
\midrule
ParetoGNN & 68.69ms & 34.94ms & 83.57ms & 70.13ms & 763.93ms & 80.26ms & 45.03ms & 203.01ms \\
PCGrad & 35.57ms & 40.37ms & 47.25ms & 42.19ms & 44.12ms & 52.81ms & 49.55ms & 99.00ms \\
CAGrad & 47.85ms & 45.24ms & 44.31ms & 54.49ms & 66.42ms & 62.77ms & 59.83ms & 104.18ms \\
\midrule
Random & 8.08ms & 8.53ms & 11.58ms & 12.77ms & 11.09ms & 8.86ms & 10.19ms & 15.07ms \\
Round-Robin & 8.66ms & 10.91ms & 10.80ms & 10.64ms & 11.07ms & 8.82ms & 11.01ms & 21.93ms \\
\modelname{} & 24.03ms & 16.47ms & 26.08ms & 27.45ms & 26.35ms & 29.08ms & 28.56ms & 31.44ms \\
\bottomrule
\end{tabular}
\caption{\textbf{Time per optimizer step (ms).} Wall-clock time averaged over all training steps, measured on a single NVIDIA H200 GPU. \modelname{} adds modest overhead for state estimation and planning compared to naive scheduling (Random, Round-Robin), but is significantly faster than AutoSSL and ParetoGNN, and comparable to gradient-projection methods (PCGrad, CAGrad).}
\label{tab:app_time_per_step}
\end{table*}

\subsection{Diagnostics: scheduling, tracking, and state}
\label{app:diagnostics}
We visualize the closed-loop behavior of \modelname{} via block-level traces: task selection timelines, deficit trajectories, and state trajectories (spectral demand, interference, composite difficulty). These figures are generated directly from the JSONL logs produced during training and provide transparency into how \modelname{} adapts its scheduling over the course of pretraining. The main-text diagnostic overview (Fig.~\ref{fig:main_diagnostic}) is complemented below by per-task timelines, deficit traces, state signals, and conflict structure.

\begin{figure}[htbp]
\centering
\includegraphics[width=\linewidth]{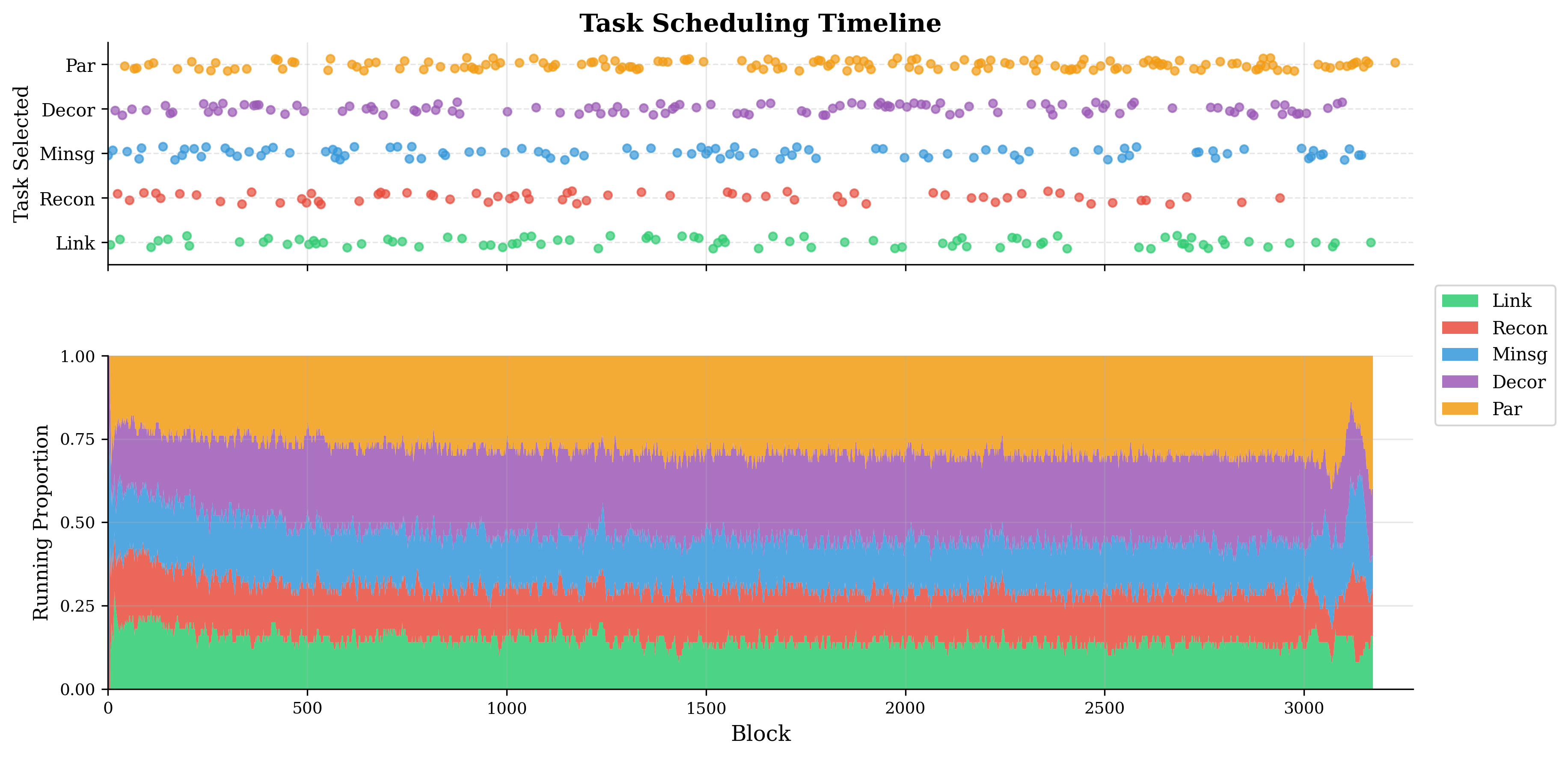}
\caption{\textbf{Task scheduling timeline.} \emph{Top:} Scatter plot showing which pretext task was selected at each training block (raster view). \emph{Bottom:} Stacked area chart showing the running proportion of recent blocks allocated to each task. The visualization reveals how \modelname{} dynamically shifts focus between tasks over training---initially exploring broadly, then reallocating toward objectives with high log-hypervolume sensitivity while reducing allocation to plateaued or low-utility tasks.}
\label{fig:app_task_timeline}
\end{figure}

\begin{figure}[htbp]
\centering
\includegraphics[width=\linewidth]{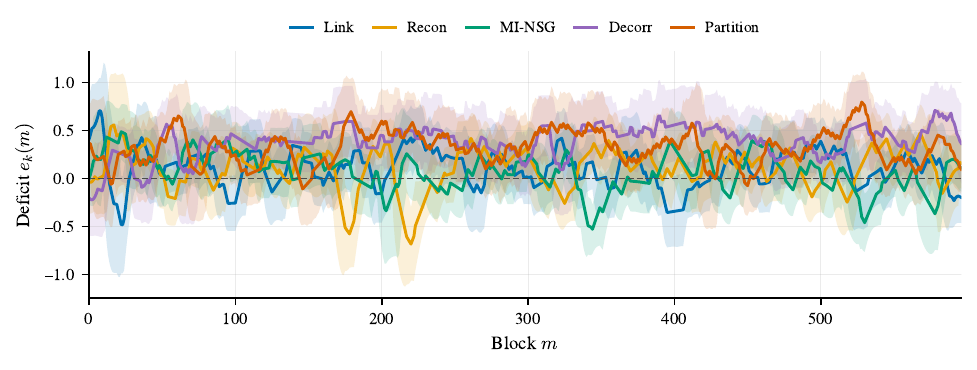}
\caption{\textbf{Allocation deficit trajectories} (representative Cora run; rolling mean $\pm$ one rolling standard deviation). Pre-decision deficit $e_k(m)=N_k^{\mathrm{ref}}(m)-N_k(m-1)$ for each pretext task over training blocks, showing how far each task is from its target allocation. Positive values indicate the task is under-scheduled relative to its target; negative values indicate over-scheduling. The PID controller keeps deficits bounded and oscillating around zero, tracking the planned allocation over the long run while allowing short-term deviations for exploration.}
\label{fig:app_deficits}
\end{figure}

\begin{figure}[htbp]
\centering
\includegraphics[width=\linewidth]{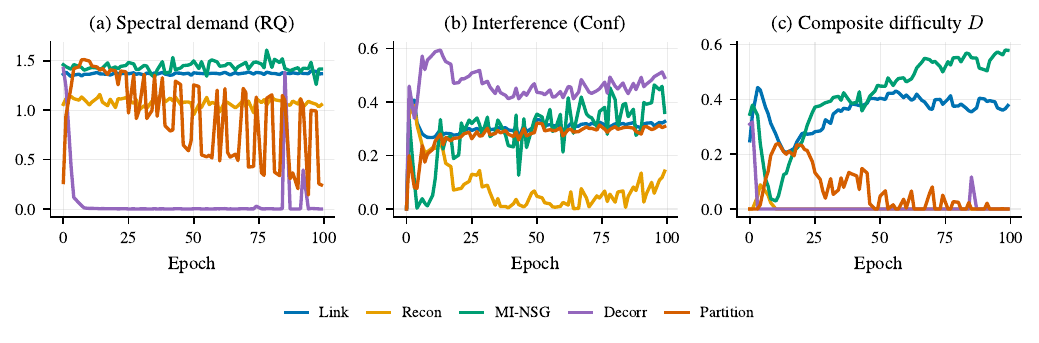}\\[2pt]
\includegraphics[width=\linewidth]{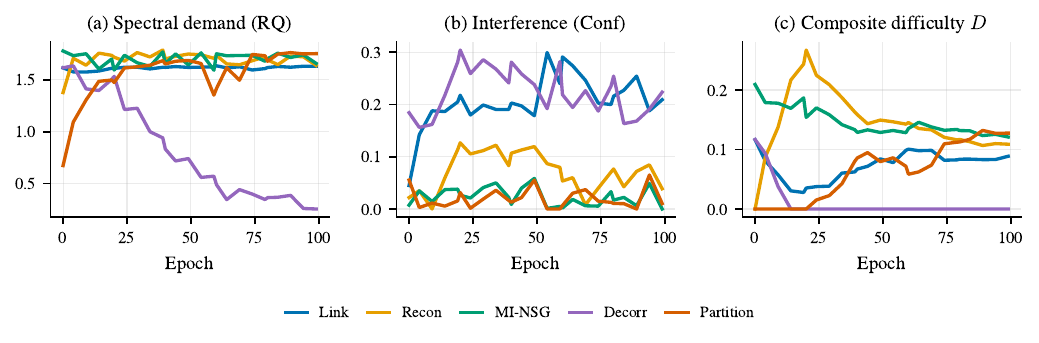}
\caption{\textbf{State estimation signals over training} (\emph{top:} Cora, \emph{bottom:} Chameleon). \emph{(a)~RQ:} Spectral demand---the scale-invariant Rayleigh quotient (normalized graph-frequency / Dirichlet roughness) of each task's representation-gradient field, capturing how spectrally ``hard'' a task currently is rather than its gradient magnitude. \emph{(b)~Conf:} Interference---pairwise gradient conflict between tasks; higher values suggest scheduling separately. \emph{(c)~$D$:} Composite difficulty combining both signals. These readings drive \modelname{}'s planning: difficulty \emph{tempers} priority via $a_k = w_k^{\mathrm{HV}}/(1+\gamma D_k)$, so allocation follows high log-hypervolume sensitivity while down-weighting objectives whose signals are currently hard to convert into progress.}
\label{fig:app_states_cora}
\end{figure}

\begin{figure}[htbp]
\centering
\begin{subfigure}[t]{0.5\linewidth}\centering
\includegraphics[width=\linewidth]{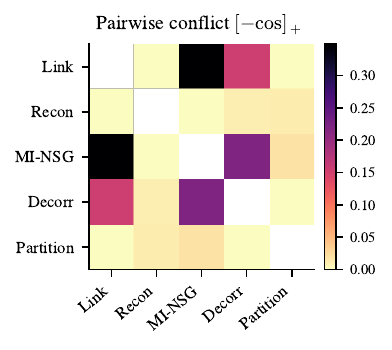}
\caption{Average pairwise conflict.}
\label{fig:app_conflict}
\end{subfigure}%
\begin{subfigure}[t]{0.49\linewidth}\centering
\includegraphics[width=\linewidth]{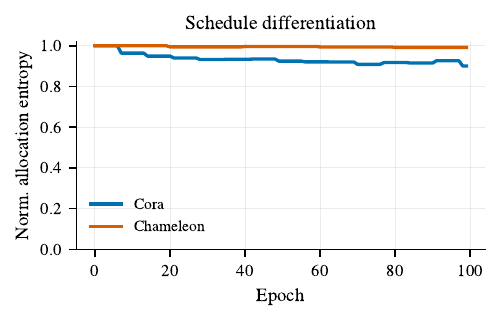}
\caption{Schedule differentiation.}
\label{fig:app_entropy}
\end{subfigure}
\caption{\textbf{Conflict structure and schedule differentiation.} \emph{(a)}~Mean pairwise cosine conflict $[-\cos(g_k,g_j)]_+$ from sensed Gram matrices: brighter cells mark objective pairs that consistently oppose and benefit most from temporal separation. \emph{(b)}~Normalized allocation entropy over training. On Cora the schedule differentiates (entropy decreases) as \modelname{} shifts from near-uniform exploration toward a demand-driven allocation, whereas on Chameleon the allocation stays closer to high-entropy (near-uniform) in this run---a graceful-degradation case where the planner finds little benefit in strong specialization.}
\label{fig:app_extra_diag}
\end{figure}

\FloatBarrier
\subsection{Additional Sensitivity and Robustness Analyses}
\label{app:reviewer_additions}
This appendix reports additional sensitivity, robustness, and scalability analyses complementing Sec.~\ref{sec:experiments}. Unless noted, the protocol matches App.~\ref{app:eval_protocols} (frozen encoder; mean $\pm$ 95\% CI over 5 seeds).

\paragraph{Backbone sensitivity (GAT).}
\modelname{} is backbone-agnostic at the scheduling level: the sensor consumes representation and parameter gradients produced by any differentiable encoder, the planner operates on normalized scalar losses, and the controller tracks discrete allocation counts. Changing the encoder changes the geometry of the sensed gradients, but not the scheduling logic itself. We re-run \modelname{} and key baselines (Uniform, CAGrad, Random) with a GAT encoder in place of GCN on Cora, CiteSeer, and Chameleon (Table~\ref{tab:gat_sensitivity}). Absolute performance with GAT is slightly lower on some datasets and comparable on others, but \modelname{} consistently ranks first---the relative advantage over baselines is preserved. These results suggest that swapping the encoder changes the gradient geometry the sensor observes rather than the scheduling logic itself; we validate this empirically for GCN and GAT and limit the claim to differentiable encoders.

\begin{table}[h]
\centering\footnotesize
\caption{\textbf{Backbone sensitivity (GAT).} Node classification accuracy (\%, mean over 5 seeds) with a GAT encoder replacing the GCN backbone. \modelname{} (\textbf{bold}) ranks first on all three datasets, preserving its relative advantage under a different message-passing encoder---supporting backbone-agnostic scheduling behavior for the tested GCN/GAT encoders.}
\label{tab:gat_sensitivity}
\begin{tabular}{lccc}
\toprule
\hdr Method & Cora & CiteSeer & Chameleon \\
\midrule
Uniform & 76.48 & 57.92 & 65.14 \\
CAGrad & 79.14 & 61.84 & 65.86 \\
Random & 78.32 & 57.26 & 62.18 \\
\midrule
\rowcolor{green!12}\modelname{} & \textbf{80.56} & \textbf{65.72} & \textbf{68.36} \\
\bottomrule
\end{tabular}
\end{table}

\paragraph{Task-pool size sensitivity ($K$).}
\modelname{} does not require every task in the pool to be equally useful: the planner can reduce allocation to low-utility tasks through the difficulty-tempered priority in Eq.~\eqref{eq:alloc}, while the controller still preserves a minimum share. We vary the pretext pool size on Cora, CiteSeer, and Chameleon: $K{=}3$ (\texttt{p\_link}, \texttt{p\_recon}, \texttt{p\_decor}), $K{=}4$ ($+\,$\texttt{p\_minsg}), and $K{=}5$ (full pool) in Table~\ref{tab:k_sensitivity}. Adding tasks provides modest gains (Cora: $+1.8$ from $K{=}3$ to $K{=}5$), but \modelname{} already performs well at $K{=}3$---surpassing the best \emph{individual} task in the pool (\texttt{p\_recon}: $78.76$ on Cora). Notably, \texttt{p\_decor} scores only $43.86$ alone on Cora, yet \modelname{} at $K{=}3$ achieves $80.14$ by allocating budget away from it, consistent with Eq.~\eqref{eq:alloc}. The planner, sensor, and controller all scale naturally with $K$.

\begin{table}[h]
\centering\footnotesize
\caption{\textbf{Task-pool size sensitivity.} \modelname{} node classification accuracy (\%, mean over 5 seeds) as the pretext pool grows. $K{=}3$:~\{\texttt{p\_link},\texttt{p\_recon},\texttt{p\_decor}\}; $K{=}4$:~$+\,$\texttt{p\_minsg}; $K{=}5$:~full pool (\textbf{bold}). Performance improves modestly with pool size, yet \modelname{} is already strong at $K{=}3$, showing it does not require a large pool to perform well.}
\label{tab:k_sensitivity}
\begin{tabular}{lccc}
\toprule
\hdr Pool & Cora & CiteSeer & Chameleon \\
\midrule
$K=3$ & 80.14 & 64.82 & 68.18 \\
$K=4$ & 81.06 & 65.68 & 69.12 \\
\rowcolor{green!12}$K=5$ & \textbf{81.92} & \textbf{66.48} & \textbf{69.54} \\
\bottomrule
\end{tabular}
\end{table}

\paragraph{PID-gain robustness.}
Across a sweep of $10{,}283$ successful \modelname{} configurations (varying $K_p,K_i,K_d,\epsilon$, block size, sense period, $\gamma$, $\beta$, $f_{\min}$), downstream accuracy is tightly clustered: good performance is achievable across the full grid of PID gains, with no single gain value required. This robustness is expected because the deficit-tracking architecture (Sec.~\ref{app:control-details})---not fine gain selection---dominates which task is served, and the $\epsilon$-exploration floor guarantees visitation regardless of gains.

\paragraph{Objective dynamics and interpretability.}
Figs.~\ref{fig:app_task_timeline}--\ref{fig:app_extra_diag} (Cora and a heterophilic graph) show per-task loss, allocation, RQ, Conf, and difficulty over training. They illustrate the nonstationarity the controller is designed for: objectives plateau at different rates (e.g.\ \texttt{p\_par}/\texttt{p\_decor} saturate early while \texttt{p\_minsg} keeps improving), and the realized schedule departs substantially from uniform---driven by log-hypervolume sensitivity tempered by difficulty---while the exploration floor keeps every task covered.

\paragraph{Overhead and scalability.}
The state-estimation overhead is amortized over a sensing period of $u$ blocks; per-step training cost (App.~\ref{app:compute}) remains $5$--$15\times$ lower than per-step weighting baselines such as AutoSSL and ParetoGNN. On the OGB-scale citation graph ogbn-arxiv ($169$K nodes, ${\sim}1.2$M edges), sensing uses probe subgraphs to bound cost; we therefore describe this as the ``OGB-scale citation'' regime rather than a million-node graph.

\paragraph{Directed and heterogeneous graphs.}
The spectral-demand signal uses a symmetric normalized Laplacian, so directed graphs are handled by symmetrizing the adjacency, $\mathbf{A}_{\mathrm{sym}}=(\mathbf{A}+\mathbf{A}^\top)/2$, and measuring variation on the symmetrized edge support; interference sensing is topology-independent. For heterogeneous graphs, relation-specific Laplacians $\mathbf{L}_r$ with $\mathrm{RQ}_k=\sum_r w_r\,\mathrm{RQ}_k^{(r)}$ (or metapath-based sensing) are a natural extension. We mark these as future work and do not report heterogeneous-graph experiments.

\paragraph{When \modelname{} may not help.}
\modelname{} is most valuable under inter-task heterogeneity, nonstationarity, and gradient conflict. When all objectives have similar spectral demand, little pairwise conflict, and comparable convergence rates, the planner tends toward a near-uniform allocation and the sensing overhead adds little value; on Wiki-CS, for example, \modelname{} essentially ties Uniform---a graceful-degradation case rather than a large overhead-driven gain. The method also depends on the chosen pretext pool: low-utility tasks are down-weighted but not repaired.

\begin{algorithm}[t]
    \caption{\textbf{\modelname{} (detailed).} Full sense--plan--control procedure on the graph: state estimation, monotone reference-point updates, log-hypervolume allocation with floor, and PID deficit tracking with $\epsilon$-exploration.}
    \label{alg:controlg-appendix}
    \begin{algorithmic}[1]
    \REQUIRE Graph $G=(V,E)$ with normalized Laplacian $\tilde{\mathbf{L}}$; objectives $\{L_k\}_{k=1}^K$; encoder $\theta$; optimizer $\mathrm{Opt}$
    \REQUIRE Epochs $T$, batches/epoch $B_{\mathrm{ep}}$, block size $B_{\text{block}}$, sensing period $u$, temperature $\tau$, exploration $\epsilon$
    \REQUIRE Difficulty params $\alpha,\beta,\gamma,\rho,\rho_L,[D_{\min},D_{\max}]$; PID gains $K_P,K_I,K_D$; clamp $I_{\max}$; margin $\delta$; stabilizer $\varepsilon$
    
    \STATE \EXc\ Initialize $\theta$; set $D_k \leftarrow D_{\min}$, $\tilde L_k \leftarrow 1$, $r_k \leftarrow +\infty$ for all $k$ \COMMENT{conservative init}
    \STATE \SEc\ \textbf{Warm-up:} run few steps per task to get $L_k^{\mathrm{scale}}$; init $\tilde L_k \leftarrow L_k/(L_k^{\mathrm{scale}}+\varepsilon)$, $r_k \leftarrow \tilde L_k+\delta$ \COMMENT{scale normalization}
    
    \FOR{epoch $t=1,\dots,T$}
        \STATE \EXc\ $M \leftarrow \lceil B_{\mathrm{ep}}/B_{\text{block}}\rceil$; reset $N_k,N_k^{\mathrm{ref}},I_k,e_k^{\mathrm{prev}} \leftarrow 0$ for all $k$ \COMMENT{epoch init}
        \FOR{block $m=1,\dots,M$}
            \IF{$m=1$ or $(m-1)\bmod u=0$}
                \FOR{$k=1,\dots,K$}
                    \STATE \SEc\ $\ell_k \leftarrow L_k(\theta;G)$; $H_k \leftarrow \nabla_Z L_k$; $g_k \leftarrow \nabla_\theta L_k$ \COMMENT{full-graph losses and gradients}
                    \STATE \SEc\ $\mathcal{E}_k \leftarrow \mathrm{tr}(H_k^\top\tilde{\mathbf{L}}H_k)$; $\mathrm{RQ}_k \leftarrow \mathcal{E}_k/(\|H_k\|_F^2+\varepsilon)$ \COMMENT{Dirichlet energy, spectral demand}
                \ENDFOR
                \STATE \SEc\ $\hat{g}_k \leftarrow g_k/\|g_k\|_2$; $Q_{ij} \leftarrow \langle \hat{g}_i,\hat{g}_j\rangle$; solve $\lambda^\star \leftarrow \arg\min_{\lambda\in\Delta^K}\tfrac{1}{2}\lambda^\top Q\lambda$ \COMMENT{normalize, MGDA}
                \FOR{$k=1,\dots,K$}
                    \STATE \SEc\ $c_{k,j} \leftarrow [-\cos(g_k,g_j)]_+$; $\mathrm{Conf}_k \leftarrow \sum_{j\neq k}\lambda_j^\star c_{k,j}$ \COMMENT{pairwise conflict, interference score}
                \ENDFOR
                \STATE \SEc\ $\overline{\mathrm{RQ}},\overline{\mathrm{Conf}} \leftarrow \mathrm{RobustNorm}(\mathrm{RQ}),\mathrm{RobustNorm}(\mathrm{Conf})$ \COMMENT{normalize across tasks}
                \FOR{$k=1,\dots,K$}
                    \STATE \SEc\ $D_k \leftarrow \clip\big((1-\rho)D_k+\rho(\alpha\overline{\mathrm{RQ}}_k+\beta\overline{\mathrm{Conf}}_k),[D_{\min},D_{\max}]\big)$ \COMMENT{difficulty EMA}
                    \STATE \SEc\ $\tilde L_k \leftarrow (1-\rho_L)\tilde L_k+\rho_L\ell_k/(L_k^{\mathrm{scale}}+\varepsilon)$ \COMMENT{normalized loss EMA}
                    \STATE \PLc\ $r_k \leftarrow \max(r_k,\tilde L_k+\delta)$; $w_k^{\mathrm{HV}} \leftarrow 1/(r_k-\tilde L_k+\varepsilon)$; $a_k \leftarrow w_k^{\mathrm{HV}}/(1+\gamma D_k)$ \COMMENT{HV sens., priority}
                \ENDFOR
                \STATE \PLc\ $f_k \leftarrow a_k/\sum_j a_j$ for all $k$ \COMMENT{allocation plan}
            \ENDIF
            \FOR{$k=1,\dots,K$}
                \STATE \EXc\ $N_k^{\mathrm{ref}} \leftarrow N_k^{\mathrm{ref}} + f_k$; $e_k \leftarrow N_k^{\mathrm{ref}} - N_k$; $I_k \leftarrow \clip(I_k+e_k,[-I_{\max},I_{\max}])$ \COMMENT{deficit, integral}
                \STATE \EXc\ $\Delta e_k \leftarrow e_k - e_k^{\mathrm{prev}}$; $\nu_k \leftarrow K_P e_k + K_I I_k + K_D\Delta e_k$ \COMMENT{PID logits}
            \ENDFOR
            \STATE \EXc\ $p \leftarrow (1-\epsilon)\softmax(\nu/\tau)+\frac{\epsilon}{K}\mathbf{1}$; sample $k_{t,m} \sim \mathrm{Cat}(p)$ \COMMENT{stochastic policy}
            \FOR{$j=1,\dots,B_{\text{block}}$}
                \STATE \EXc\ $\theta \leftarrow \mathrm{Opt}(\theta,\nabla_\theta L_{k_{t,m}}(\theta;\xi_{t,m,j}))$ \COMMENT{single-task block update}
            \ENDFOR
            \STATE \EXc\ $N_{k_{t,m}} \leftarrow N_{k_{t,m}} + 1$; $e_k^{\mathrm{prev}} \leftarrow e_k$ for all $k$ \COMMENT{update counts and memory}
        \ENDFOR
    \ENDFOR
    \end{algorithmic}
    \end{algorithm}

\end{document}